\documentclass[10pt]{article}
\usepackage{floatrow}


\usepackage{amsfonts,amsmath,amsthm}
\usepackage{algorithm,algorithmic}
\usepackage{bm}
\usepackage[dvips]{graphicx}
\usepackage[margin=1in]{geometry}
\usepackage{setspace}

\usepackage{url}
\usepackage{subfigure}

\usepackage{mcr}
\usepackage{pkg}
\usepackage{algE}
\usepackage{thmE}

\usepackage{cite}
\usepackage{wrapfig}
\usepackage{multirow}
\usepackage{multicol}
\floatsetup[table]{capposition=top}

\usepackage{ascmac}
\usepackage{comment}

\newcommand{\vxi}{\bm{x}_{i}}

\newcommand{\vhat}[1]{{\hat{\bm{#1}}}}

\newcommand{\pkA}{_{pk_A}}
\newcommand{\pkB}{_{pk_B}}

\newcommand{\argmin}{\mathop{\rm argmin}\limits}
\newcommand{\defequal}{:=}

\title{Secure Approximation Guarantee for\\Cryptographically Private Empirical Risk Minimization}
\author{
Toshiyuki Takada \\
Nagoya Institute of Technology \\
Nagoya, Aichi, Japan \\
{\tt takada.t.mllab.nit@gmail.com} \\
\and
Hiroyuki Hanada \\
Nagoya Institute of Technology \\
Nagoya, Aichi, Japan \\
{\tt hanada.hiroyuki@nitech.ac.jp} \\
\and
Yoshiji Yamada \\
Mie University \\
Tsu, Mie, Japan \\
{\tt yamada@gene.mie-u.ac.jp} \\
\and
Jun Sakuma \\
University of Tsukuba \\
Tsukuba, Ibaraki, Japan \\
{\tt jun@cs.tsukuba.ac.jp} \\
\and
Ichiro Takeuchi\thanks{Corresponding author} \\
Nagoya Institute of Technology \\
Nagoya, Aichi, Japan \\
{\tt takeuchi.ichiro@nitech.ac.jp} \\
}
\date{\today}

\begin{document}
\maketitle

\begin{abstract} 
Privacy concern has been increasingly important
in many machine learning (ML) problems.
We study empirical risk minimization (ERM) problems
under secure multi-party computation (MPC) frameworks. 
%
Main technical tools 
for MPC
have been developed based on cryptography. 
%
One of limitations
in current cryptographically private ML 
is that
it is computationally intractable to evaluate non-linear functions
such as logarithmic functions or exponential functions. 
Therefore, 
for a class of ERM problems
such as logistic regression 
in which non-linear function evaluations are required, 
one can only obtain approximate solutions. 
In this paper,
we introduce a novel cryptographically private tool called 
\emph{secure approximation guarantee (SAG)} method. 
The key property of SAG method is that, 
given an arbitrary approximate solution, 
it can provide a non-probabilistic assumption-free bound on the approximation quality 
under cryptographically secure computation framework. 
We demonstrate the benefit of the SAG method
by applying it to several problems 
including a practical privacy-preserving data analysis task on genomic and clinical information.

\end{abstract}

\section{Introduction} \label{sec:introduction}

Privacy preservation
has been increasingly important
in many machine learning (ML) tasks. 
In this paper,
we consider 
empirical risk minimizations (ERMs)
when the data is distributed among multiple parties,
and these parties are unwilling to share their data to other parties. 
For example,
if two parties have different sets of features for the same group of people, 
they might want to combine
these two datasets
for more accurate predictive model building.
On the other hand, 
due to privacy concerns or legal regulations,
these two parties might want to keep their own data private. 
The problem of learning from multiple confidential databases
have been studied under the name of
\emph{secure multi-party computation (secure MPC)}. 
This paper is motivated 
by our recent secure MPC project 
on genomic and clinical data. 
Our task is to develop a model for predicting the risk of a disease 
based on genomic and clinical information of potential patients.
The difficulty of this problem is that 
genomic information were collected in a research institute,
while
clinical information were collected in a hospital,
and both institutes do not want to share their data to others.
However,
since the risk of the disease is dependent both on genomic and clinical features, 
it is quite valuable to use both types of information
for the risk modeling. 

Various tools for secure MPC 
have been taken from cryptography, 
and 
privacy-preserving ML approaches based on cryptographic techniques have been called
\emph{cryptographically private ML}. 
A key building block of cryptographically private ML is \emph{homomorphic encryption}
by which
sum or product of two encrypted values can be evaluated
without decryption.
Many cryptographically private ML algorithms have been developed, 
e.g., 
for 
linear regression~\cite{hall2011secure,nikolaenko2013privacy} 
and
SVM~\cite{laur2006cryptographically,yu2006privacy} 
by using homomorphic encryption property. 
One of limitations in current cryptographically private ML is that 
it is computationally intractable to evaluate non-linear functions 
such as logarithmic functions or exponential functions
in homomorphic encryption framework.
Since
non-linear function evaluations
are required
in many fundamental statistical analyses
such as logistic regression, 
it is crucially important to develop a method
that can alleviate this computational bottleneck. 
One way to circumvent this issue is to 
\emph{approximate}
non-linear functions.
For example,
in Nardi {\it et al.}'s work \cite{nardi2012achieving} for secure logistic regression,
the authors proposed to approximate
a logistic function
by sum of step functions,
which can be computed under secure computation framework. 

Due to the very nature of MPC,
even after the final solution is obtained, 
the users are not allowed to access to private data. 
When the resulting solution is an approximation,
it is important for the users to be able to check its approximation quality. 
Unfortunately,
most existing cryptographically private ML method
does not have such an approximation guarantee mechanism.
Although a probabilistic approximation guarantee
was provided in the aforementioned secure logistic regression study \cite{nardi2012achieving}, 
the approximation bound derived in that work 
depends on the unknown true solution,
meaning that the users cannot make sure
how much they can trust the approximate solution. 

The goal of this paper is to develop a practical method
for secure computations of ERM problems. 
To this end, 
we introduce a novel secure computation technique 
called
\emph{secure approximation guarantee (SAG)} method. 
Given an arbitrary approximate solution of an ERM problem, 
the SAG method provides non-probabilistic assumption-free bounds 
on how far the approximate solution is away from the true solution.
A key difference of our approach with existing ones is that 
our approximation bound is
not for theoretical justification of an approximation algorithm itself, 
but for practical decision making based on a given approximate solution. 
Our approximation bound can be obtained without any information about the true solution, 
and it can be computed
with a reasonable computational cost 
under secure computation framework, 
i.e.,
without the risk of disclosing private information.

The proposed SAG method can provide
non-probabilistic bounds on a quantity depending on the true solution of the ERM problem 
under cryptographically secure computation framework,
which is valuable for making decisions when only an approximate solution is available. 
In order to develop the SAG method, 
we introduce two novel technical contributions in this paper. 
We first introduce a novel algorithmic framework for 
computing approximation guarantee
that can be applied to a class of ERM problems
whose loss function is non-linear and its secure evaluation is difficult. 
In this framework,
we use a pair of surrogate loss functions 
that bounds the non-linear loss function from below and above. 
Our second contribution is to implement these surrogate loss functions 
by piecewise-linear functions,
and show that they can be cryptographically securely computed. 
Furthermore, 
we empirically demonstrate that 
the bounds obtained by the SAG method is much tighter than the bounds in Nardi {\it et al.}'s method
\cite{nardi2012achieving}
despite the former is non-probabilistic and assumption-free.
\figurename~\ref{fig:example} 
is an illustration of 
the SAG method 
in a simple logistic regression example. 

In machine learning literature,
significant amount of works on differential privacy
\cite{dwork2006differential} 
have been recently studied. 
The objective of differential privacy is to 
disclose an information from confidential database 
without taking a risk of revealing private information in the database,
and random perturbation is main technical tool for protecting differential privacy. 
We note that
the privacy concern studied in this paper 
is rather different from 
those in differential privacy.
Although it would be interesting to study
how the latter type of privacy concerns can be handled with the approach we discussed here,
we would focus in this paper on privacy regarding cryptographically private ML. 

\begin{figure*}[tp]
\begin{center}
\begin{tabular}{cc}
\begin{minipage}[t]{0.45\hsize}
\begin{center}
\includegraphics[width = .8\textwidth]{./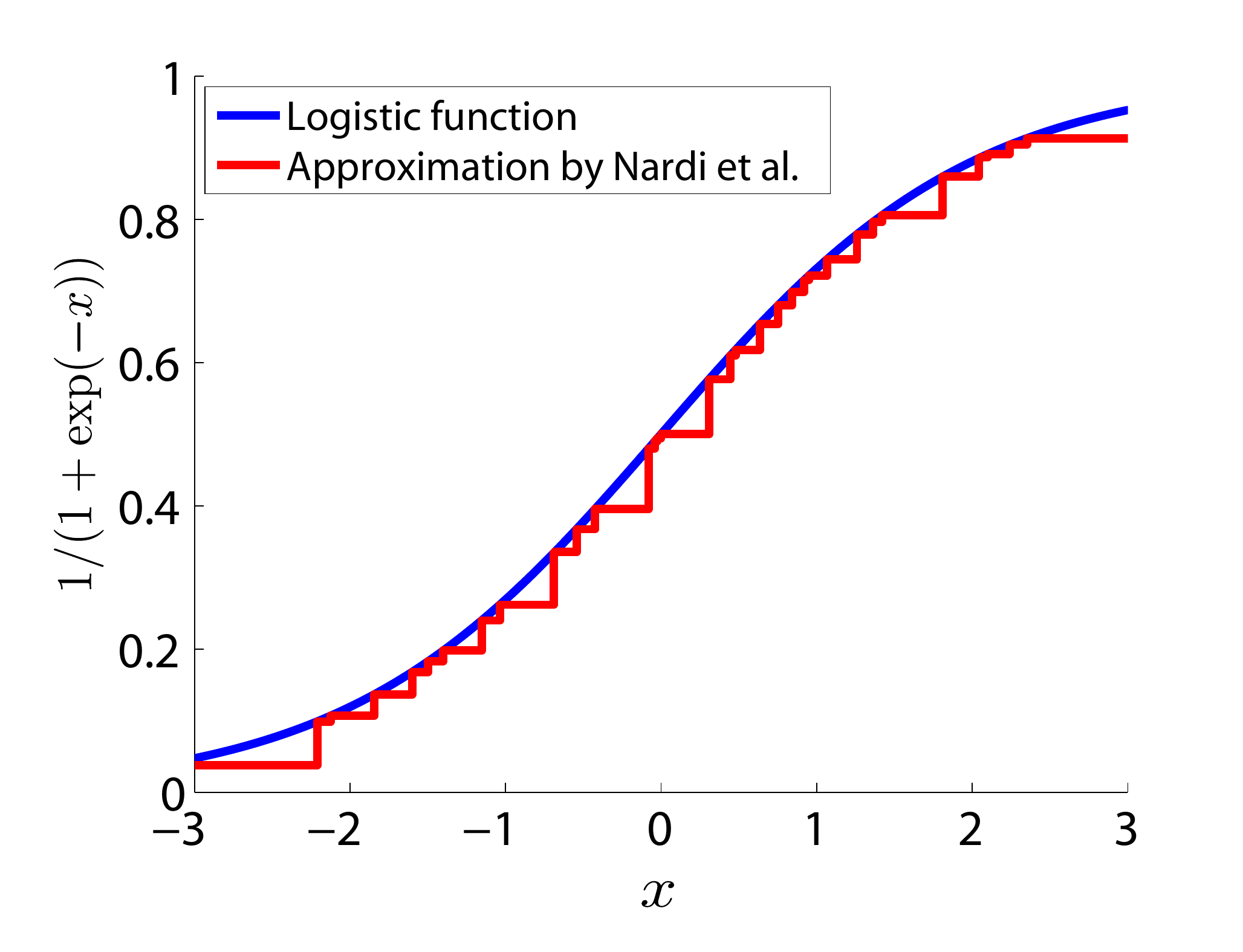}
\end{center}
(A) A non-linear function $1/(1 + \exp(-x))$ and its approximation with \cite{nardi2012achieving}
\end{minipage}
&
\begin{minipage}[t]{0.45\hsize}
\begin{center}
\includegraphics[width = 0.7\textwidth]{./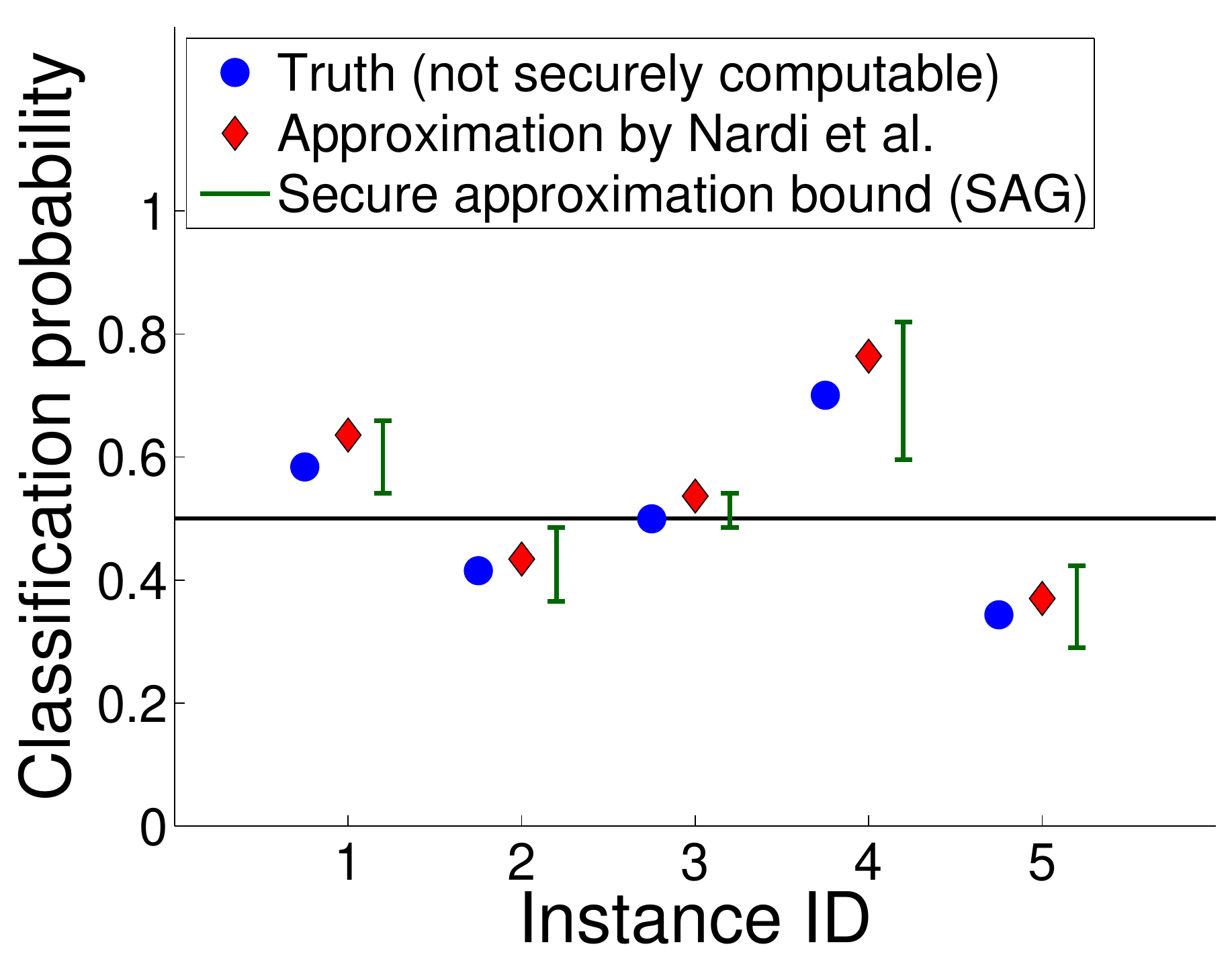}
\end{center}
(B) Class probabilities by true and approximate solutions and the bounds obtained by the SAG method.
\end{minipage}
\end{tabular}
\end{center}

\begin{minipage}{\hsize}
The left plot (A) shows 
the logistic function (blue)
and
its approximation (red) proposed in 
\cite{nardi2012achieving}. 
The right plot (B) shows 
the true (blue) and approximate (red) class probabilities 
of five training instances 
(the instance IDs $1, \ldots, 5$ are shown in the horizontal axis), 
where
the former is obtained with true logistic function, 
while 
the latter is obtained with the approximate logistic function. 

The green intervals in plot (B) are the approximation guarantee intervals 
provided by the SAG method.
The key property of the SAG method is that 
these intervals are guaranteed to contain the true class probabilities. 
Thus they can be used
for certainly classifying some of these five instances
to either positive or negative class.

Noting that 
the lower bounds of the class probabilities are greater than 0.5
in the instances 1 and 4, 
they would be certainly classified to positive class.
Similarly,
noting that 
the upper bounds of the class probabilities are smaller than 0.5
in the instances 3 and 5, 
they would be certainly classified to negative class.
\end{minipage}

\caption{
An illustration of the proposed SAG method in a simple logistic regression example
}
\label{fig:example}
\end{figure*}

\paragraph{Notations}
We use the following notations in the rest of the paper.
We denote the sets of real numbers and integers as
$\RR$
and
$\ZZ$,
respectively. 
For a natural number $N$,
we define 
$[N] := \{1, 2, \ldots, N\}$
and 
$\ZZ_{N} := \{0, 1, \ldots, N-1\}$. 
The Euclidean norm is written as
$\|\cdot\|$.
Indicator function is written as
$I_{\chi}$
i.e.,
$I_{\chi} = 1$ if $\chi$ is true, and $I_{\chi} = 0$ otherwise. 
For a protocol $\Pi$ between two parties,
we use the notation
$\Pi(\cI_A, \cI_B)\to(\cO_A, \cO_B)$,
where
$\cI_A$
and
$\cI_B$
are inputs from the parties A and B, respectively,
and
$\cO_A$ and $\cO_B$ are outputs
given to A and B,
respectively.

\section{Preliminaries}
\label{sec:Preliminary}
\subsection{Problem statement} \label{sect:problem-statement}

\paragraph{Empirical risk minimization (ERM)}

Let $\{(\vxi,y_i)\in \cX\times\cY\}_{i\in[n]}$ be the training set,
where
the input domain
$\cX \subset \RR^d$
is a compact region in $\RR^d$,
and the output domain $\cY$ is $\{-1, +1\}$ in classification problems and $\RR$ in regression problems. 
In this paper,
we consider the following class of empirical risk minimization problems: 
\begin{align}
\label{eq:optimization-problem}
\argmin_{\bm{w}}
 ~
 \frac{\lambda}{2}\|\bm{w}\|^2 +
\frac{1}{n} \sum_{i\in[n]} \ell(y_i,\vxi^\top\bm{w}),
\end{align}
where
$\ell$
is a loss function 
subdifferentiable 
and
convex 
with respect to 
$\bm{w}$, 
and
$\lambda>0$
is the regularization parameter.
$L_2$ regularization in \eq{eq:optimization-problem} 
ensures that the solution $\bm w$ is within a compact region $\cW \subset \RR^d$. 

We consider the cases where
$\ell$ is hard to compute in secure computation framework, 
i.e., 
$\ell$ includes non-linear functions such as $\log$ and $\exp$.
Popular examples includes 
logistic regression
\begin{align}
 \label{eq:Logisic-loss}
\ell(y, \bm x^\top \bm w) := \log(1+\exp(-\bm{x}^\top\bm{w}))-y\bm{x}^\top\bm{w},
\end{align}
Poisson regression
\begin{align}
 \label{eq:Poisson-loss}
\ell(y, \bm x^\top \bm w) := \exp(\bm{x}^\top\bm{w})-y\bm{x}^\top\bm{w}, 
\end{align}
and
exponential regression
\begin{align}
 \label{eq:Exponential-loss}
\ell(y, \bm x^\top \bm w) := (y\exp(-\bm{x}^\top\bm{w}))-\bm{x}^\top\bm{w}.
\end{align}

\paragraph{Secure two-party computation}
We consider secure two-party computation scenario
where 
the training set
$\{(\vxi,y_i)\}_{i\in[n]}$
is {\em vertically-partitioned}
between two parties A and B
\cite{vaidya2003privacy},
i.e.,
A and B own 
different sets of features for common set of $n$ instances.
More precisely,
let party A own the first
$d_A$ features
and
party B own 
the last $d_B$ features, 
i.e., $d_A + d_B = d$. 
We consider a scenario where 
the labels 
$\{y_i\}_{i \in [n]}$
are also owned by either party, 
and
we let party B own them here. 
We assume that
both parties can identify the instance index $i \in [n]$, 
i.e., 
it is possible for both parties to make communications with respect to a specified instance.
We denote the input data matrix owned by parties A and B as 
$X_A$
and
$X_B$,
respectively. 
Furthermore,
we denote the $n$-dimensional vector of the labels as
$\bm y := [y_1, \ldots, y_n]^\top$. 

\paragraph{Semi-honest model}
In this paper, 
we develop the SAG method 
so that it is secure 
(meaning that private data is not revealed to the other party) 
under the {\em semi-honest} model \cite{goldreich2001foundations}. 
In this security model, 
any parties are allowed to guess other party's data
as long as
they follow the specified protocol.
In other words,
we assume that all the parties do not modify the specified protocol.
The semi-honest model is standard security model in cryptographically private ML. 

\subsection{ Cryptographically Secure Computation}
\label{sect:secure-multi-party}

\paragraph{Paillier cryptosystem}
For secure computations,
we use {\em Paillier cryptosystem} \cite{paillier1999public}
as an additive {\em homomorphic encryption} tool,
i.e., 
we can obtain
$E(a+b)$
from
$E(a)$
and
$E(b)$
without decryption,
where
$a$
and
$b$
are plaintexts and
$E(\cdot)$
is the encryption function.
Paillier cryptosystem has
the {\em semantic security} \cite{goldreich2004foundations} (the {\em IND-CPA security}), which roughly means
that it is difficult to judge whether $a = b$ or $a \neq b$
by knowing 
$E(a)$
and
$E(b)$.

Paillier cryptosystem
is a public key cryptosystem
with additive homomorphism
over $\ZZ_{N}$ (i.e., ${\rm mod} N$).
In public key cryptosystem, 
the private key is two large prime numbers
$p$
and
$q$,
and the public key is
$(N, g)\in\ZZ\times\ZZ_{N^2}$,
where
$N = pq$
and
$g$
is an integer co-prime with
$N^2$.
Given a plaintext
$m\in\ZZ_N$,
a ciphertext of
$E(m)$
is obtained with a random integer
$R\in\ZZ_N$
as
follows:
\begin{align*}
E(m) = g^m R^N\mod N^2.
\end{align*}
Ciphertext 
$E(m)$
is decrypted with the private key 
whatever $R$ is chosen.
With the encryption,
the following additive homomorphism
holds for any plaintexts
$a,b \in\ZZ_N$:
\begin{align*}
&E(a)\cdot E(b) = E(a+b),\\
&E(a)^b = E(ab).
\end{align*}
Hereafter, 
we denote by 
$E\pkA(\cdot)$
and
$E\pkB(\cdot)$
the encryption functions with the public keys 
issued
by party A and B, respectively. 

Note that
we need computations of real numbers rather than integers
in data analysis tasks.
First, negative numbers can be treated with the similar technique to the two's complement. 
In order to handle real numbers, 
we multiply a magnification constant
$M$
for each input real number for expressing it with an integer.
Here,
there is a tradeoff between the accuracy and range of acceptable real number,
i.e.,
for large
$M$,
accuracy would be high,
but only possible to handle a limited range of real numbers. 

\subsection{Related works} \label{sect:existing}

The most general framework 
for cryptographically private ML is the 
Yao's garbled circuit 
\cite{yao86generate},
where any desired secure computation is expressed as an electronic circuit
with encrypted components. 
In principle, 
Yao's garbled circuit
can evaluate any function securely,
but its computational costs are usually extremely large. 
Unfortunately, 
it is impractical to use Yao's garbled circuit
for
secure computations of ERM problems. 

Nardi {\it et al.} \cite{nardi2012achieving} studied
cryptographically private approach for logistic regression. 
As briefly mentioned in \S\ref{sec:introduction},
in order to circumvent the difficulty of secure non-linear function evaluations,
the authors proposed to approximate logistic function 
by empirical cumulative density function (CDF) of logistic distributions
(see \figurename~\ref{fig:example}(A) as an example).
Denoting
the true solution
and
the approximate solution
as 
$\bm w^*$
and 
$\hat{\bm w}$,
respectively, 
the authors showed that
\begin{align}
\|\bm w^* - \hat{\bm w}\|\leq \frac{nc_1\max\|\bm{x}_i\|}{L^\gamma\lambda_{\rm min}}
&\text{\quad in probability~} 1-2\exp(-cL^{1-2\gamma}), \label{eq:nardi-bound}
\end{align}
where 
$L$ is the sample size for the empirical CDF,
$\lambda_{\rm min}$ is
the smallest eigenvalue of Fisher information matrix depending on $\bm w^*$,
and
$c > 0$, $c_1 > 0$, $\gamma \in (0, 1/2)$
are constants. 
This approximation error bound 
cannot be used 
for knowing the approximation quality of the given approximate solution
$\hat{\bm w}$:
the bound depends on the unknown true solution $\bm w^*$ because $\lambda_{\rm min}$ depends on it.
Furthermore,
in experiment section, 
we demonstrate that
the SAG method can provide much tighter non-probabilistic bounds 
than the above probabilistic bound in Nardi {\it et al.}'s method \cite{nardi2012achieving}. 

\section{Secure Approximation Guarantee(SAG)} \label{sec:secure-approximation-guarantee}
The basic idea behind the SAG method is to introduce two surrogate loss functions
$\phi$ and $\psi$
that bound the target non-linear loss function
$\ell$
from below and above.
In what follows,
we show that, 
given an arbitrary approximate solution $\hat{\bm w}$, 
if we can securely evaluate 
$\phi(\hat{\bm w})$, $\psi(\hat{\bm w})$ and a subgradient $\partial \phi / \partial \bm w \mid_{\bm w = \hat{\bm w}}$,
we can securely compute bounds on the true solution 
$\bm w^*$
which itself cannot be computed
under secure computation framework. 

First, 
the following theorem 
states that
we can obtain a ball in the solution space
in which 
the true solution $\bm w^*$ certainly exists.
\begin{theo} \label{th:bounds}
 Let $\phi: \RR \to \RR$ and $\psi: \RR \to \RR$ be functions that satisfy 
 $\phi(y, \bm x^\top \bm w) \le \ell(y, \bm x^\top \bm w) \le \phi(y, \bm x^\top \bm w)$
 $\forall y \in \cY, \bm x \in {\mathcal X}, \bm w \in {\mathcal W}$, 
 and assume that they are convex and subdifferentiable with respect to $\bm w$. 
 Then,
 for any $\hat{\bm w} \in {\mathcal W}$, 
 \begin{align*}
  \| \bm w^* - \bm m(\hat{\bm w}) \| \le r(\hat{\bm w}), 
 \end{align*}
 i.e.,
 the true solution $\bm w^*$ is located within a ball in $\cW$ 
 with the center 
\begin{align*}
\bm m(\hat{\bm w}) := \frac{1}{2}\left(\vhat{w}-\frac{1}{\lambda}\nabla\Phi(\vhat{w})\right)
\end{align*}
 and the radius 
\begin{align*}
 r(\hat{\bm w}) := \sqrt{\left\| \frac{1}{2}\left(\vhat{w}+\frac{1}{\lambda} \nabla\Phi(\vhat{w})\right) \right\|^2 +\frac{1}{\lambda}\left(\Psi(\vhat{w}) - \Phi (\vhat{w})\right)},
\end{align*}
 where
 $\Phi(\vhat{w}) := \frac{1}{n}\sum_{i\in [n]}\phi(y_i,\bm{x}_i^\top\vhat{w})$,
 $\Psi(\vhat{w}) := \frac{1}{n}\sum_{i\in [n]}\psi(y_i,\bm{x}_i^\top\vhat{w})$
 and 
 $\nabla \Phi(\hat{\bm w})$
 is a subgradient of
 $\Phi$
 at $\bm w = \hat{\bm w}$. 
\end{theo}
\noindent
The proof of Theorem~\ref{th:bounds} is presented in Appendix.

Using Theorem~\ref{th:bounds},
we can compute a pair of lower and upper bounds of any linear score in the form of
$\bm \eta^\top \bm w^*$
for an arbitrary $\bm \eta \in \RR^d$
as the following Corollary states. 
\begin{coro} \label{coro:output-bound}
 For an arbitrary $\bm \eta \in \RR^d$, 
 \begin{align}\label{eq:balltest}
  LB(\bm \eta^\top \bm w^*)
  \leq
  \bm{\eta}^\top\bm{w}^*
  \leq
  UB(\bm \eta^\top \bm w^*), 
 \end{align}
 where
 \begin{subequations} \label{eq:LB-UB}
 \begin{align} 
  LB(\bm \eta^\top \bm w^*)
  &:= \bm{\eta}^\top\bm{m}(\hat{\bm w})-\|\bm{\eta}\|r(\hat{\bm w}) \\
  UB(\bm \eta^\top \bm w^*)
  &:= \bm{\eta}^\top\bm{m}(\hat{\bm w})+\|\bm{\eta}\|r(\hat{\bm w}). 
 \end{align}
 \end{subequations}
\end{coro}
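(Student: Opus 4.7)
The plan is to reduce the bound on the linear score $\bm{\eta}^\top \bm{w}^*$ to the ball containment on $\bm{w}^*$ already established in Theorem~\ref{th:bounds}. Since the theorem guarantees that $\bm{w}^*$ lies in a Euclidean ball of radius $r(\hat{\bm{w}})$ centered at $\bm{m}(\hat{\bm{w}})$, any linear functional evaluated at $\bm{w}^*$ can differ from that functional evaluated at the center by at most the functional's Euclidean norm times the radius; this is exactly the content of Corollary~\ref{coro:output-bound}.

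Concretely, I would first write the trivial identity
\[
\bm{\eta}^\top \bm{w}^* \;=\; \bm{\eta}^\top \bm{m}(\hat{\bm{w}}) \;+\; \bm{\eta}^\top \bigl( \bm{w}^* - \bm{m}(\hat{\bm{w}}) \bigr),
\]
which decomposes the target quantity into its value at the ball center plus a correction driven by the displacement of $\bm{w}^*$ from that center. Next, I would apply the Cauchy--Schwarz inequality to the correction term,
\[
\bigl| \bm{\eta}^\top ( \bm{w}^* - \bm{m}(\hat{\bm{w}}) ) \bigr| \;\leq\; \| \bm{\eta} \| \, \| \bm{w}^* - \bm{m}(\hat{\bm{w}}) \|,
\]
and then invoke Theorem~\ref{th:bounds} to replace $\| \bm{w}^* - \bm{m}(\hat{\bm{w}}) \|$ by $r(\hat{\bm{w}})$. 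Rewriting the resulting two-sided bound on $\bm{\eta}^\top \bm{w}^* - \bm{\eta}^\top \bm{m}(\hat{\bm{w}})$ immediately yields both inequalities of \eq{eq:balltest} with the quantities $LB$ and $UB$ as defined in \eq{eq:LB-UB}.

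There is no substantive obstacle: the argument is essentially a one-line consequence of Cauchy--Schwarz combined with the ball containment already carried by Theorem~\ref{th:bounds}. The only points that deserve a brief remark are well-definedness of the quantities involved---$\nabla \Phi(\hat{\bm{w}})$ exists because $\phi$ is assumed subdifferentiable, and the argument of the square root in $r(\hat{\bm{w}})$ is nonnegative because $\Psi(\hat{\bm{w}}) \geq \Phi(\hat{\bm{w}})$ (a pointwise consequence of $\phi \leq \psi$) and $\lambda > 0$. All of the genuine mathematical content therefore sits in Theorem~\ref{th:bounds}; this corollary is simply the projection of that ball onto an arbitrary direction $\bm{\eta}$, which is precisely the form needed to bound linear scores such as class probabilities or predicted responses in the downstream secure computation.
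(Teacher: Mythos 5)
Your proof is correct, but it takes a different route from the paper's. You decompose $\bm{\eta}^\top\bm{w}^*$ into its value at the ball center plus a correction term and control the correction with the Cauchy--Schwarz inequality together with the ball containment $\|\bm{w}^*-\bm{m}(\hat{\bm w})\|\le r(\hat{\bm w})$ from Theorem~\ref{th:bounds}. The paper instead poses the lower bound as the constrained optimization problem $\min_{\bm w}\,\bm{w}^\top\bm{\eta}$ subject to $\|\bm w-\bm m(\hat{\bm w})\|^2\le r(\hat{\bm w})^2$, forms the Lagrangian, solves for the optimal multiplier $\mu^*=\|\bm\eta\|/(2r(\hat{\bm w}))$, and substitutes back to obtain $\bm{\eta}^\top\bm m(\hat{\bm w})-\|\bm\eta\|r(\hat{\bm w})$. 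The two arguments yield identical bounds; yours is shorter and more elementary, while the paper's duality computation makes explicit that $LB$ and $UB$ are the exact extrema of the linear score over the ball (i.e., the bounds cannot be improved without shrinking the ball), a fact your argument also delivers since Cauchy--Schwarz is attained when $\bm w^*-\bm m(\hat{\bm w})$ is parallel to $\pm\bm\eta$, though you leave that implicit. Your side remarks on well-definedness (subdifferentiability of $\phi$ and nonnegativity of the radicand via $\Psi(\hat{\bm w})\ge\Phi(\hat{\bm w})$) are correct and are not addressed in the paper's proof.
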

\noindent
The proof of Corollary~\ref{coro:output-bound} is presented in Appendix.

Many important quantities in data analyses are represented as a linear score.
For example,
in binary classification,
the classification result
$\tilde{y}$
of a test input
$\tilde{\bm x}$
is determined by the sign of 
the linear score
$\bm \tilde{\bm x}^\top \bm w^*$. 
It suggests that 
we can certainly classify the test instance as 
$LB(\tilde{\bm x}^\top \bm w^*) > 0 ~\Rightarrow~ \tilde{y} = +1$
and 
$UB(\tilde{\bm x}^\top \bm w^*) < 0 ~\Rightarrow~ \tilde{y} = -1$. 
%
Similarly,
if we are interested in each coefficient
$w^*_h, h \in [d], $
of the trained model,
by setting
$\bm \eta = \bm e_h$
where 
$\bm e_h$
is a $d$-dimensional vector of all 1s except 0 in the $h$-th component,
we can obtain a pair of lower and upper bounds on the coefficient as 
$LB(\bm e_h^\top \bm w^*) \le w^*_h \le UB(\bm e_h^\top \bm w^*)$. 
 
We note that 
Theorem~\ref{th:bounds} and Corollary~\ref{coro:output-bound}
are inspired by recent works on safe screening and related problems 
\cite{ghaoui2012safe,xiang2011learning,ogawa2013safe,liu2014safe,wang2014safe,xiang2014screening,fercoq2015mind,okumura2015quick},
where an approximate solution is used for bounding the optimal solution
without solving the optimization problem.

\section{SAG implementation with piecewise-linear functions}
\label{sec:piecewise-linear}

\begin{figure*}[t]
\begin{center}
\includegraphics[width=\textwidth]{./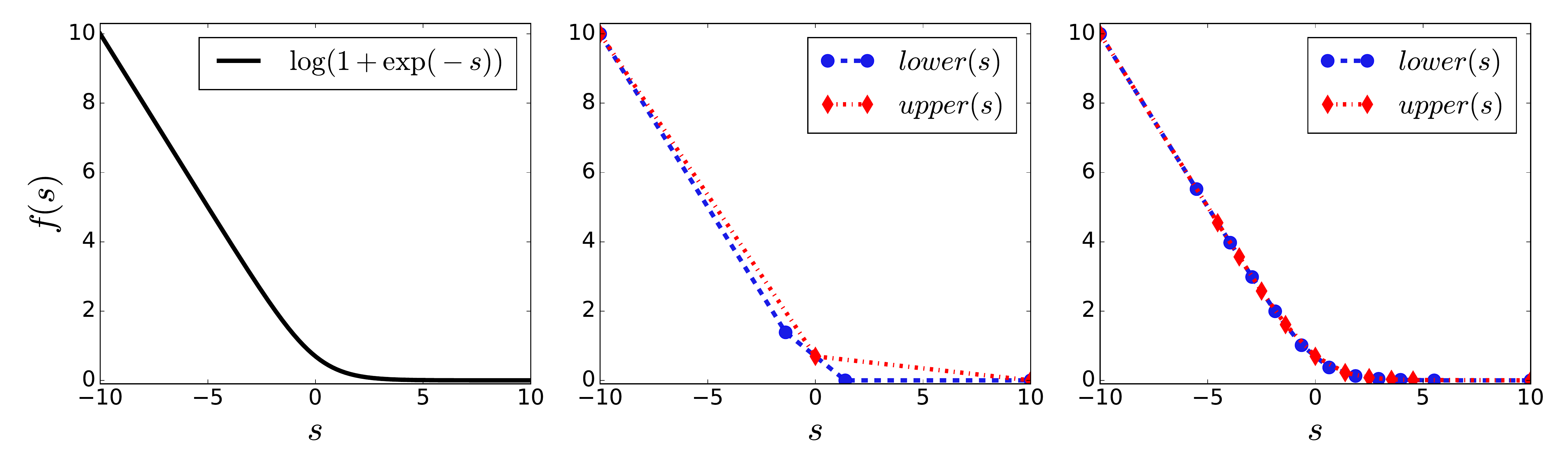}
\vspace{-1.5em}

\begin{tabular}{cccc}
\begin{minipage}{0.03\hsize}
~
\end{minipage}
&
\begin{minipage}{0.29\hsize}
A. $\log(1+\exp(-s))$
\end{minipage}
&
\begin{minipage}{0.29\hsize}
B. Bounds with $K = 2$
\end{minipage}
&
\begin{minipage}{0.29\hsize}
C. Bounds with $K = 10$
\end{minipage}
\end{tabular}
\end{center}
\vspace{-1em}
\caption{An example of bounding convex function of one variable $\log(1+\exp(-s))$ with piecewise linear functions with $K$ sections for $s\in[-10,10]$}
\label{fig:pwl-bound}
\end{figure*}

In this section,
we present how to compute the bounds on the true solution
discussed in 
\S\ref{sec:secure-approximation-guarantee} 
under secure computation framework.
Specifically,
we propose using piecewise-linear functions for the 
two surrogate loss functions 
$\phi$
and
$\psi$. 
In \S\ref{sect:piecewise-linear}, 
we present a protocol of secure piecewise-linear function evaluation ($SPL$). 
In \S\ref{sect:secure-bound},
we describe a protocol for securely computing the bounds. 
In the appendix,
we describe a specific implementation for logistic regression. 

\subsection{Secure piecewise-linear function computation} \label{sect:piecewise-linear}
Let us denote a piecewise-linear function with $K$ pieces defined in $s \in [T_0, T_K]$ as
\begin{align}
 g(s) = (\alpha_j s + \beta_j) I_{T_{j-1} \le s < T_j}, 
\end{align}
where
$\{(\alpha_j, \beta_j)\}_{j \in [K]}$
are the coefficients of the $j$-th linear segment
and 
$T_0 < T_1 < \ldots < T_{K-1} < T_K$
are breakpoints. 
For continuity,
we assume that
$\alpha_j T_j + \beta_j = \alpha_{j+1} T_j + \beta_{j+1}$ 
for all $j \in\{0, 1, \dots, K-1\}$. 

An advantage of piecewise-linear functions 
is that,
for any one-dimensional convex function, 
a lower bounding function can be easily obtained by using its tangents,
while 
an upper bounding function can be also easily obtained by using its chords. 
In addition,
we can easily control
the trade-off
between the accuracy and the computational complexity 
by changing the number of pieces $K$.
\figurename~\ref{fig:pwl-bound}
shows
examples of two piecewise-linear surrogate loss functions 
for a non-linear function 
$\log(1 + \exp(-s))$ 
for several values of $K$. 

The following theorem states that
a piecewise-linear function
$g(s)$
can be securely evaluated. 
\begin{theo} \label{th:plf-secure-computability}
 Suppose that
 party A has
 $E_{pk_B}(s_A)$
 and
 party B has 
 $E_{pk_A}(s_B)$
 such that
 $s = s_A + s_B$.
 Then,
 the two parties can securely evaluate the encrypted value of the piecewise-linear function value 
 $g(s)$
 in the sense that 
 there is a secure protocol that outputs 
 $E_{pk_B}(g_A)$
 and
 $E_{pk_A}(g_B)$
 respectively to 
 party A
 and
 party B
 such that 
 $g_A + g_B = g(s)$. 
\end{theo}
The proof of Theorem~\ref{th:plf-secure-computability} is presented in Appendix.
In the proof,
we develop such a protocol called $SPL$, 
whose input-output property is represented as 
\begin{align*}
 SPL(E_{pk_B}(s_A), E_{pk_A}(s_B))
 ~\to~
 (E_{pk_B}(g_A), E_{pk_A}(g_B)). 
\end{align*}

Let 
$o_j(s) := I_{s \in [T_{j-1}, T_j)}$,
$j \in [K]$, 
denote the indicator of an event that a scalar $s$ is in the $j$-th piece. 
The difficulty of secure piecewise-linear function evaluation is that 
we need to securely compute 
$E(o_j(s))$.
We use a protocol presented by Veugen {\it et al.} \cite{veugen2011comparing}
in order to compute
$E(I_{a < b})$
from
$E(a)$
and
$E(b)$, 
and then compute $E(o_j(s))$ as
\begin{eqnarray*}
 E(o_j(s)) = E(I_{s\geq T_{j-1}} - I_{s\geq T_j}) = E(I_{s \geq T_{j-1}})E(I_{s\geq T_j})^{-1}.
\end{eqnarray*}
Using the indicators 
$\{o_j(s)\}_{j \in [K]}$,
the piecewise-linear function value
$g(s)$
is written as 
\begin{eqnarray}
g(s) = \sum_{j\in[K]} o_j(s)(\alpha_j s + \beta_j), \label{eq:location-of-PLF-sum}
\end{eqnarray}
which can be securely computed
if 
$E(o_j(s))$
and
$E(s)$
are available. 

We finally note that, in Theorem \ref{th:bounds},
when
$\phi(s)$
is represented as a piecewise-linear function,
its subderivative
$\partial \phi(s)/\partial s$
is represented as a piecewise-constant function and so is the subgradient
$\nabla \Phi(\vhat{w})$.
We can develop a secure piecewise-constant function evaluation protocol
based on the same idea as above (detailed in the proof of Theorem \ref{th:plf-secure-computability} in Appendix).

\subsection{Secure bound computation} \label{sect:secure-bound}
We describe here how to compute the bounds on the true solution
in the form of
\eq{eq:balltest}
when the surrogate loss functions
$\phi$
and
$\psi$
are implemented with piecewise-linear functions. 
We consider a class of loss functions $\ell$ that can be decomposed as
\begin{align} \label{eq:computability-loss-function}
 \ell(y, \bm x^\top \bm w) = u(s(y, \bm x^\top \bm w)) + v(y, \bm x^\top \bm w), 
\end{align}
where
$u$ is a non-linear function whose secure evaluation is difficult,
while
$s(y, \bm x^\top \bm w)$,
$v(y, \bm x^\top \bm w)$,
and their subgradients are assumed to be securely evaluated. 
Note that most commonly-used loss functions
can be written in this form.
For example,
in the case of logistic regression \eqref{eq:Logisic-loss},
$u(s) = \log(1 + \exp(-s))$,
$s(y, \bm x^\top \bm w) = \bm x^\top \bm w$
and
$v(y, \bm x^\top \bm w) = - y \bm x^\top \bm w$. 

We consider a situation that 
two parties A and B
own
encrypted approximate solution $\hat{\bm w}$
separately for their own features,
i.e., 
parties A and B
own 
$E_{pk_B}(\hat{\bm w}_A)$
and 
$E_{pk_A}(\hat{\bm w}_B)$,
respectively,
where
$\hat{\bm w}_A$
and 
$\hat{\bm w}_B$
the first $d_A$ and the following $d_B$ components of $\hat{\bm w}$. 

\subsubsection{Secure computations of the ball}
The following theorem states that 
the center
$\bm m(\hat{\bm w})$
and
the radius 
$r(\hat{\bm w})$
can be securely computed. 
\begin{theo} \label{th:SAG}
 Suppose that
 party A has
 $X_A$
 and 
 $E_{pk_B}(\hat{\bm w}_A)$, 
 while 
 party B has 
 $X_B$,
 $\bm y$
 and 
 $E_{pk_A}(\hat{\bm w}_B)$. 
 Then,
 the two parties can securely compute 
 the center
 $\bm m(\hat{\bm w})$
 and
 the radius 
 $r(\hat{\bm w})$
 in the sense that 
 there is a secure protocol that outputs 
 $E_{pk_B}(\bm m_A(\hat{\bm w}))$
 and
 $E_{pk_B}(r_A(\hat{\bm w})^2)$
 to party A,
 and 
 $E_{pk_A}(\bm m_B(\hat{\bm w}))$
 and
 $E_{pk_A}(r_B(\hat{\bm w})^2)$
 to party B
 such that 
 $\bm m_A(\hat{\bm w}) + \bm m_B(\hat{\bm w}) = \bm m(\hat{\bm w})$
 and 
 $r_A(\hat{\bm w})^2 + r_B(\hat{\bm w})^2 = r(\hat{\bm w})^2$.
\end{theo}
We call such a protocol as secure ball computation ($SBC$) protocol.
whose input-output property is characterized as 
\begin{align*}
 \nonumber
 SBC&((X_A, E_{pk_B}(\hat{\bm w}_A)), (X_B, \bm y, E_{pk_A}(\hat{\bm w}_B))
 \\
 ~\to~
 &((E_{pk_B}(\bm m_A(\hat{\bm w})), E_{pk_B}(r_A(\hat{\bm w})^2)), (E_{pk_A}(\bm m_B(\hat{\bm w})), E_{pk_A}(r_B(\hat{\bm w})^2)))
\end{align*}
To prove Theorem~\ref{th:SAG}, 
we only describe secure computations of three components in the $SBC$ protocol. 
We omit the security analysis of the other components 
because
they can be easily derived 
from the security properties of 
Paillier cryptosystem \cite{paillier1999public},
comparison protocol \cite{veugen2011comparing}
and
multiplication protocol \cite{nissim2006communication}.
\footnote{We add that the trade-off of security strengths and computation times of Paillier cryptosystem and the comparison protocol are controlled by parameters ($N$ in {\S}\ref{sect:secure-multi-party} for Paillier cryptosystem; another parameter exists for the comparison protocol). Thus the total security depends on the weaker one of the two. The security of the multiplication protocol depends on the security of Paillier cryptosystem itself.}

\paragraph{Encrypted values of $\Psi(\vhat{w}) - \Phi(\vhat{w})$}
This quantity can be obtained 
by summing
$\psi(\bm x_i) - \phi(\bm x_i)$
for $i \in [n]$. 
Denoting 
$\phi := \underline{u}(s) + v$
and
$\psi := \overline{u}(s) + v$,
where
$\underline{u}$ and $\overline{u}$
are lower and upper bounds of
$u$
implemented with piecewise-linear functions,
respectively,
we can compute
$\psi(\bm x_i) - \phi(\bm x_i) = \overline{u} - \underline{u}$
by using SPL protocol for each of $\overline{u}$ and $\underline{u}$. 

\paragraph{Encrypted values of $\nabla \Phi(\vhat{w})$}
This quantity can be obtained by summing
$\nabla \phi$
at
$\bm w = \hat{\bm w}$.
Since
$
\nabla \phi
=
\frac{\partial}{\partial \bm w} \underline{u}(s)
+
\frac{\partial v}{\partial \bm w}
= 
\underline{u}^\prime(s) \frac{\partial s}{\partial \bm w}
+
\frac{\partial v}{\partial \bm w}
$,
its encrypted version can be written as
$
E(\nabla \phi)
=
E(\underline{u}^\prime(s) \frac{\partial s}{\partial \bm w})
E(\frac{\partial v}{\partial \bm w})
$.
Here, 
$\underline{u}^\prime(s)$
can be securely evaluated because
$\underline{u}^\prime$
is piecewise-constant function,
while 
$\frac{\partial s}{\partial \bm w}$
and 
$\frac{\partial v}{\partial \bm w}$
are securely computed from the assumption in \eq{eq:computability-loss-function}.
For computing
$E(\underline{u}^\prime(s) \frac{\partial s}{\partial \bm w})$
from
$E(\underline{u}^\prime(s))$
and
$E(\frac{\partial s}{\partial \bm w})$,
we can use the secure multiplication protocol in \cite{nissim2006communication}.

\paragraph{Encrypted value of $r(\hat{\bm w})^2$}
In order to compute this quantity,
we need the encrypted value of 
$\|\frac{1}{2}(\vhat{w}+1/\lambda\nabla\Phi)\|^2$,
which can be also computed by using the secure multiplication protocol in \cite{nissim2006communication}.

\subsubsection{Secure computations of the bounds}
Finally
we discuss here how to securely compute the upper and the lower bounds
in \eq{eq:balltest} 
from the encrypted
$\bm{m}(\hat{\bm w})$
and
$r(\hat{\bm w})^2$. 
The protocol depends on 
who owns the test instance
and
who receives the resulted bounds. 
We describe here a protocol for a particular setup where 
the test instance
$\tilde{\bm{x}}$
is owned by two parties A and B,
i.e.,
$\tilde{\bm{x}} = [\tilde{\bm{x}}_A^\top ~ \tilde{\bm{x}}_B^\top]^\top$
where
$\tilde{\bm{x}}_A$
and 
$\tilde{\bm{x}}_B$
are 
the first $d_A$ and the following $d_B$ components of 
$\tilde{\bm{x}}$, 
and
that
the lower and the upper bounds are given to either party.
Similar protocols can be easily developed for other setups. 

\begin{theo}\label{th:bound_evaluation}
 Let party A owns
 $\tilde{\bm{x}}_A$,
 $E_{pk_B}(\bm{m}_A(\hat{\bm w}))$
 and
 $E_{pk_B}(r_A(\hat{\bm w})^2)$,
 and
 party B owns 
 $\tilde{\bm{x}}_B$,
 $E_{pk_A}(\bm{m}_B(\hat{\bm w}))$
 and
 $E_{pk_A}(r_B(\hat{\bm w})^2)$,
 respectively.
 Then, 
 either party A or B can receive the lower and the upper bounds of 
 $\tilde{\bm{x}}^\top\bm{w}^*$
 in the form of \eq{eq:balltest}
 without revealing
 $\tilde{\bm{x}}_A$ and $\tilde{\bm{x}}_B$
 to the others. 
\end{theo}
\noindent
The proof of Theorem~\ref{th:bound_evaluation} is presented in Appendix. 
We note that
a party who receives bounds from the protocol
would get some information about
the center
$\bm{m}_B(\hat{\bm w})$
and
the radius
$\bm{r}_B(\hat{\bm w})$,
but no other information about the original dataset is revealed.

\section{Experiments} \label{sec:experiments}
We conducted experiments for illustrating the performances of the proposed SAG method.
The experimental setup is as follows.
We used
Paillier cryptosystem
with $N=1024$-bit public key
and
comparison protocol by Veugen {\it et al.} \cite{veugen2011comparing}
for 60 bits of integers.
The program is implemented with Java,
and the communications
between
two parties are
implemented with sockets
between two processes
working in the same computer.
We used a single computer
with 3.07GHz Xeon CPU and 48GB RAM. 
Except when we investigate computational costs, 
computations
were
done on unencrypted values.
Note that 
the proposed SAG method provide bounds on the true solution
$\bm w^*$
based on an arbitrary approximate solution
$\hat{\bm w}$. 
In all the experiments presented here, 
we used approximate solutions obtained by 
Nardi {\it et al.}'s approach \cite{nardi2012achieving}
as the approximate solution
$\hat{\bm w}$. 
In what follows,
we call the bounds or intervals obtained by the SAG method as
SAG bounds and SAG intervals,
respectively.

\subsection{Logistic regression}

\begin{table}[t]
\caption{Data sets used for the logistic regression. All are from UCI Machine Learning Repository.}
\label{tab:dataset}
\begin{center}
\begin{tabular}{c|rrr}\hline
data set&training set&validation set&$d$\\\hline\hline
Musk&3298&3300&166\\
MGT&9510&9510&10\\
Spambase&2301&2301&57\\
OLD&1268&1268&72\\\hline
\end{tabular}
\end{center}
\vspace{-2em}
\end{table}

\begin{figure*}[tp]
\begin{center}
\begin{tabular}{ccc}
\begin{minipage}{0.3\hsize}
\begin{center}
\includegraphics[width = \textwidth]{./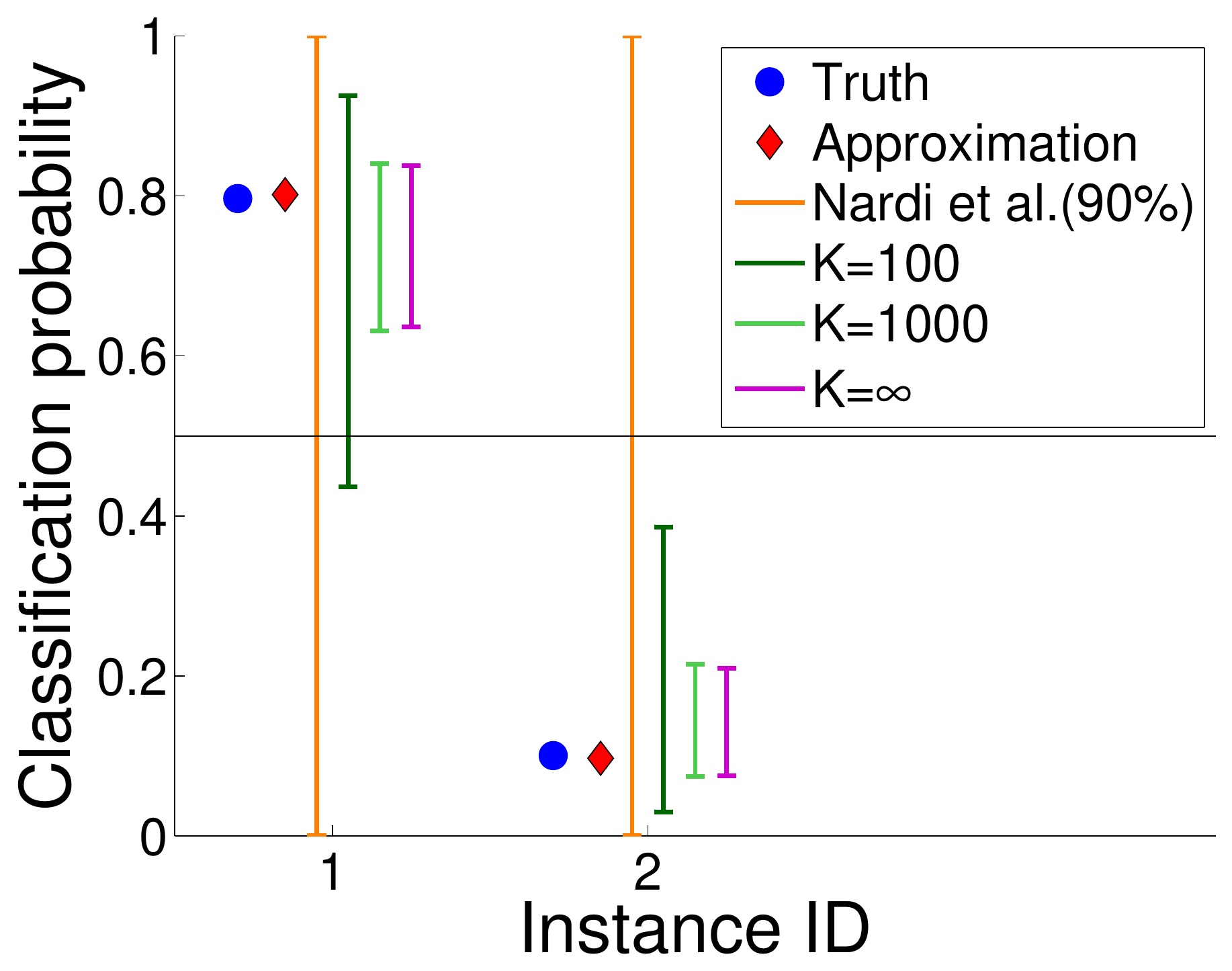}

Musk, $\lambda = 0.1$
\end{center}
\end{minipage}
&
\begin{minipage}{0.3\hsize}
\begin{center}
\includegraphics[width = \textwidth]{./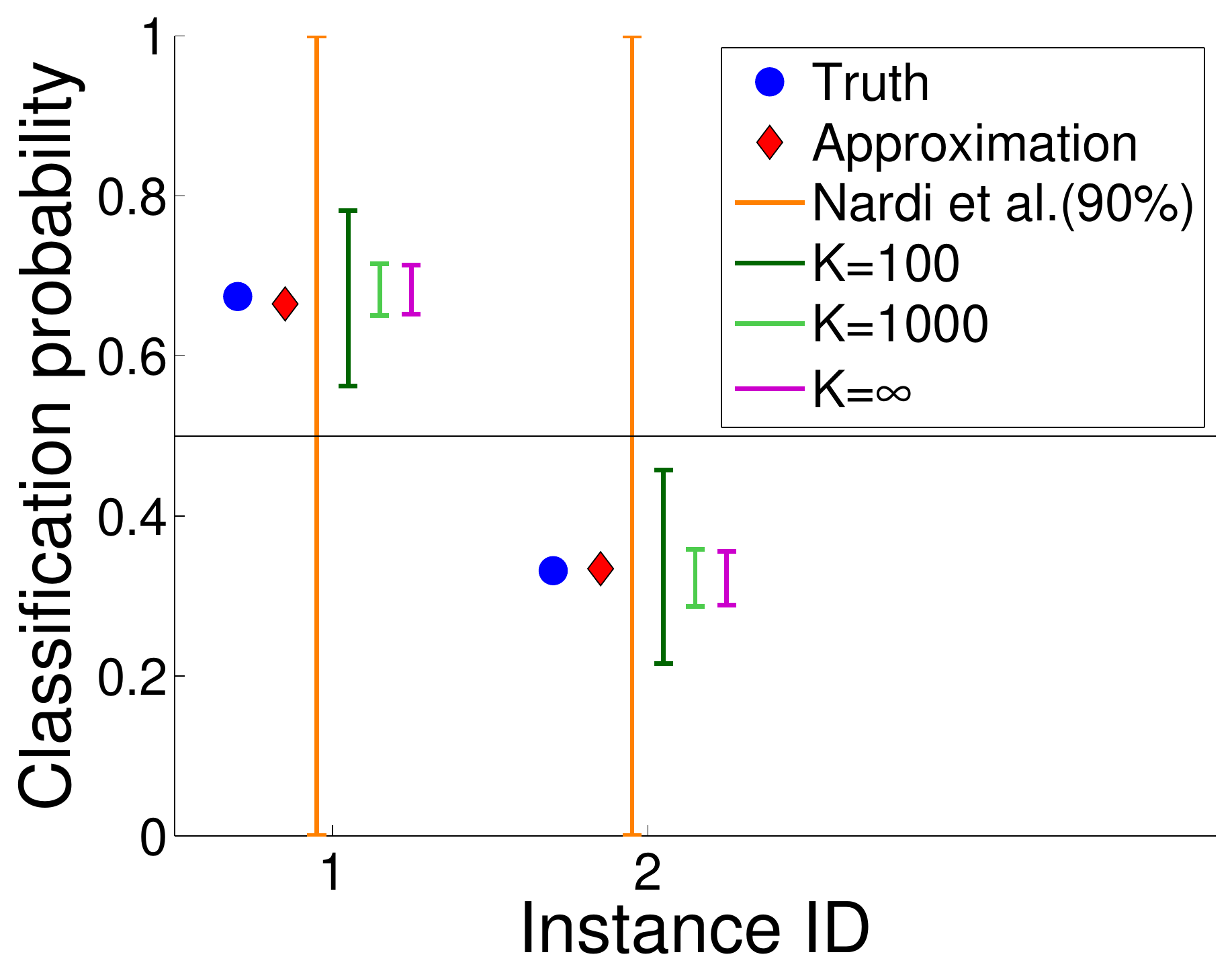}

Musk, $\lambda = 1$
\end{center}
\end{minipage}
&
\begin{minipage}{0.3\hsize}
\begin{center}
\includegraphics[width = \textwidth]{./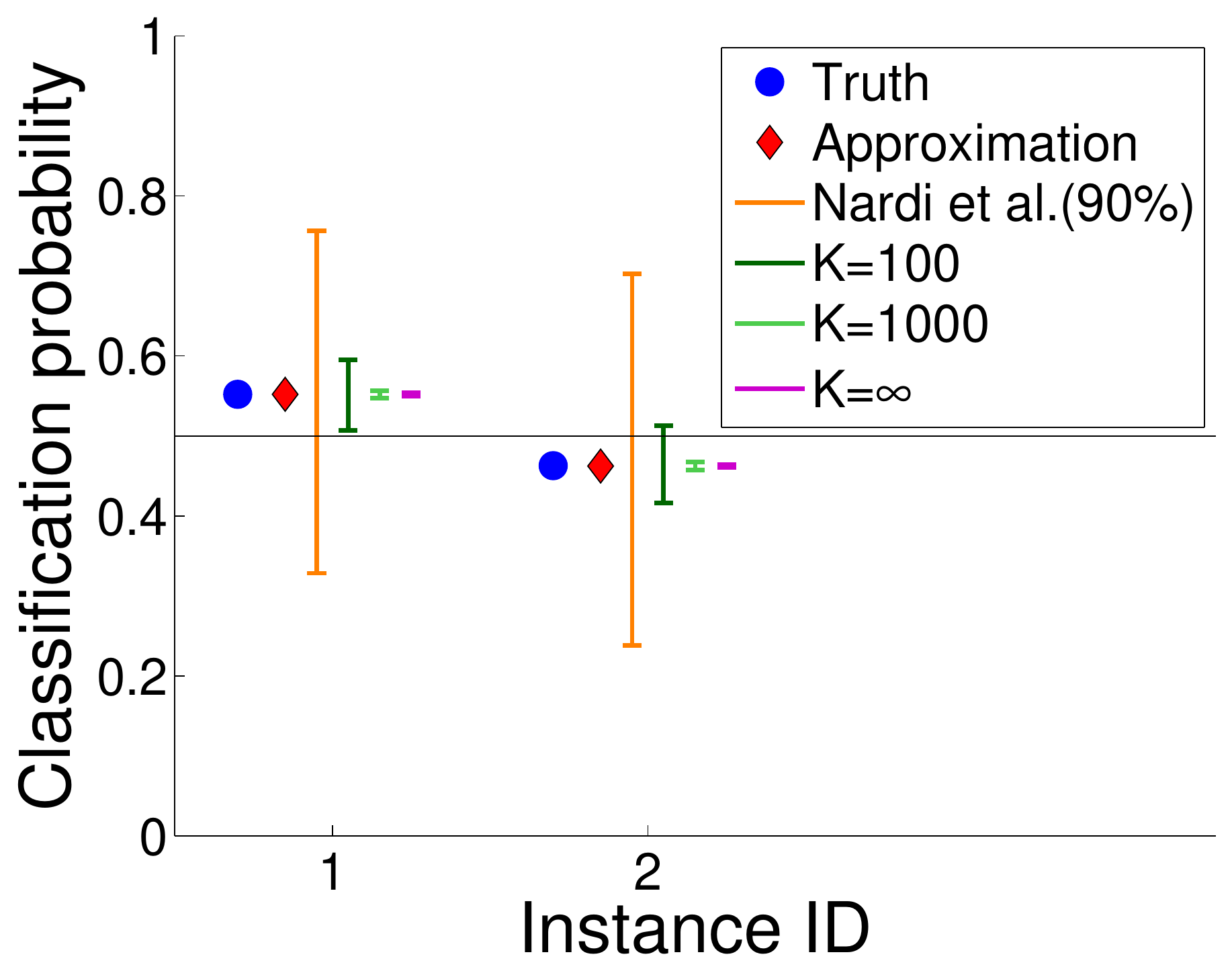}

Musk, $\lambda = 10$
\end{center}
\end{minipage}
\\
\begin{minipage}{0.3\hsize}
\begin{center}
\includegraphics[width = \textwidth]{./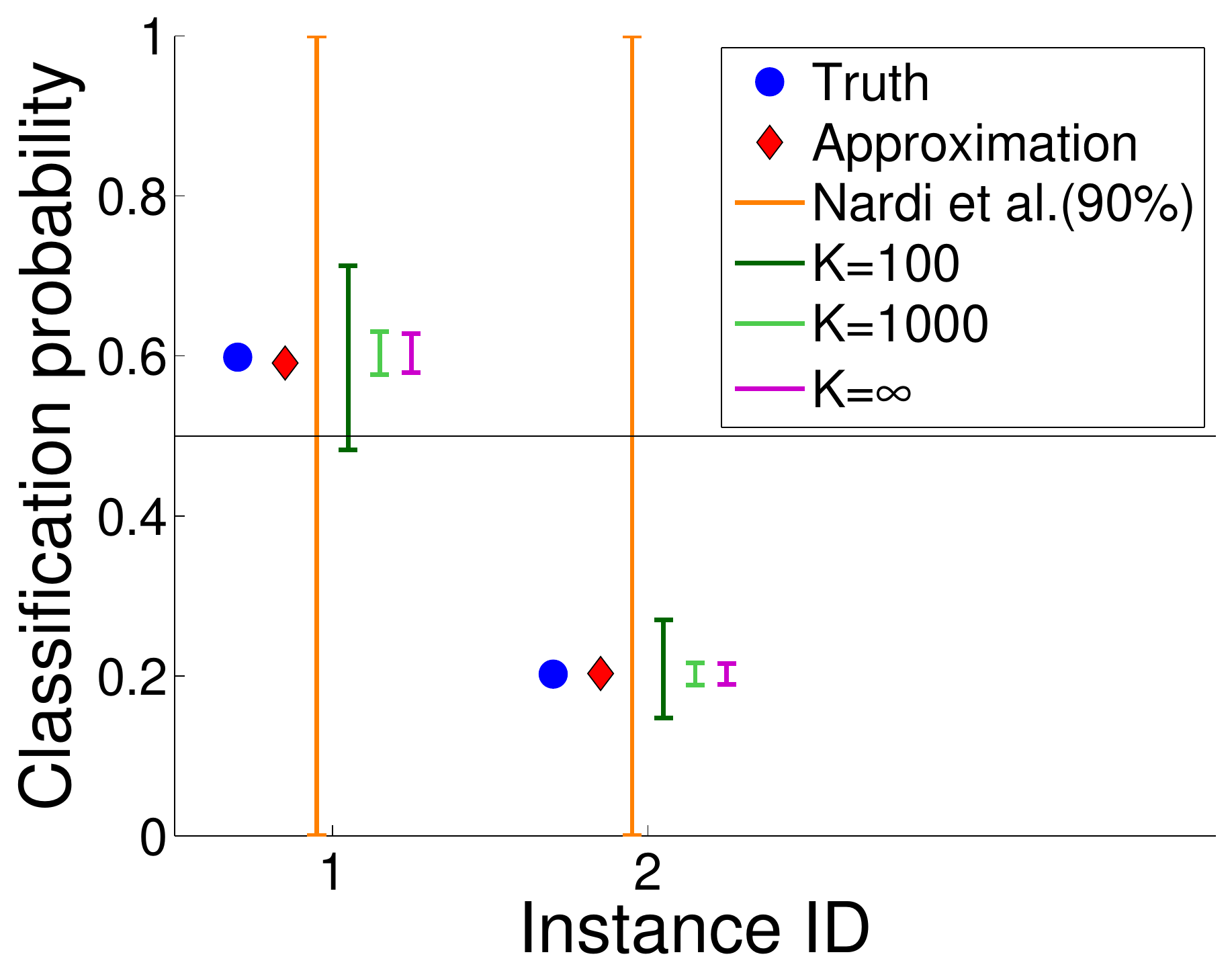}

MGT, $\lambda = 0.1$
\end{center}
\end{minipage}
&
\begin{minipage}{0.3\hsize}
\begin{center}
\includegraphics[width = \textwidth]{./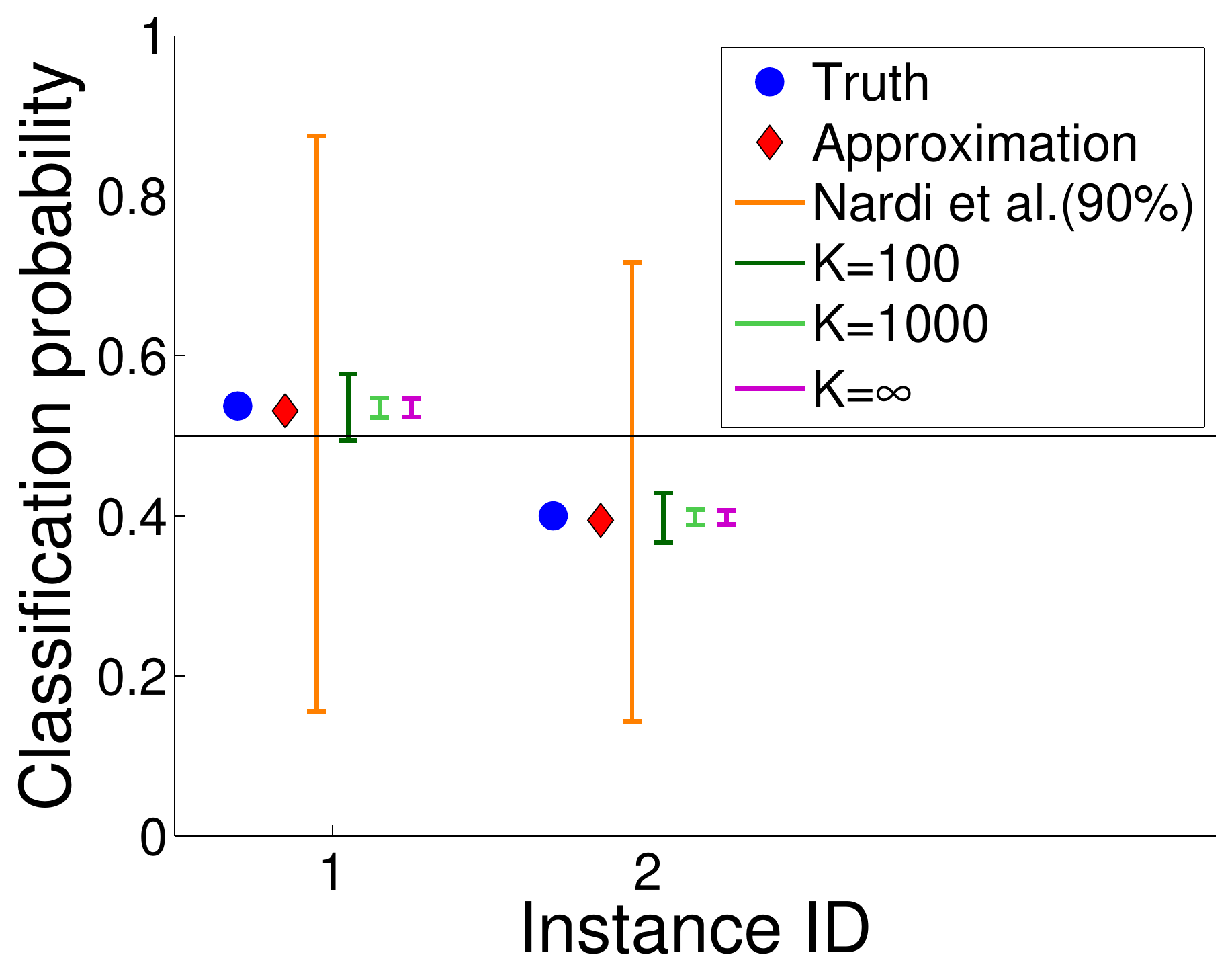}

MGT, $\lambda = 1$
\end{center}
\end{minipage}
&
\begin{minipage}{0.3\hsize}
\begin{center}
\includegraphics[width = \textwidth]{./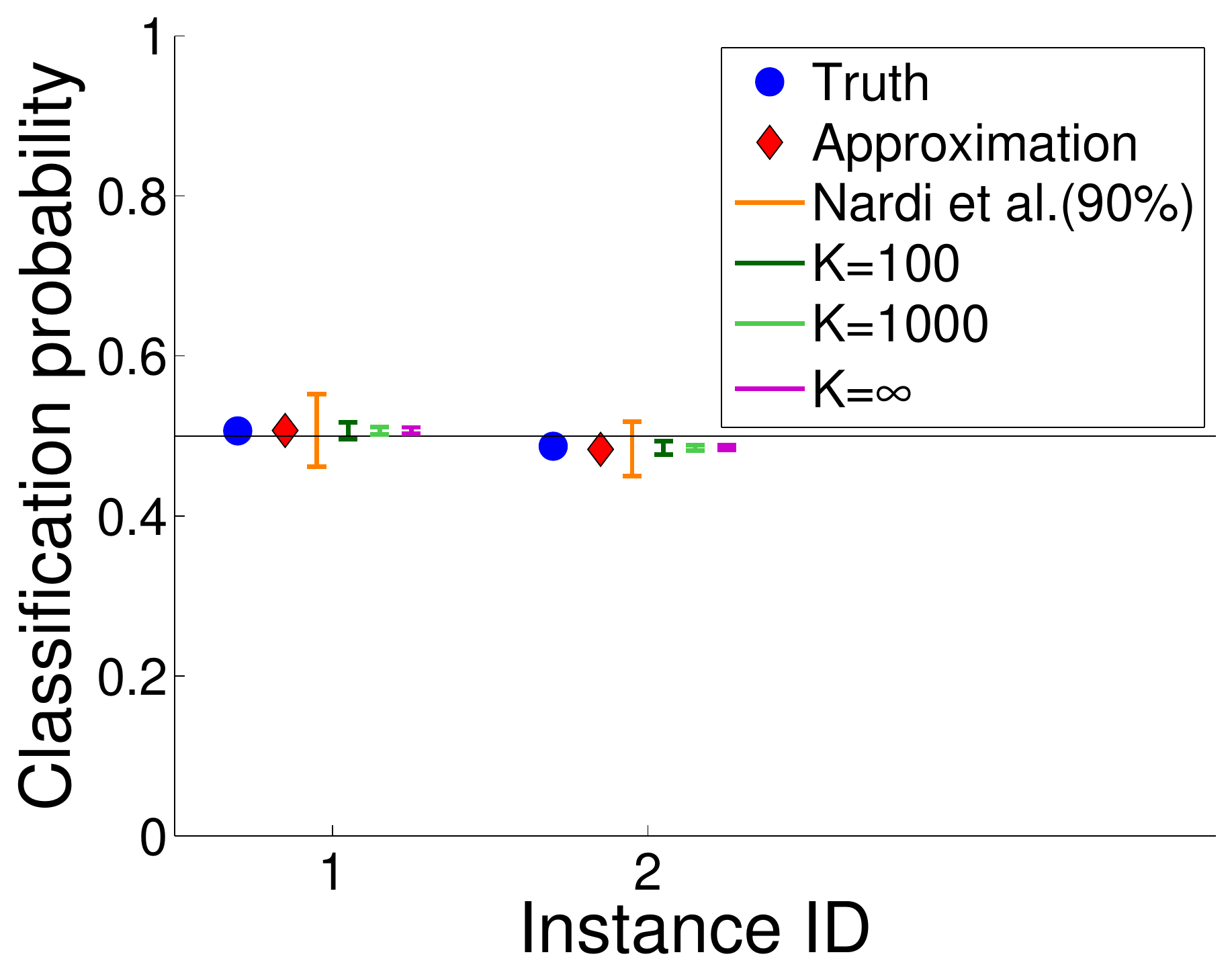}

MGT, $\lambda = 10$
\end{center}
\end{minipage}
\\
\begin{minipage}{0.3\hsize}
\begin{center}
\includegraphics[width = \textwidth]{./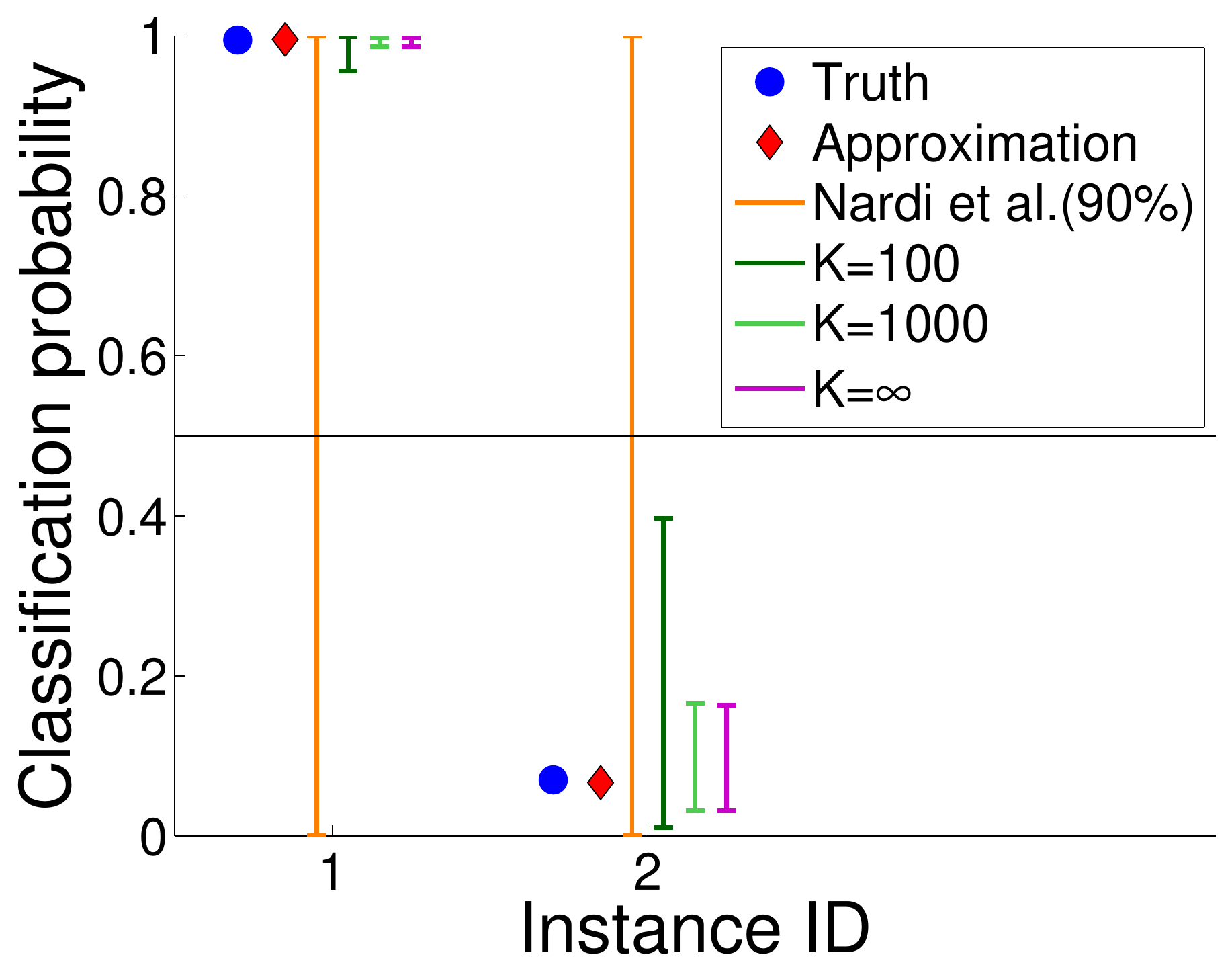}

Spambase, $\lambda = 0.1$
\end{center}
\end{minipage}
&
\begin{minipage}{0.3\hsize}
\begin{center}
\includegraphics[width = \textwidth]{./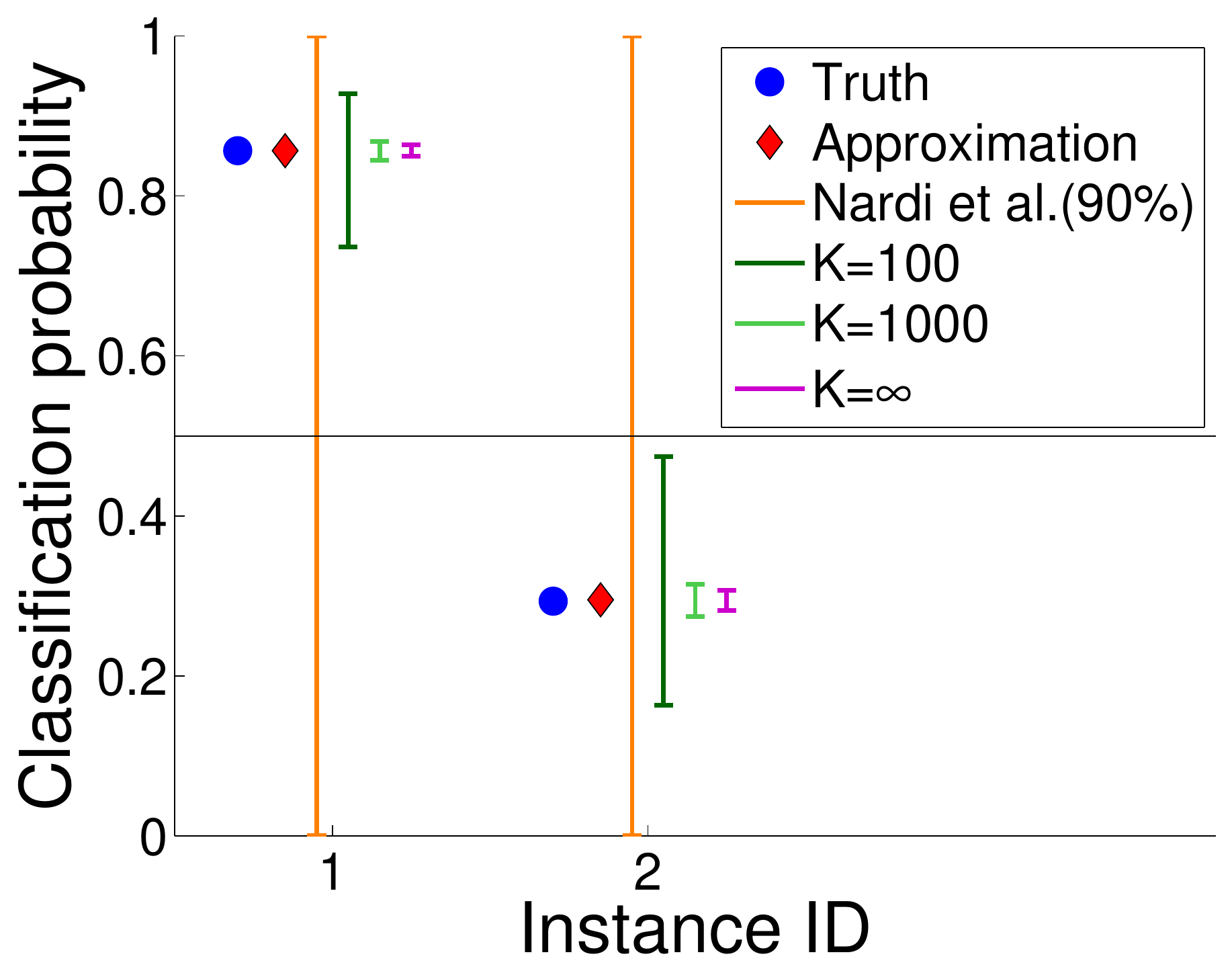}

Spambase, $\lambda = 1$
\end{center}
\end{minipage}
&
\begin{minipage}{0.3\hsize}
\begin{center}
\includegraphics[width = \textwidth]{./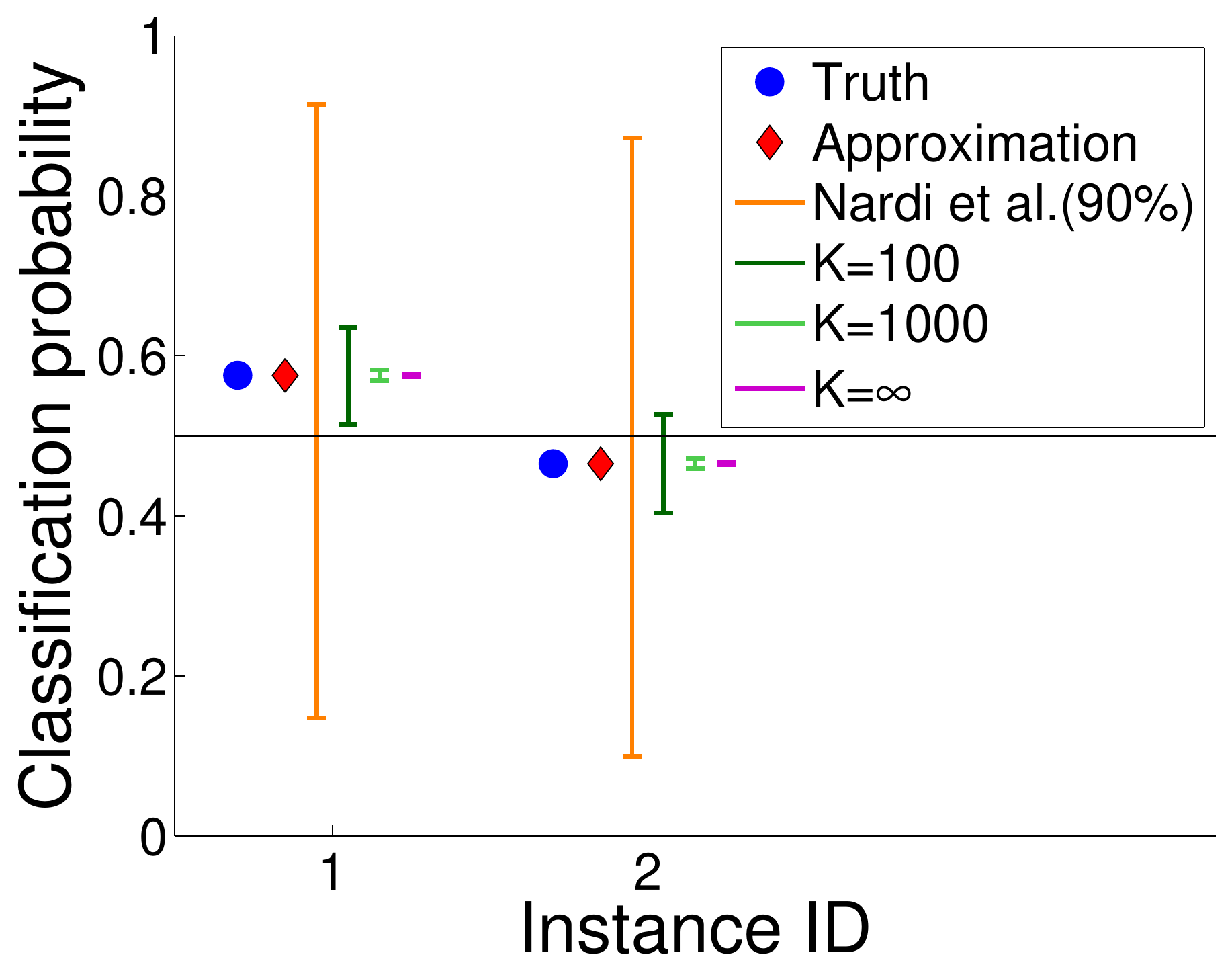}

Spambase, $\lambda = 10$
\end{center}
\end{minipage}
\\
\begin{minipage}{0.3\hsize}
\begin{center}
\includegraphics[width = \textwidth]{./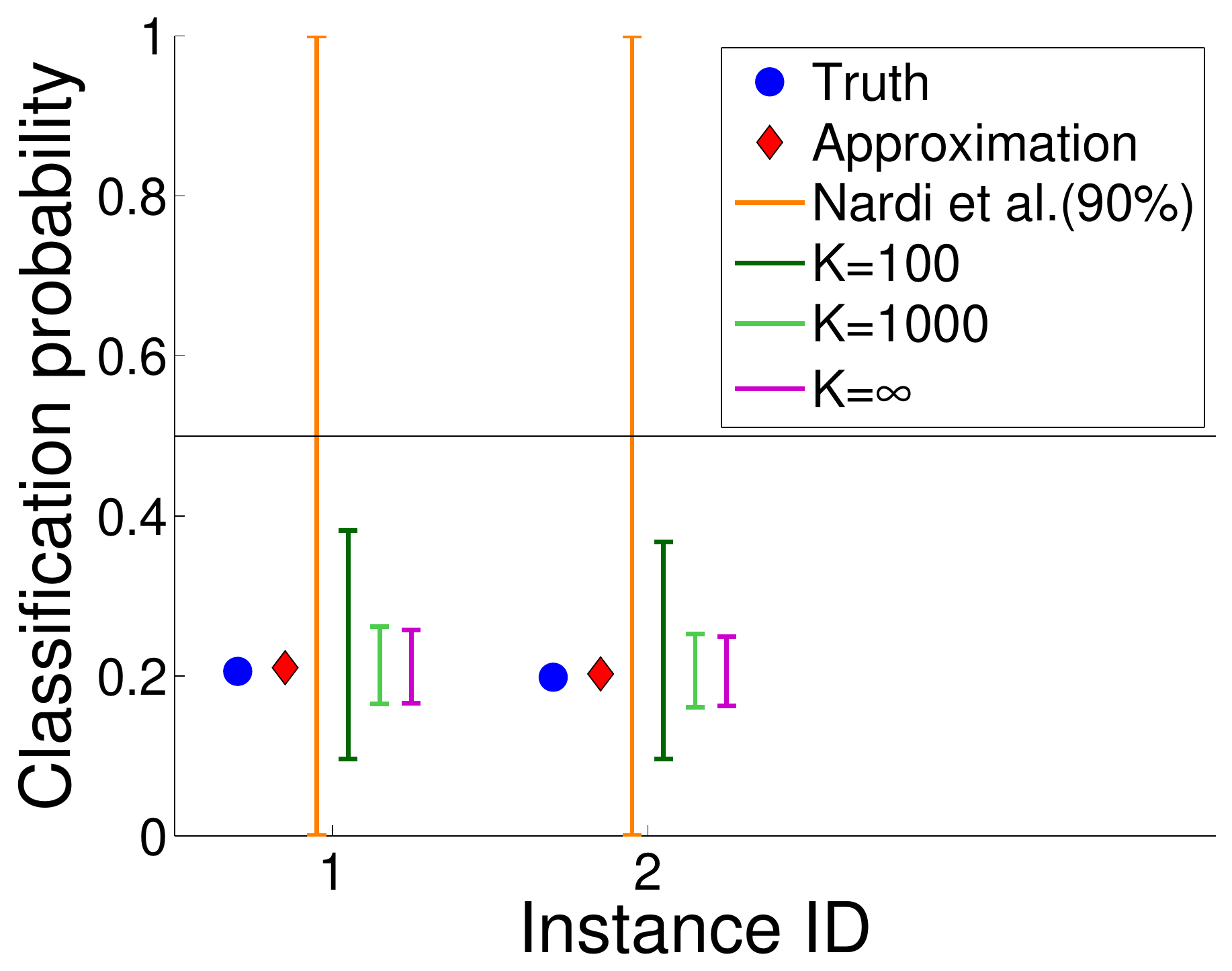}

OLD, $\lambda = 0.1$
\end{center}
\end{minipage}
&
\begin{minipage}{0.3\hsize}
\begin{center}
\includegraphics[width = \textwidth]{./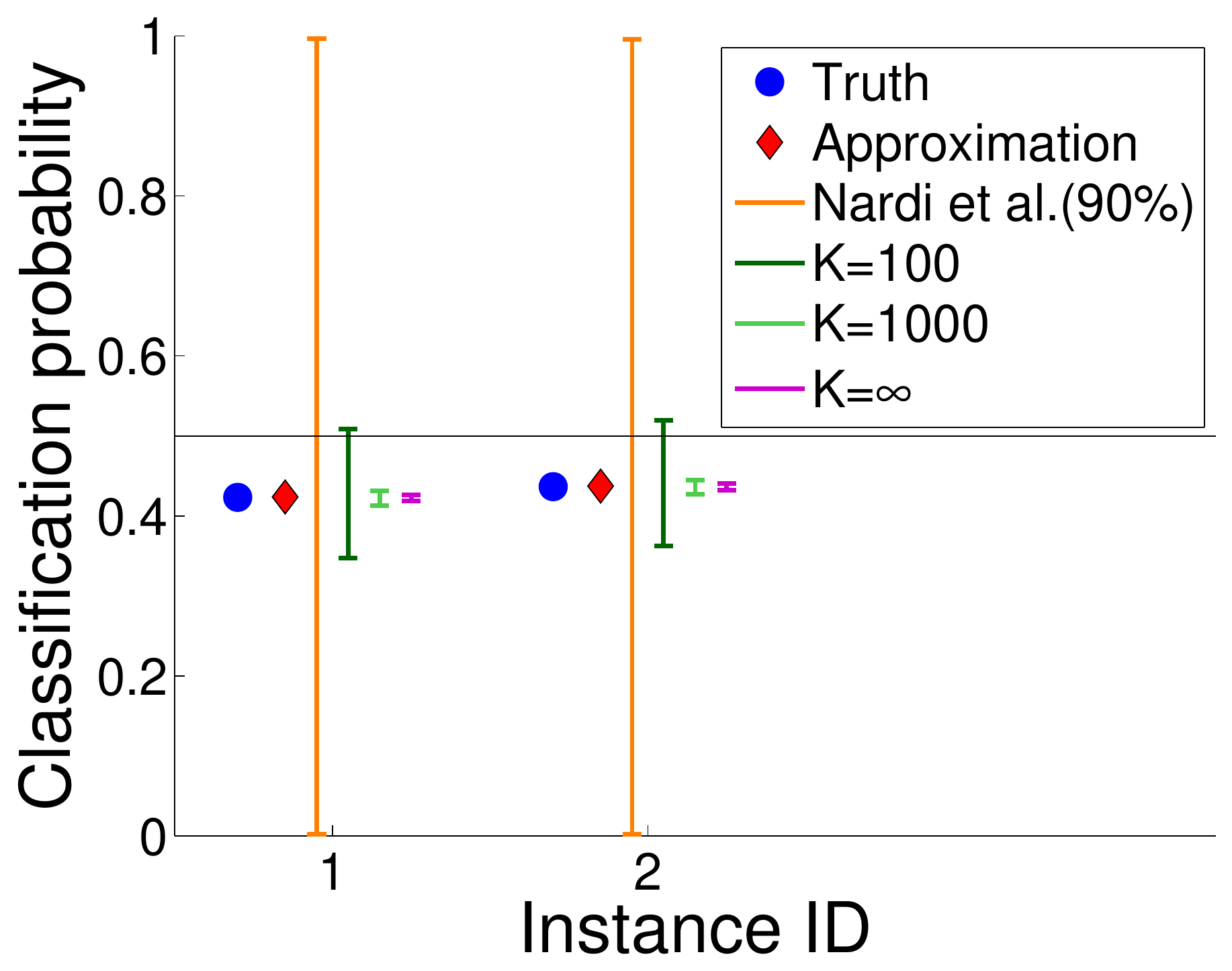}

OLD, $\lambda = 1$
\end{center}
\end{minipage}
&
\begin{minipage}{0.3\hsize}
\begin{center}
\includegraphics[width = \textwidth]{./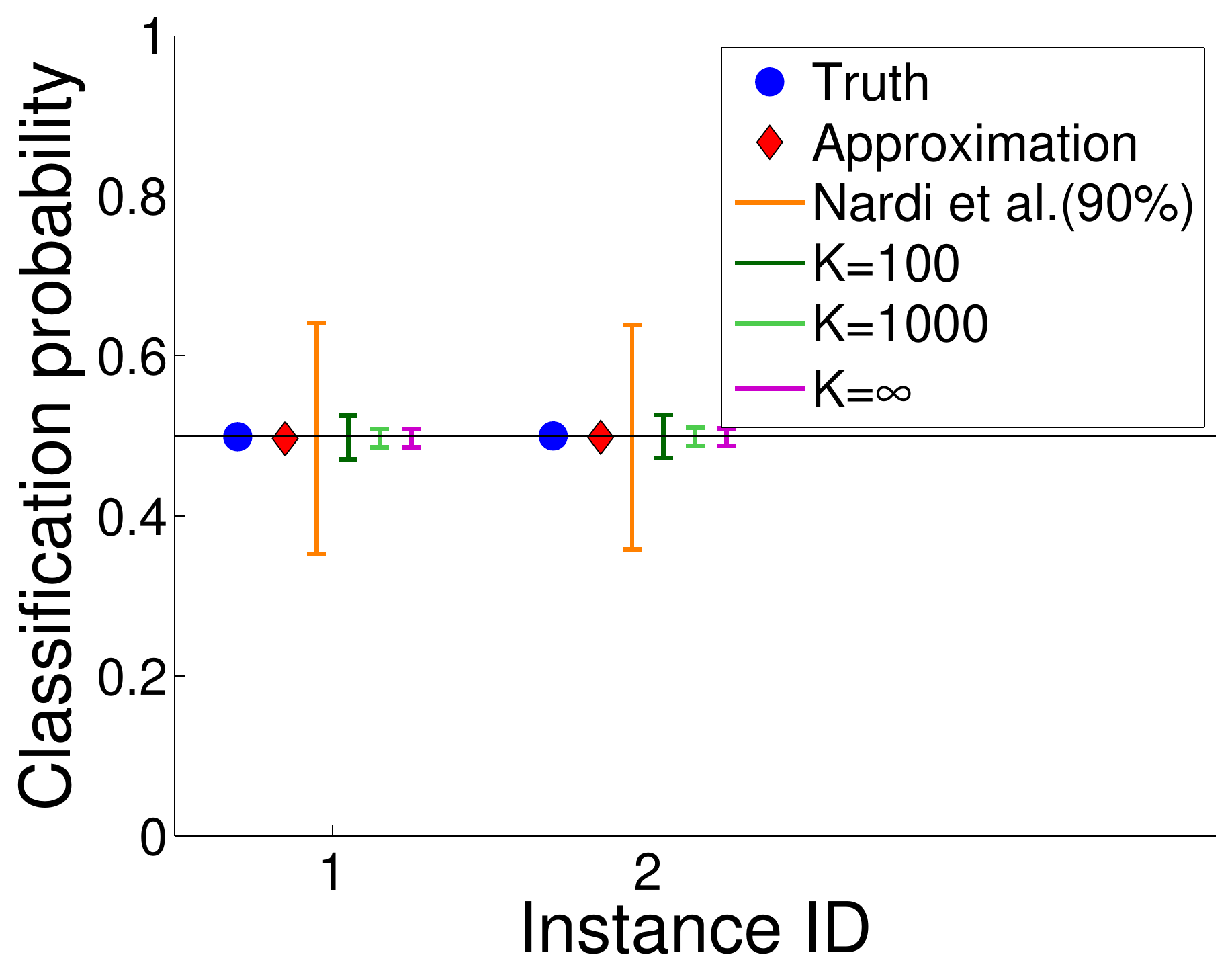}

OLD, $\lambda = 10$
\end{center}
\end{minipage}
\end{tabular}
\end{center}
\caption{The result of proposed bounds for some test instances}
\label{fig:exp1-all}
\end{figure*}

\begin{figure*}[tp]
\begin{center}
\begin{tabular}{ccc}
\begin{minipage}{0.3\hsize}
\begin{center}
\includegraphics[width = \textwidth]{./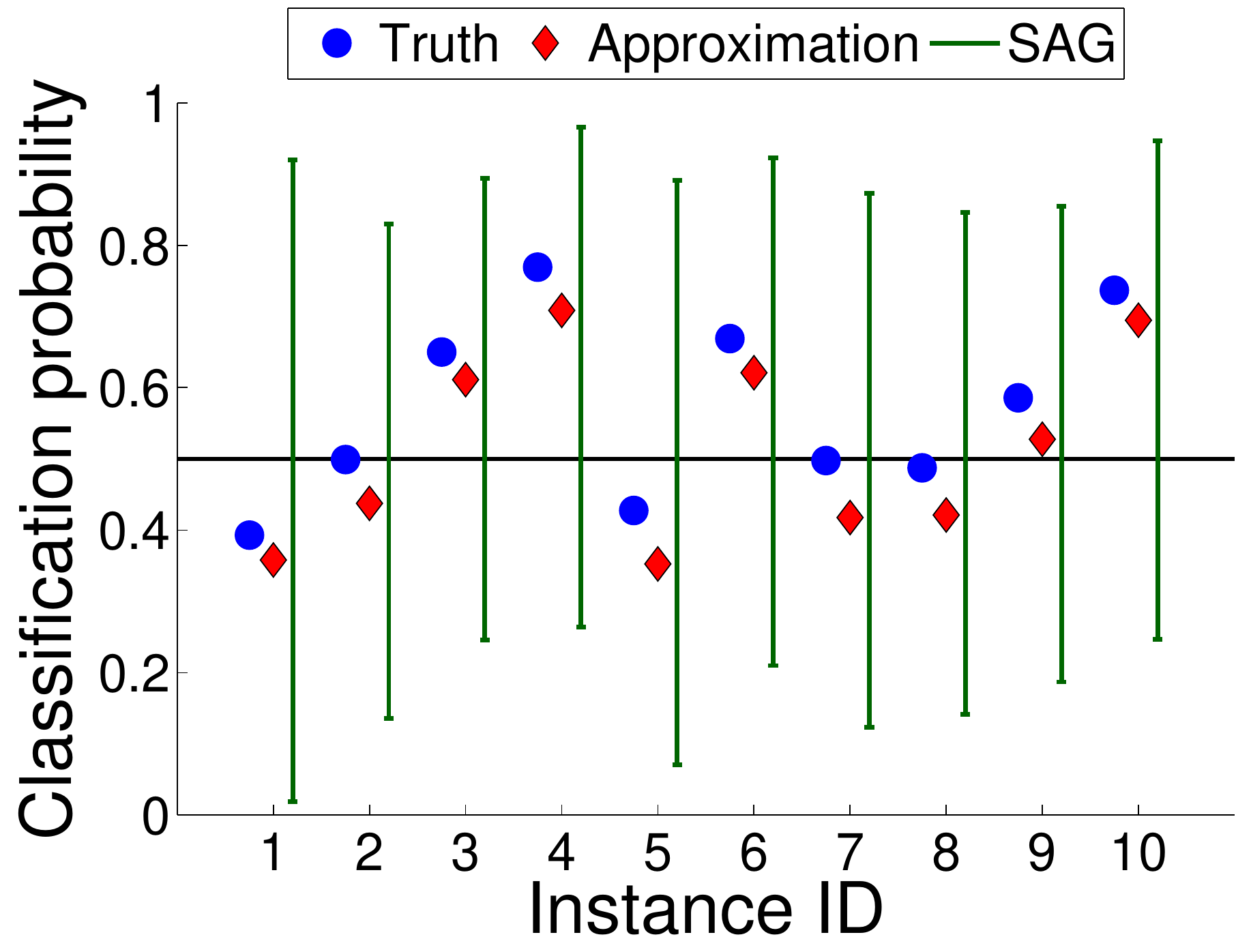}

Musk, $L=10$
\end{center}
\end{minipage}
&
\begin{minipage}{0.3\hsize}
\begin{center}
\includegraphics[width = \textwidth]{./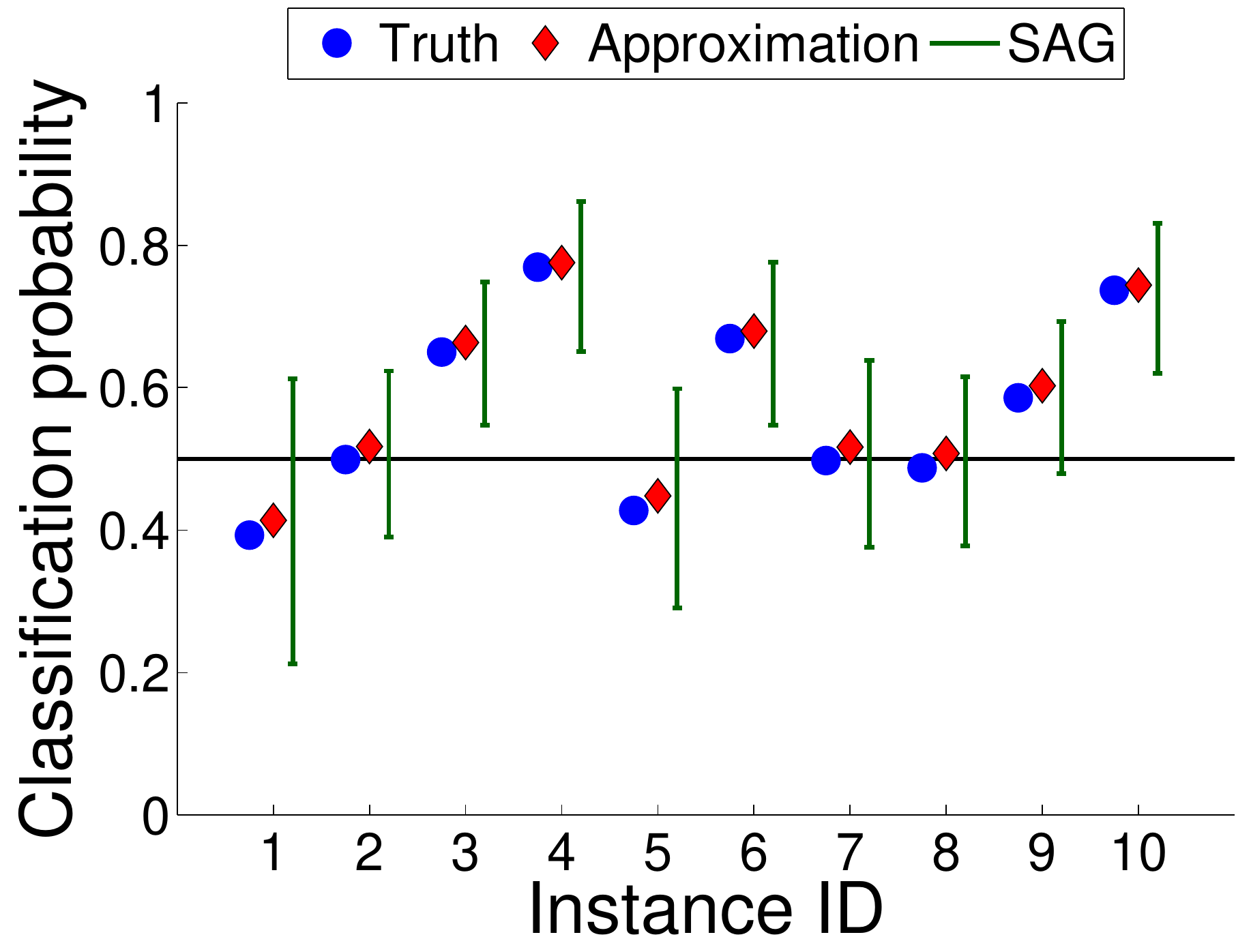}

Musk, $L=100$
\end{center}
\end{minipage}
&
\begin{minipage}{0.3\hsize}
\begin{center}
\includegraphics[width = \textwidth]{./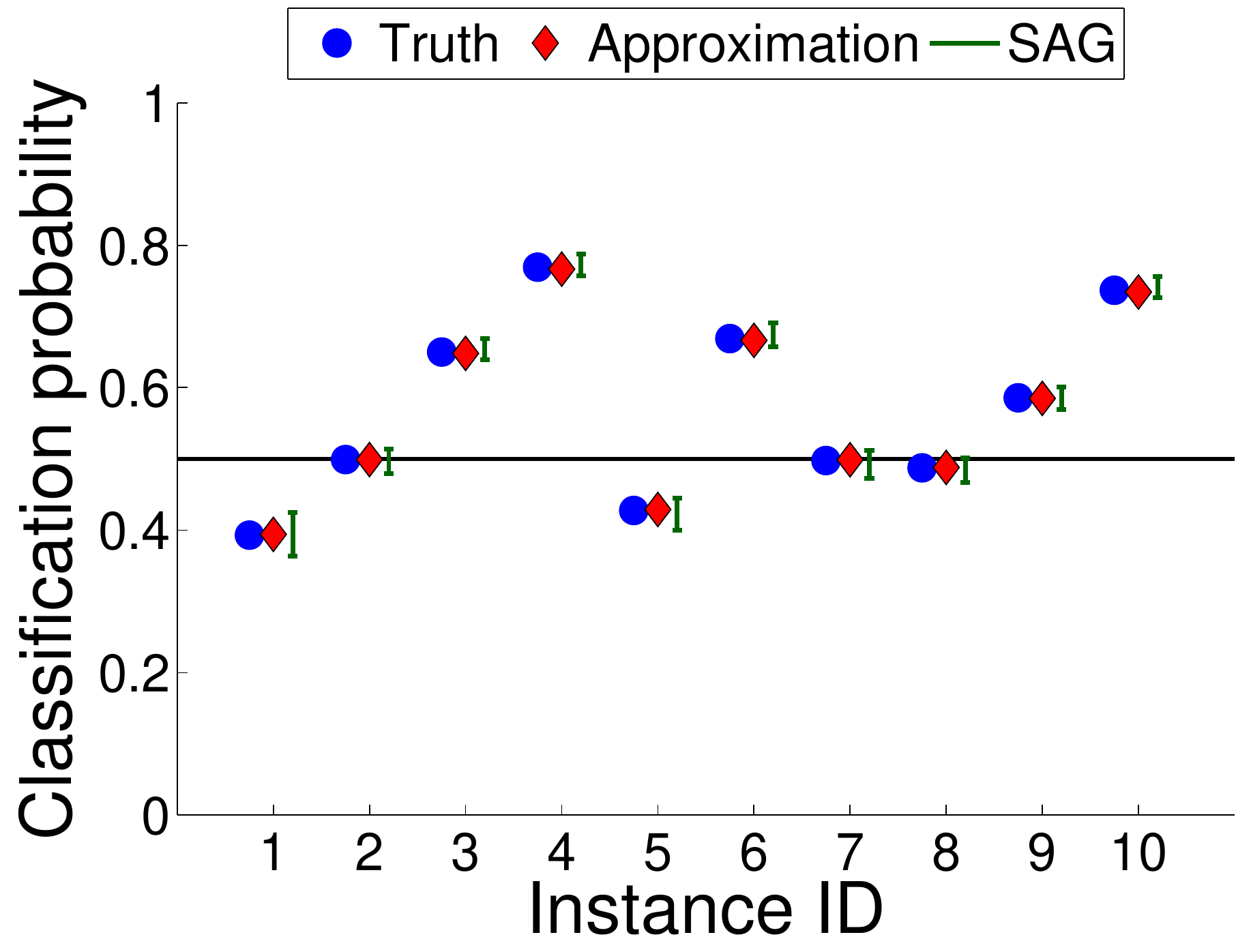}

Musk, $L=1000$
\end{center}
\end{minipage}
\\
\begin{minipage}{0.3\hsize}
\begin{center}
\includegraphics[width = \textwidth]{./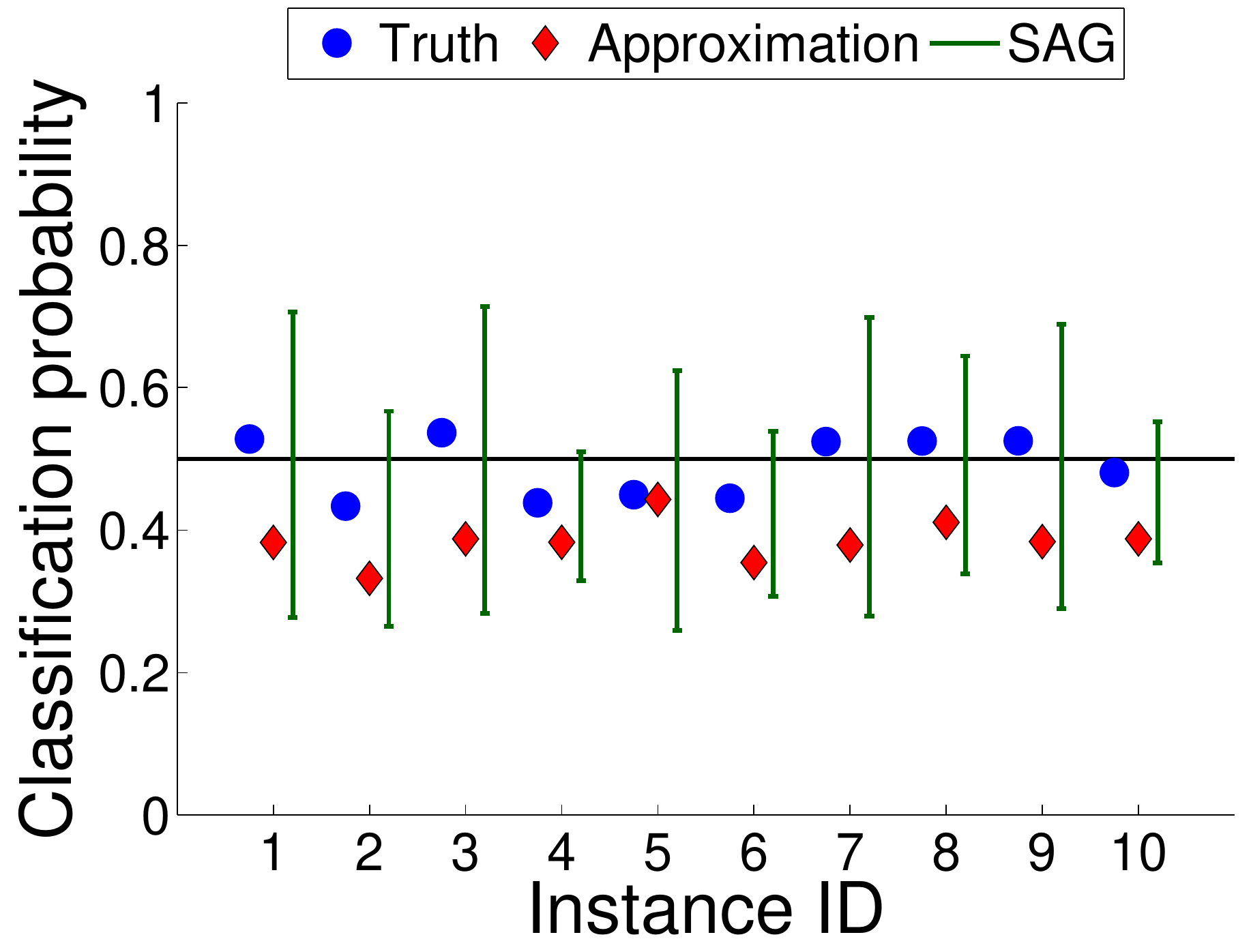}

MGT, $L=10$
\end{center}
\end{minipage}
&
\begin{minipage}{0.3\hsize}
\begin{center}
\includegraphics[width = \textwidth]{./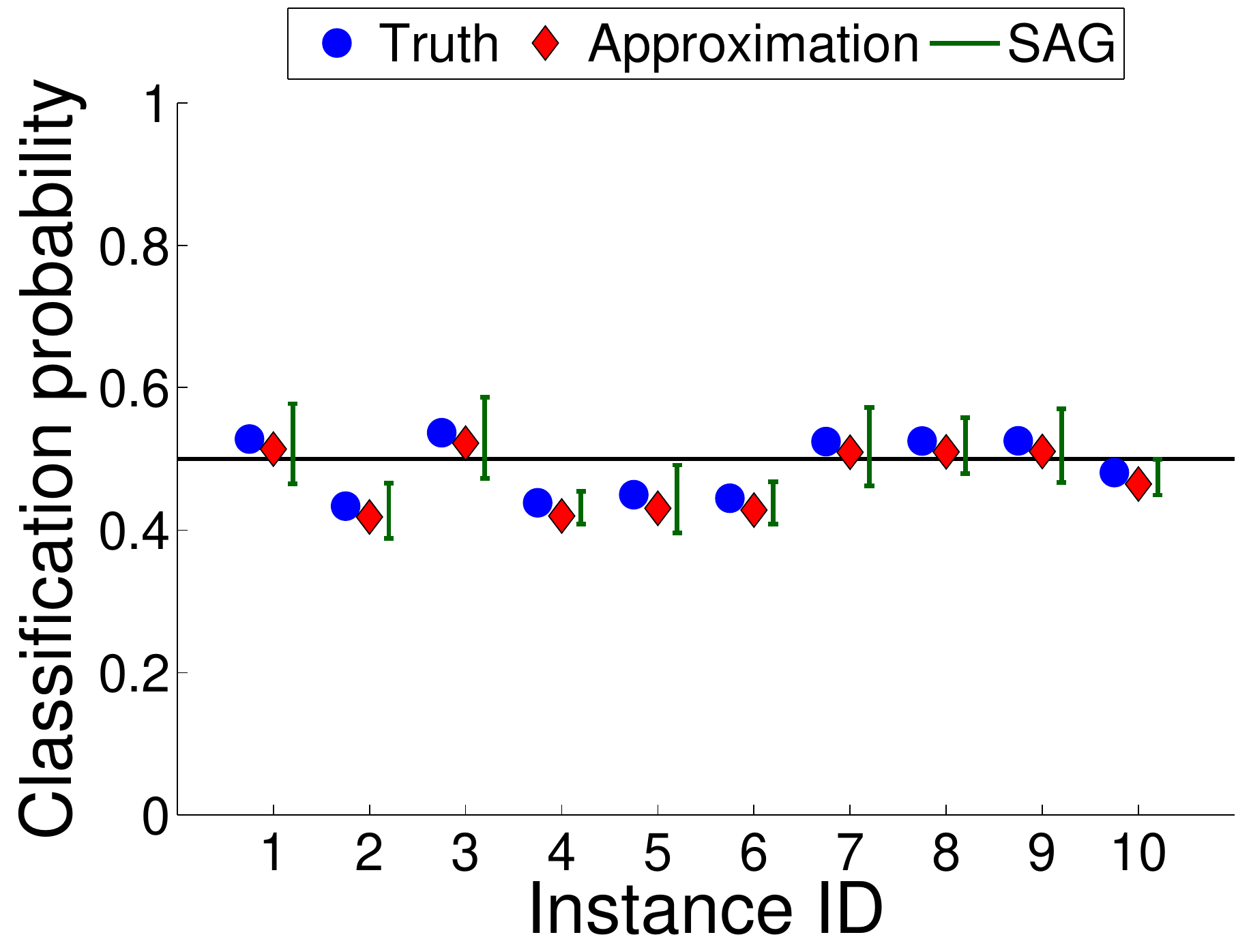}

MGT, $L=100$
\end{center}
\end{minipage}
&
\begin{minipage}{0.3\hsize}
\begin{center}
\includegraphics[width = \textwidth]{./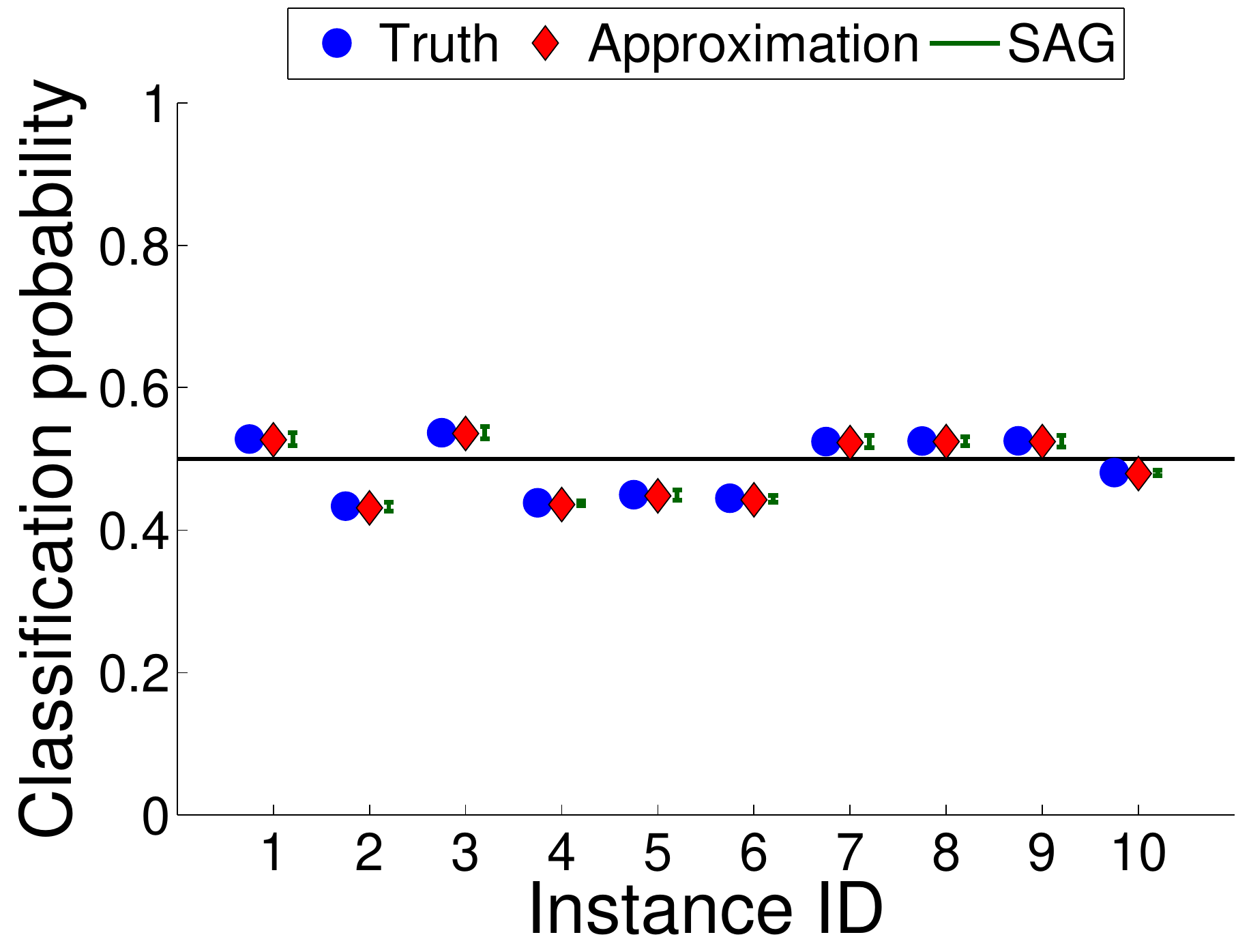}

MGT, $L=1000$
\end{center}
\end{minipage}
\\
\begin{minipage}{0.3\hsize}
\begin{center}
\includegraphics[width = \textwidth]{./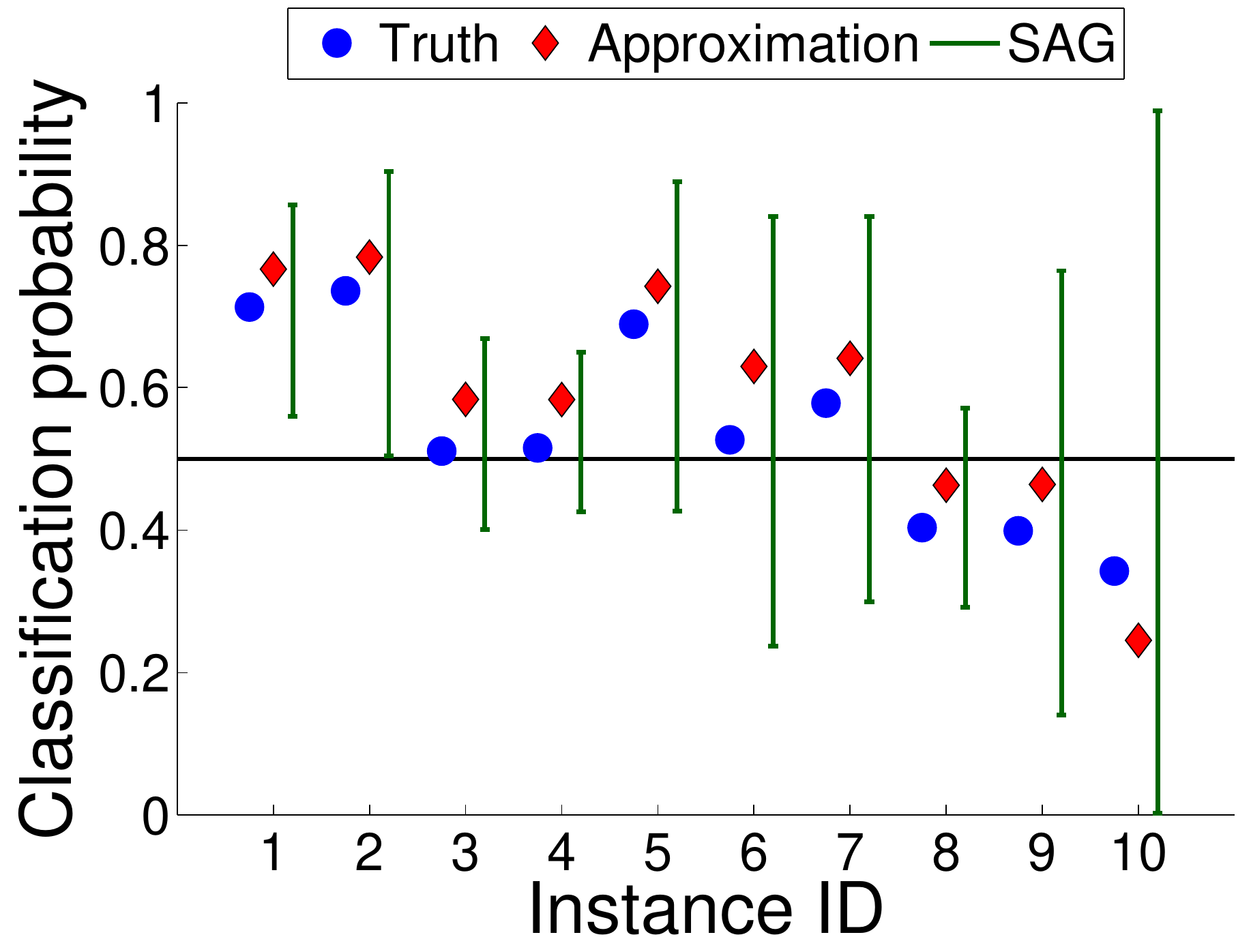}

Spambase, $L=10$
\end{center}
\end{minipage}
&
\begin{minipage}{0.3\hsize}
\begin{center}
\includegraphics[width = \textwidth]{./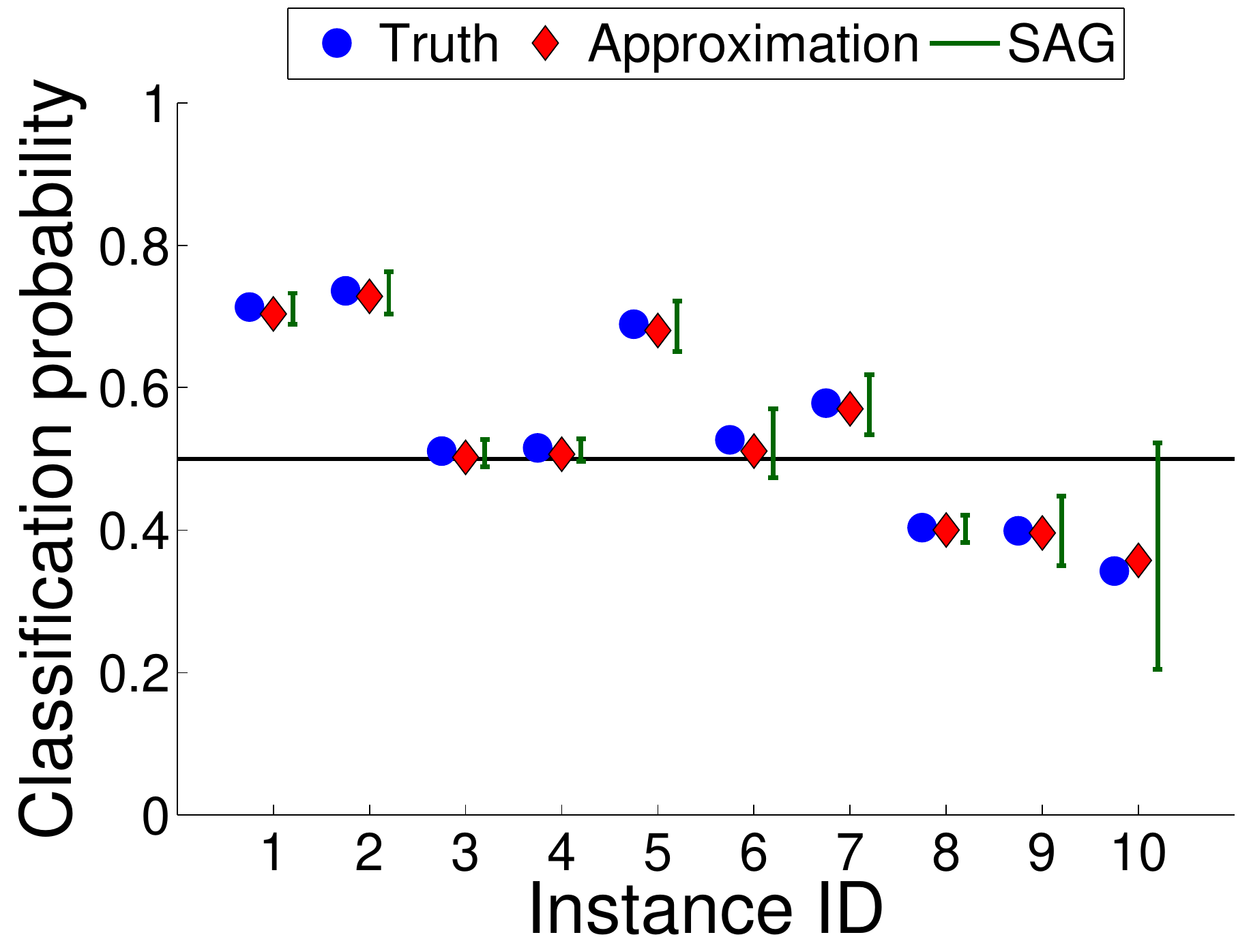}

Spambase, $L=100$
\end{center}
\end{minipage}
&
\begin{minipage}{0.3\hsize}
\begin{center}
\includegraphics[width = \textwidth]{./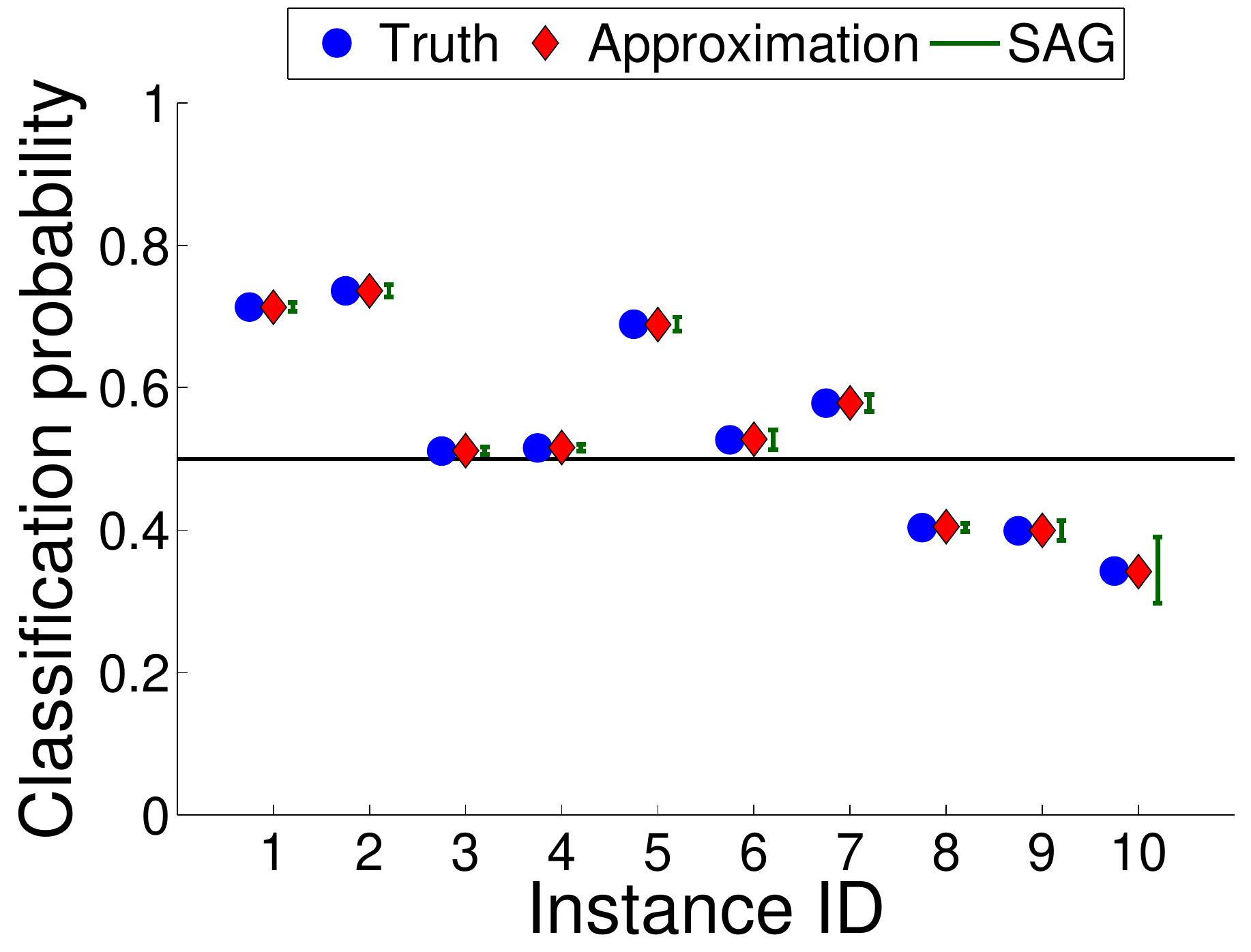}

Spambase, $L=1000$
\end{center}
\end{minipage}
\\
\begin{minipage}{0.3\hsize}
\begin{center}
\includegraphics[width = \textwidth]{./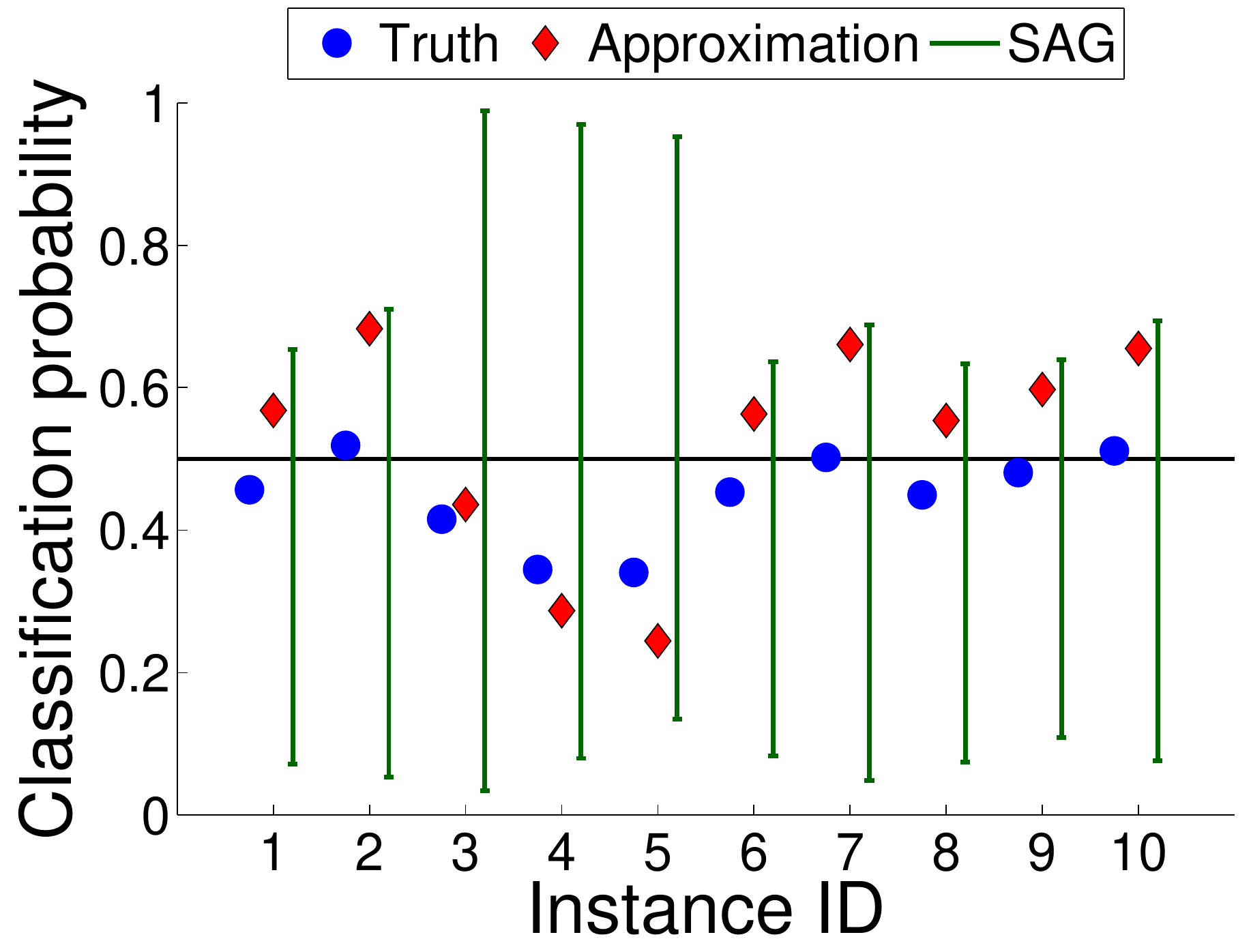}

OLD, $L=10$
\end{center}
\end{minipage}
&
\begin{minipage}{0.3\hsize}
\begin{center}
\includegraphics[width = \textwidth]{./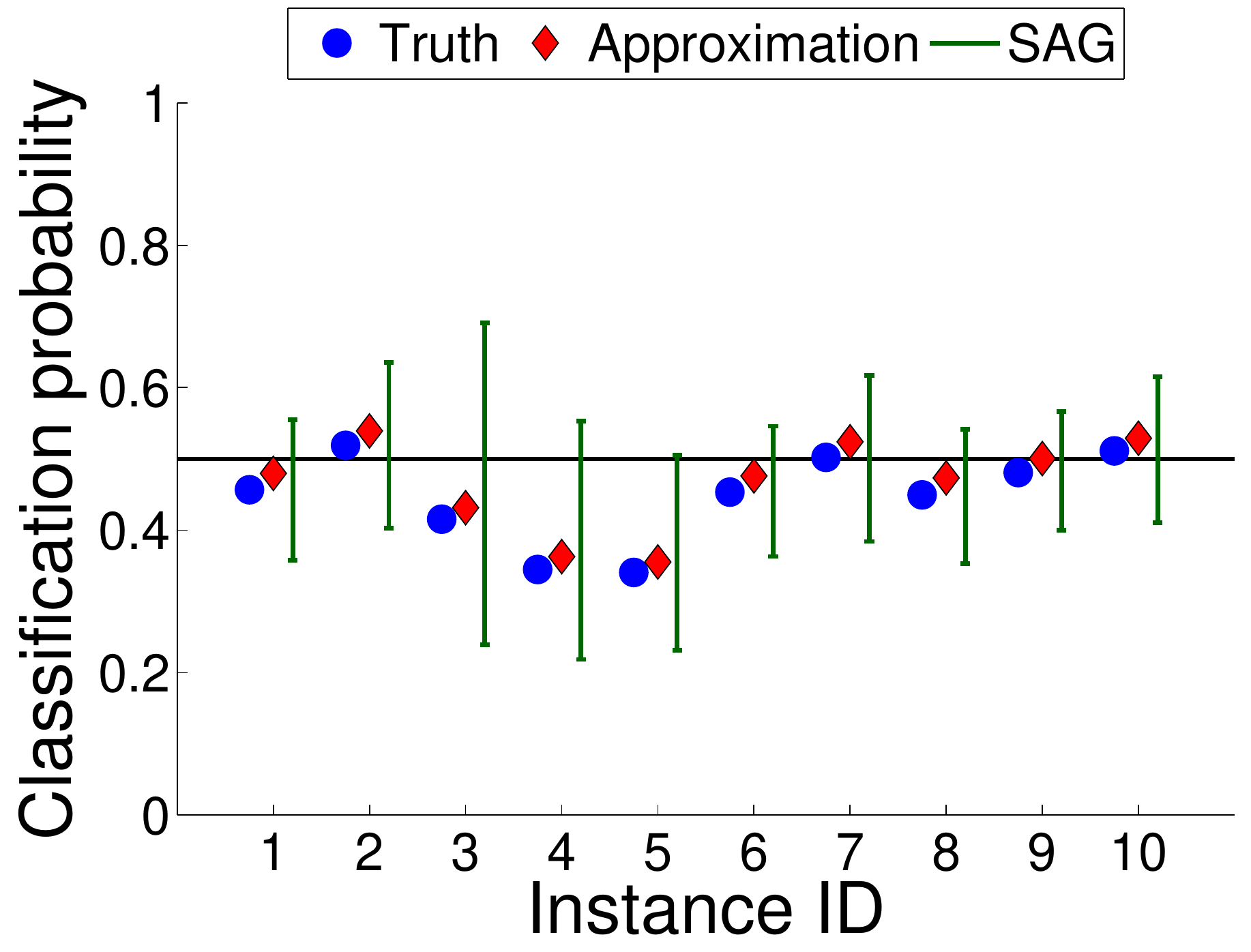}

OLD, $L=100$
\end{center}
\end{minipage}
&
\begin{minipage}{0.3\hsize}
\begin{center}
\includegraphics[width = \textwidth]{./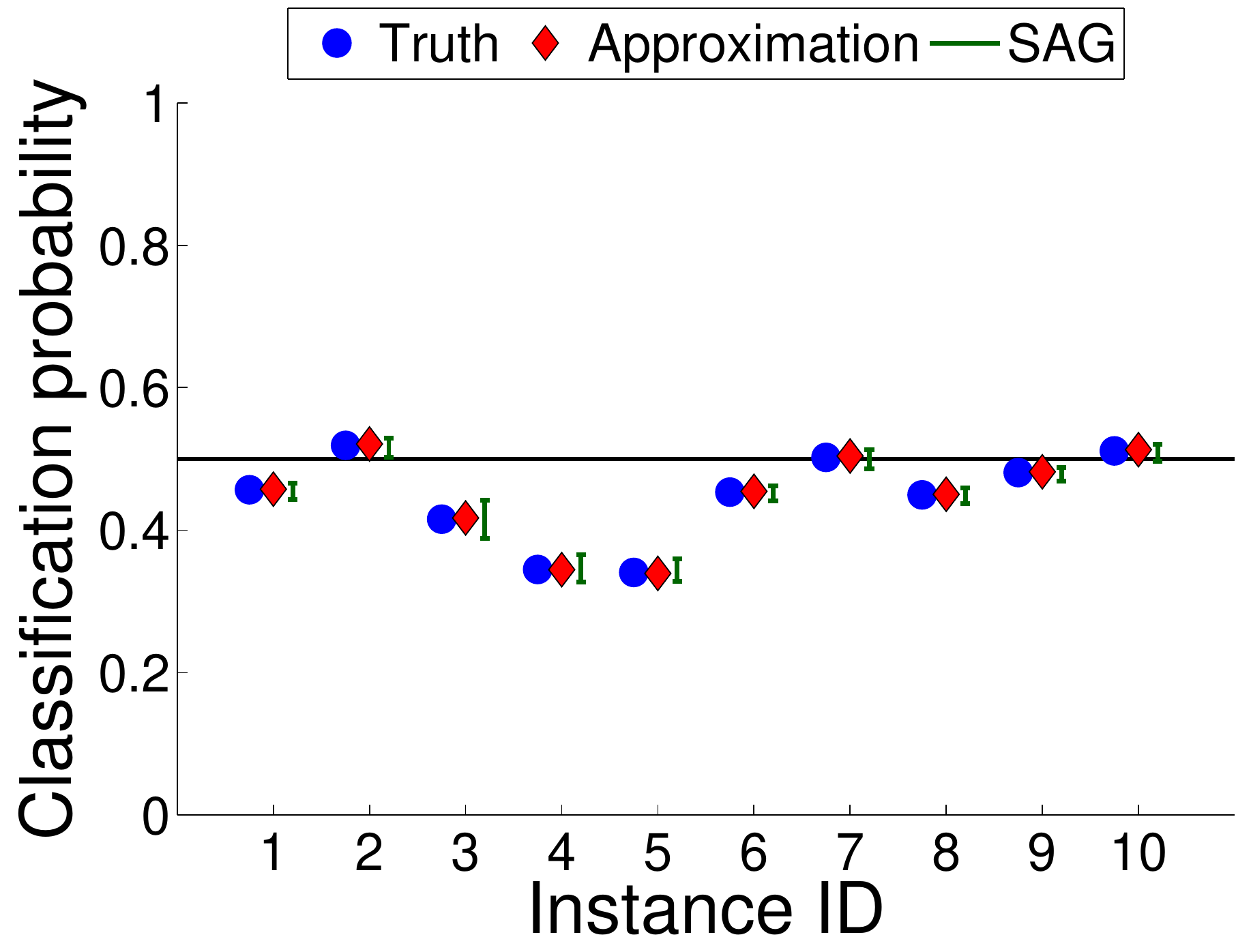}

OLD, $L=1000$
\end{center}
\end{minipage}
\end{tabular}
\end{center}
\caption{Change of bounds for the change of the accuracy of the approximated solution $\vhat{w}$}
\label{fig:change-L-all}
\end{figure*}

\begin{figure*}[tp]
\begin{center}
\begin{tabular}{ccc}
\begin{minipage}{0.3\hsize}
\begin{center}
\includegraphics[width = \textwidth]{./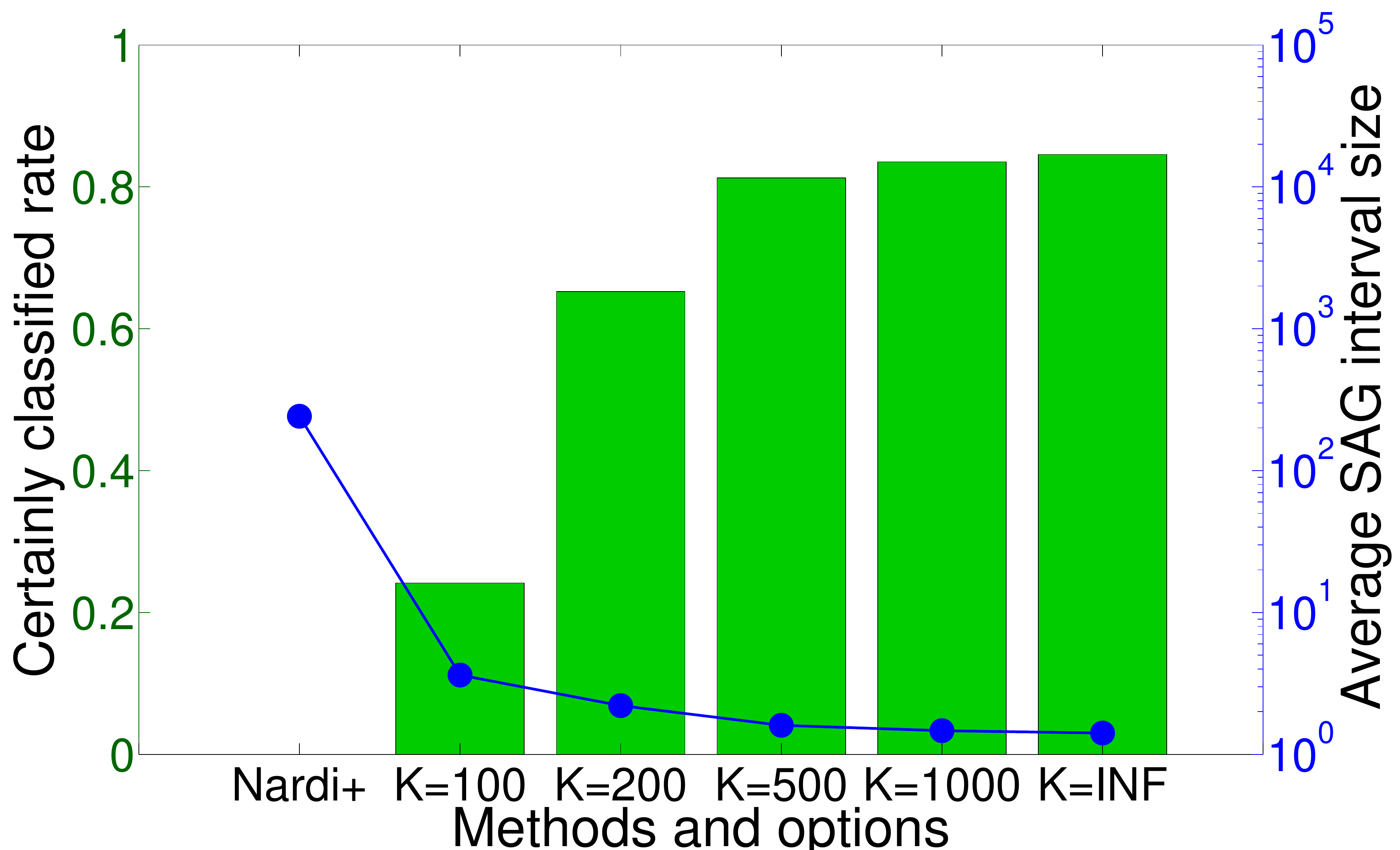}

Musk, $\lambda = 0.1$
\end{center}
\end{minipage}
&
\begin{minipage}{0.3\hsize}
\begin{center}
\includegraphics[width = \textwidth]{./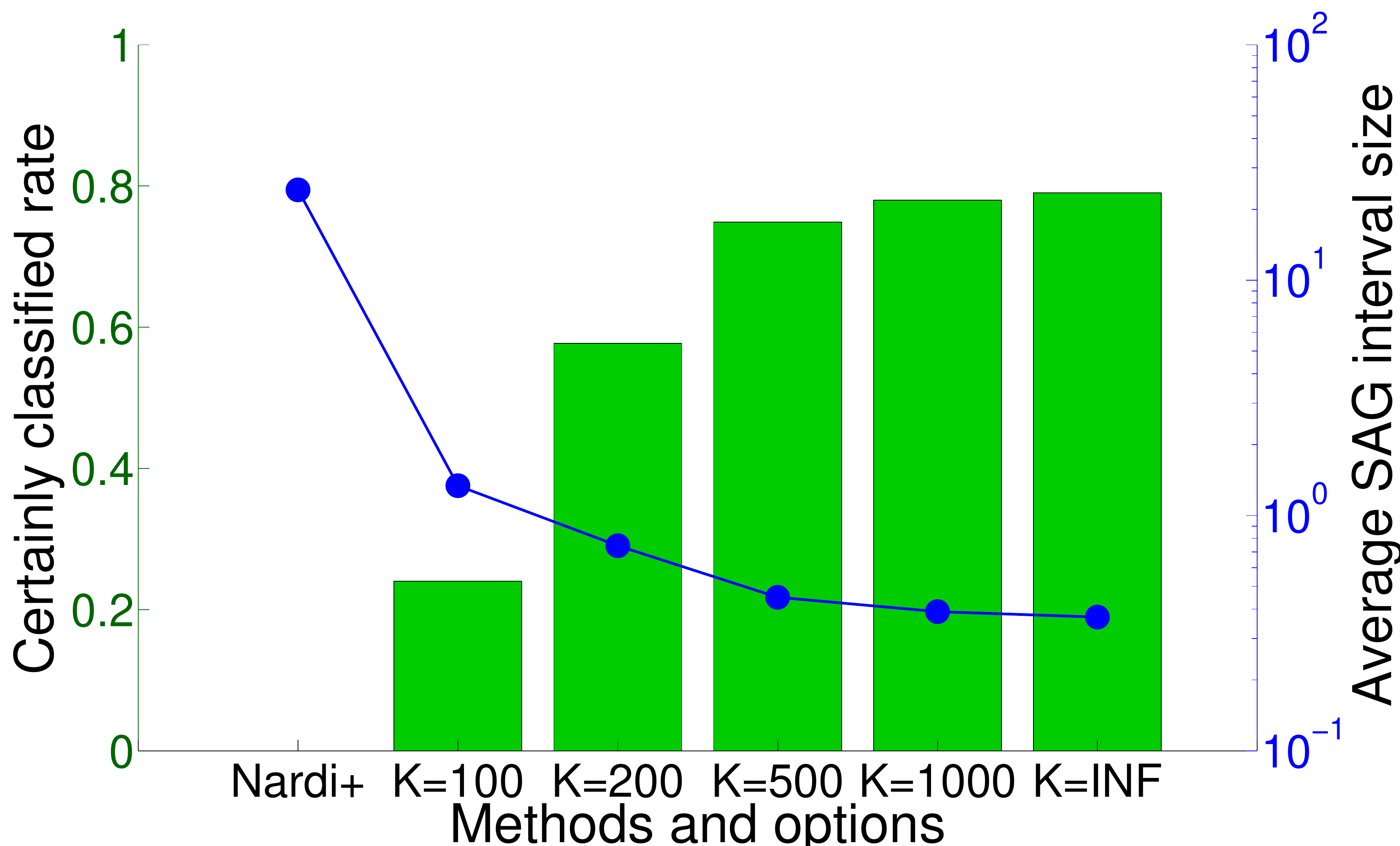}

Musk, $\lambda = 1$
\end{center}
\end{minipage}
&
\begin{minipage}{0.3\hsize}
\begin{center}
\includegraphics[width = \textwidth]{./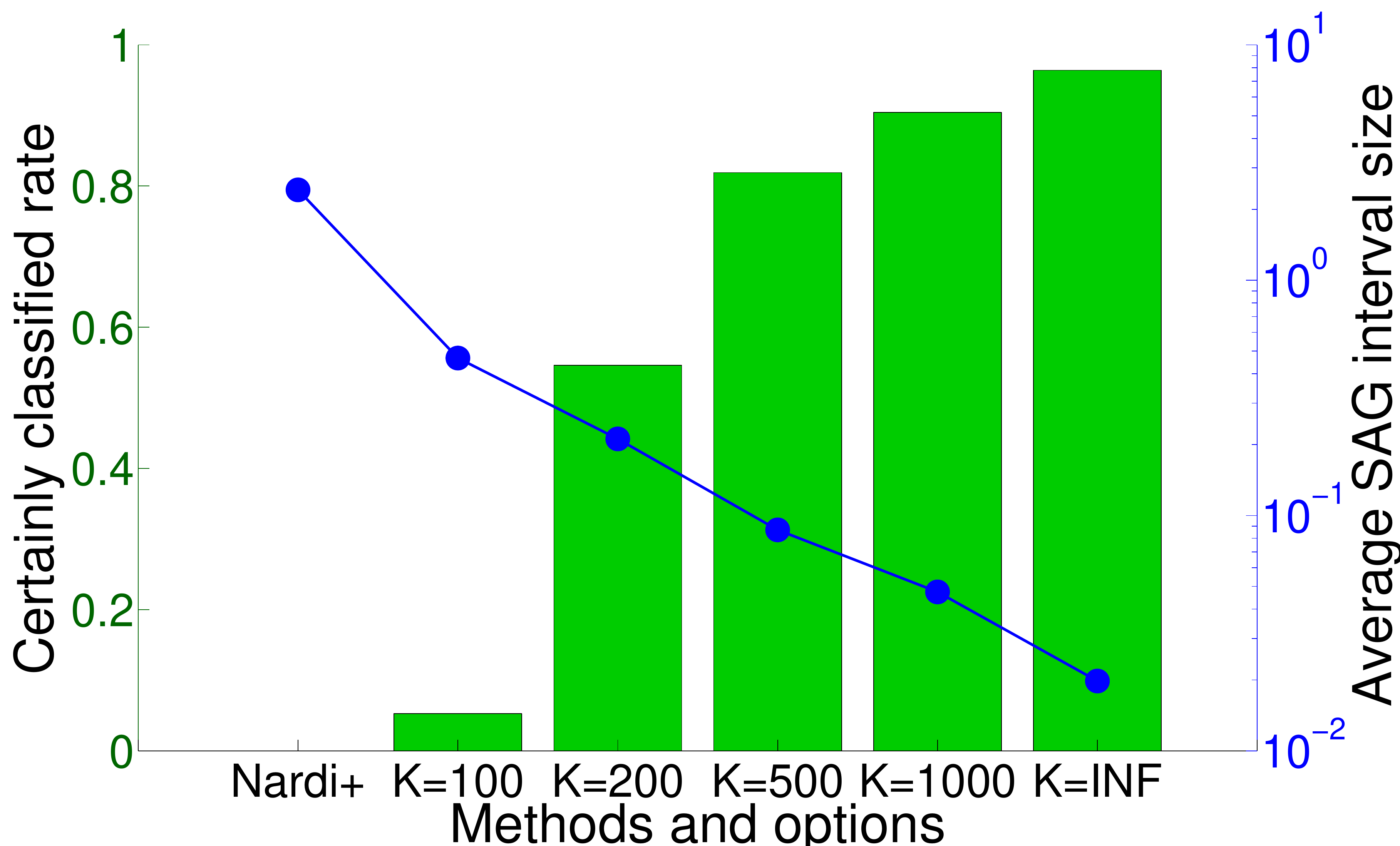}

Musk, $\lambda = 10$
\end{center}
\end{minipage}
\\
\begin{minipage}{0.3\hsize}
\begin{center}
\includegraphics[width = \textwidth]{./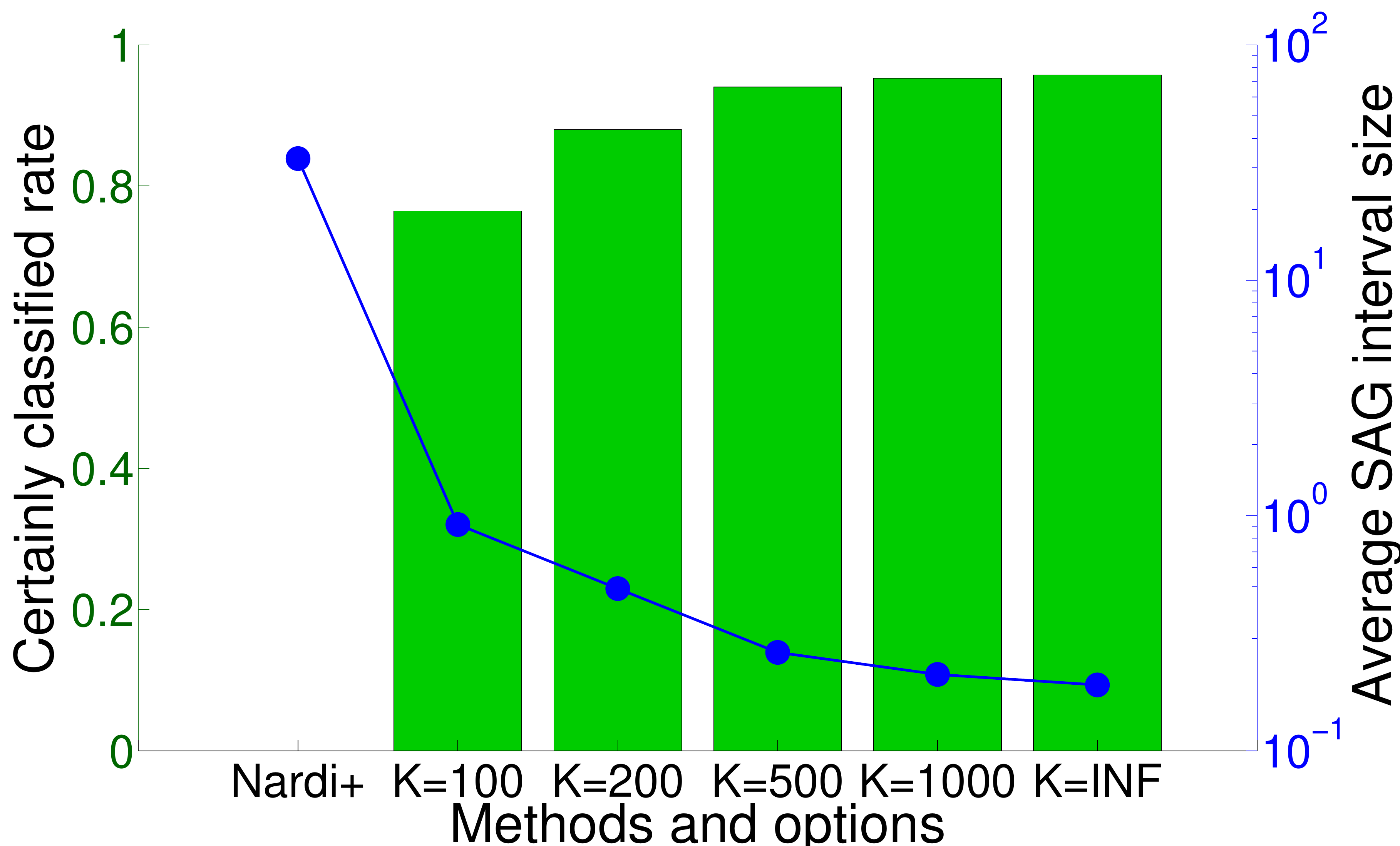}

MGT, $\lambda = 0.1$
\end{center}
\end{minipage}
&
\begin{minipage}{0.3\hsize}
\begin{center}
\includegraphics[width = \textwidth]{./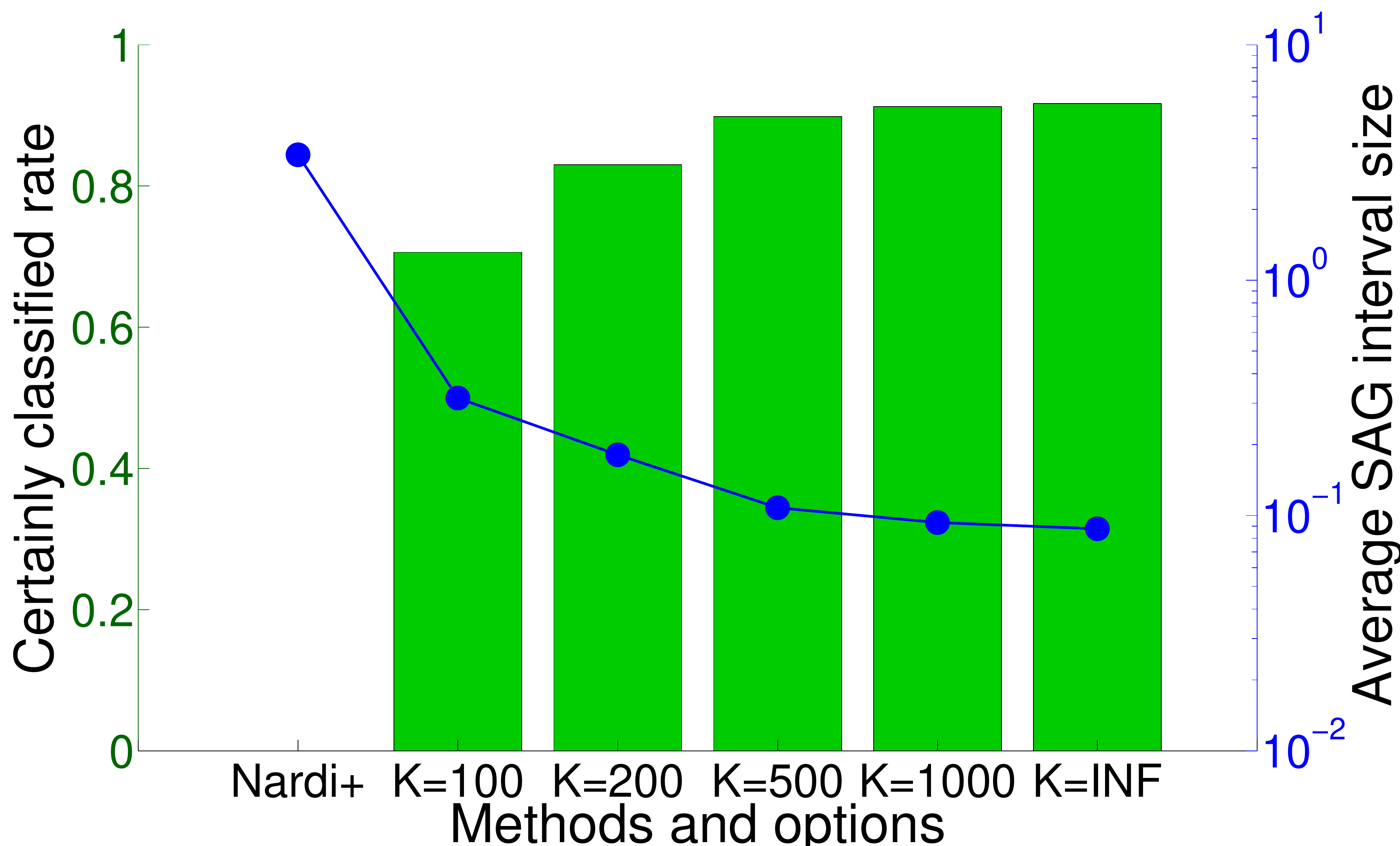}

MGT, $\lambda = 1$
\end{center}
\end{minipage}
&
\begin{minipage}{0.3\hsize}
\begin{center}
\includegraphics[width = \textwidth]{./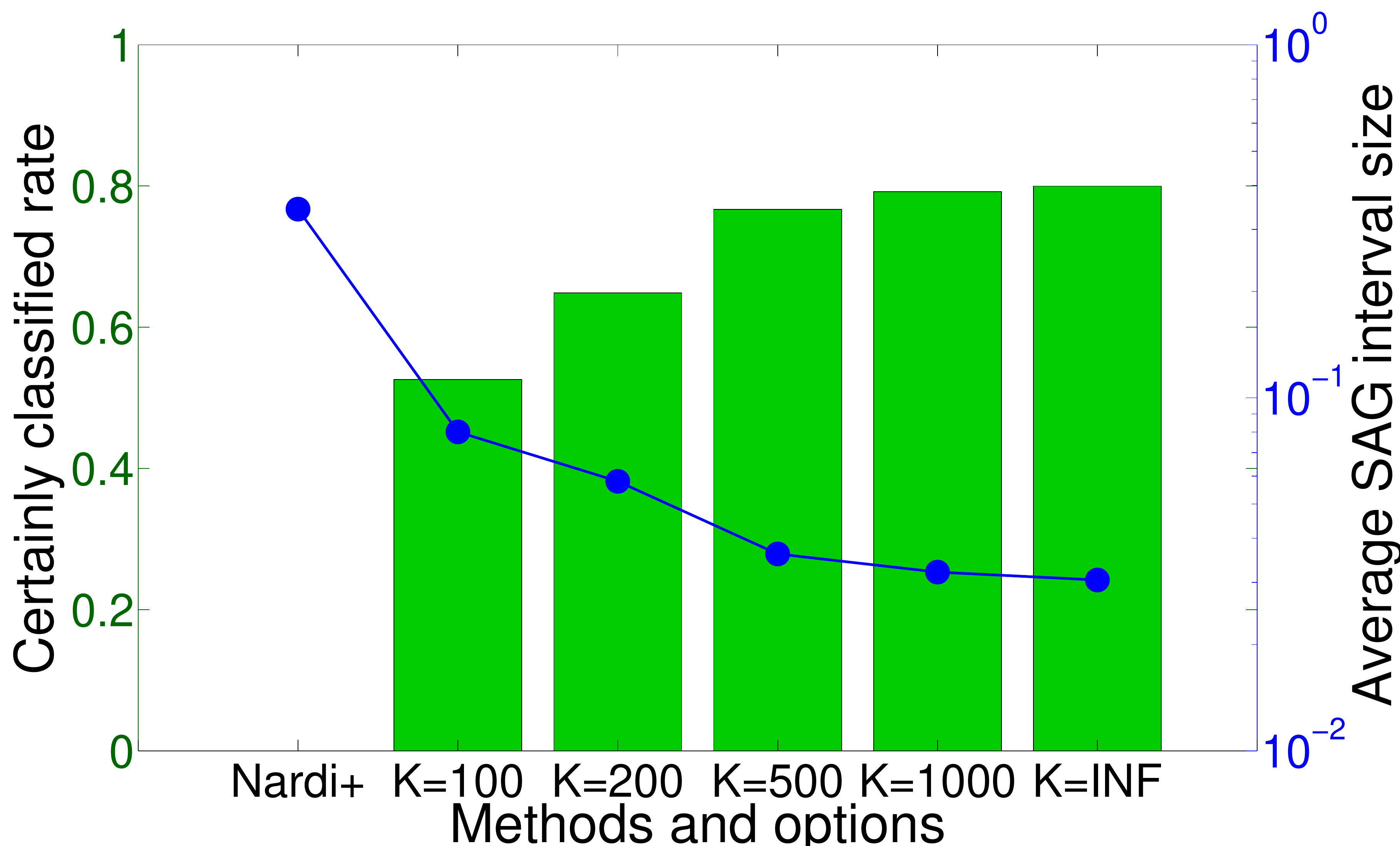}

MGT, $\lambda = 10$
\end{center}
\end{minipage}
\\
\begin{minipage}{0.3\hsize}
\begin{center}
\includegraphics[width = \textwidth]{./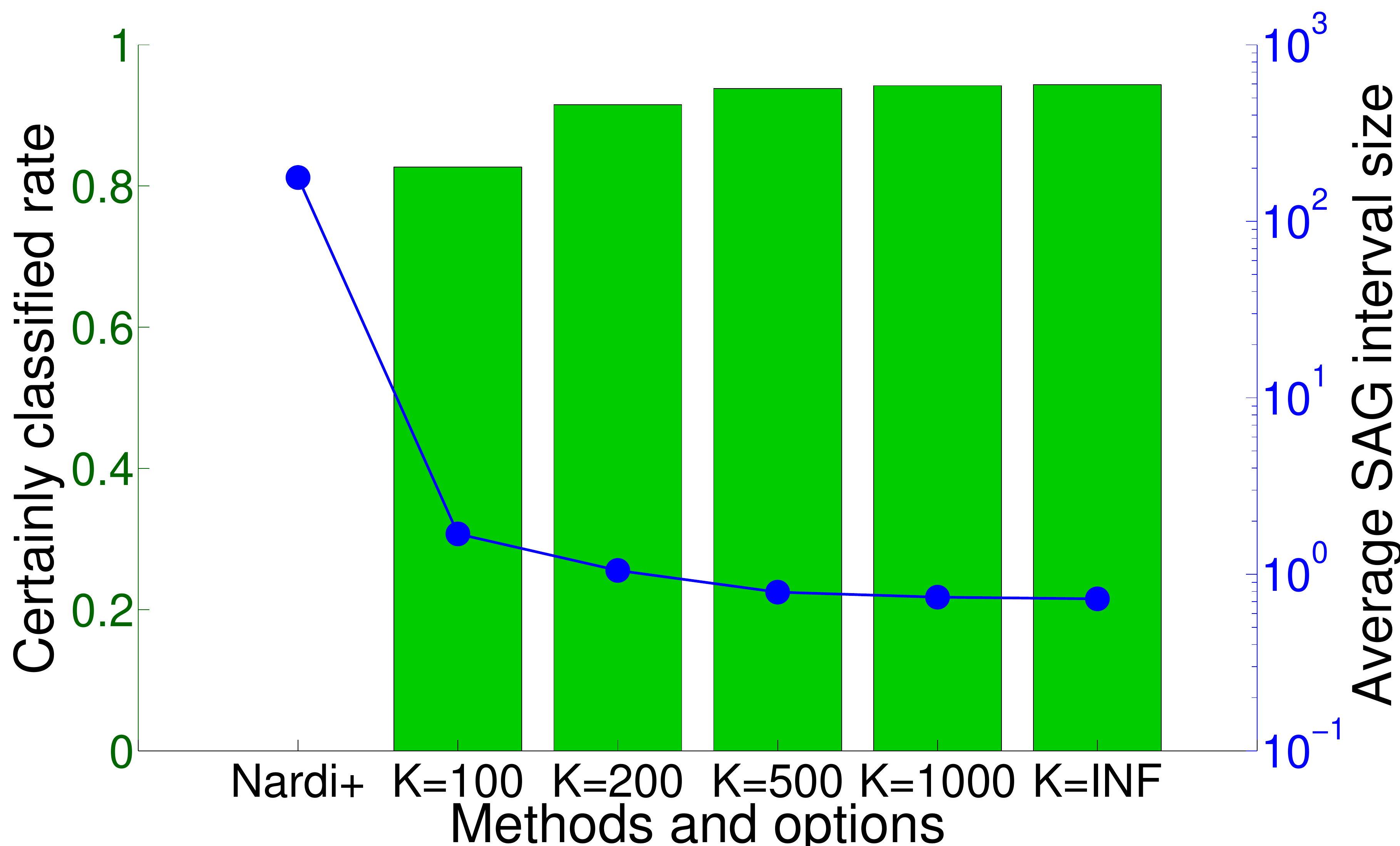}

Spambase, $\lambda = 0.1$
\end{center}
\end{minipage}
&
\begin{minipage}{0.3\hsize}
\begin{center}
\includegraphics[width = \textwidth]{./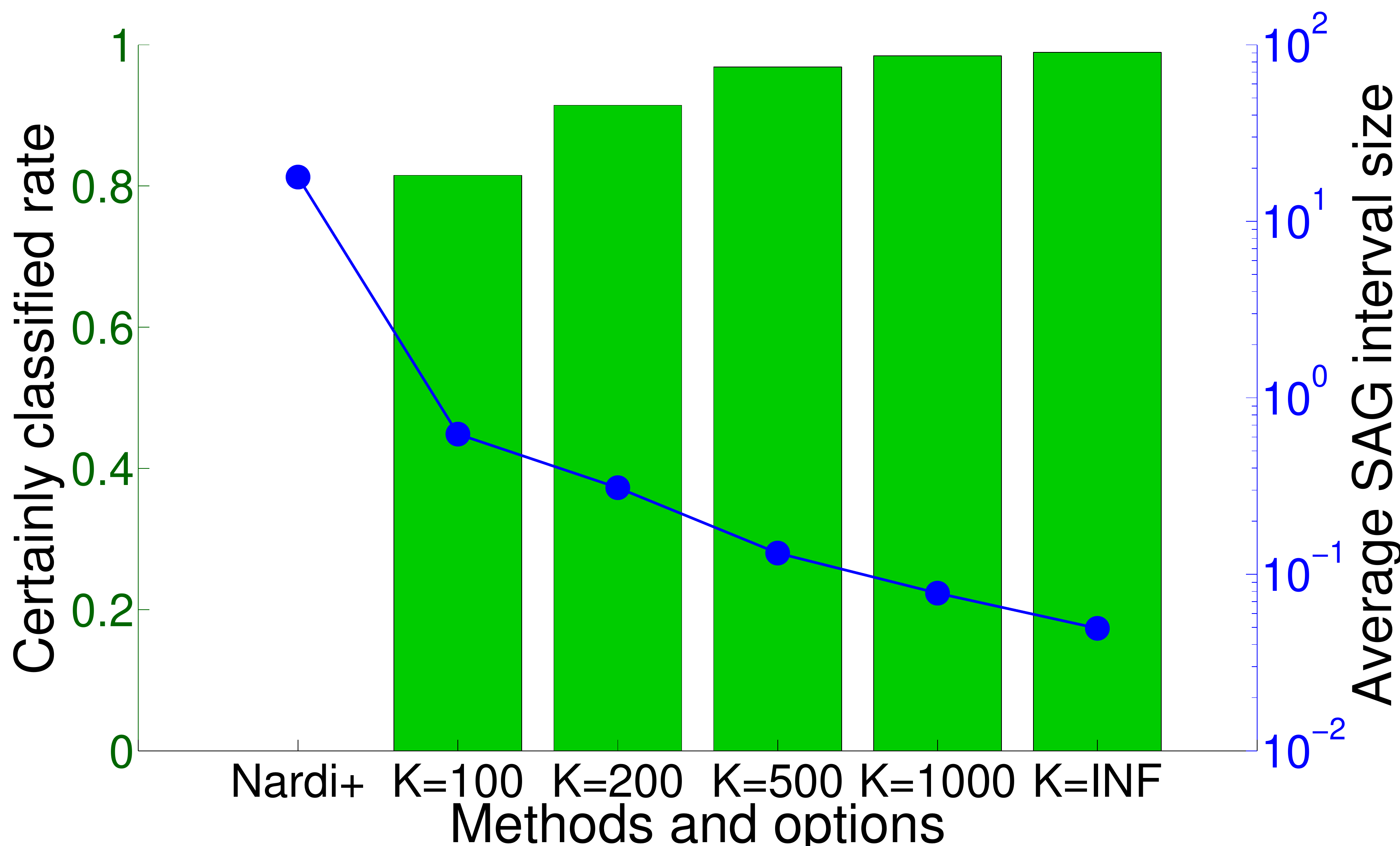}

Spambase, $\lambda = 1$
\end{center}
\end{minipage}
&
\begin{minipage}{0.3\hsize}
\begin{center}
\includegraphics[width = \textwidth]{./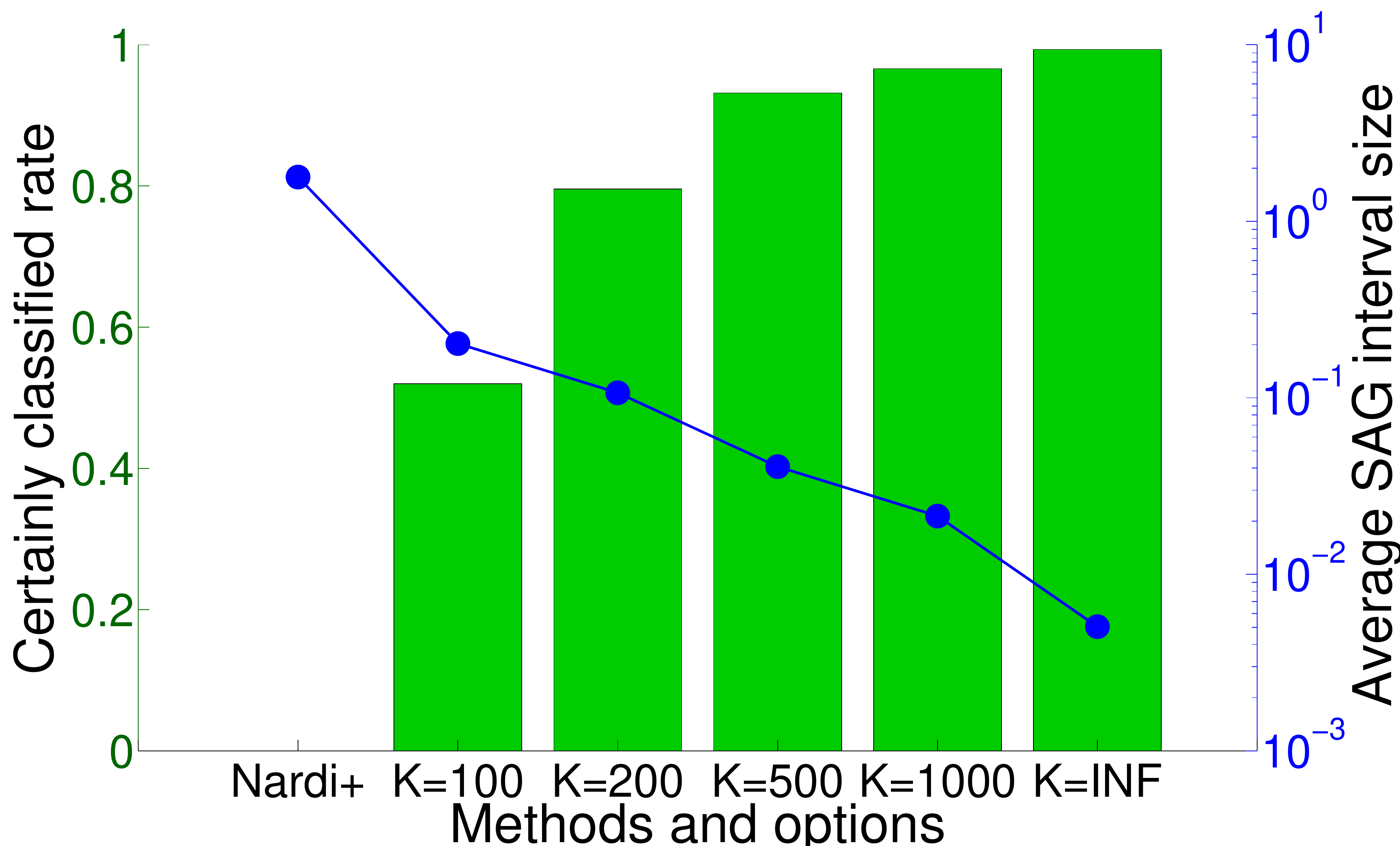}

Spambase, $\lambda = 10$
\end{center}
\end{minipage}
\\
\begin{minipage}{0.3\hsize}
\begin{center}
\includegraphics[width = \textwidth]{./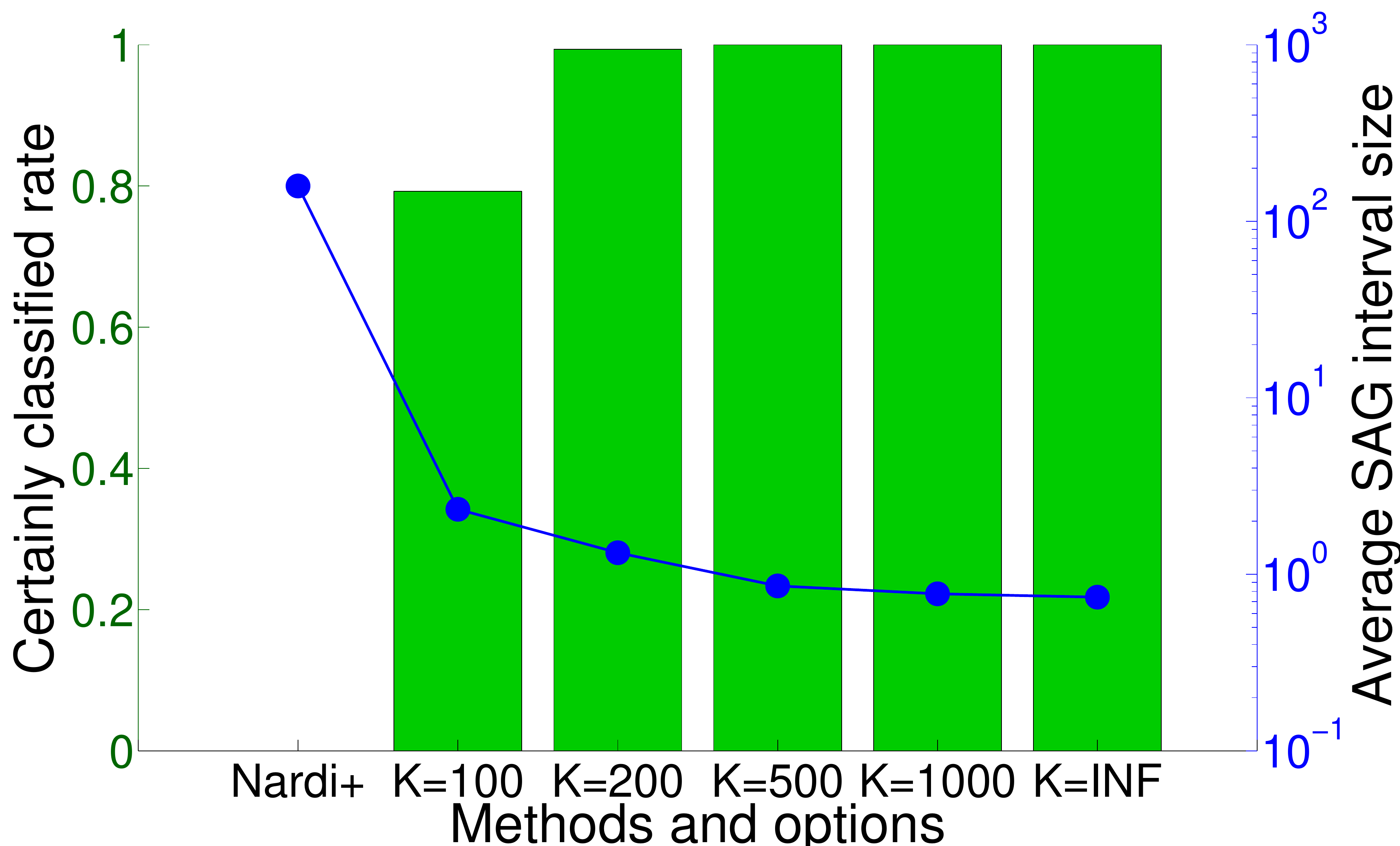}

OLD, $\lambda = 0.1$
\end{center}
\end{minipage}
&
\begin{minipage}{0.3\hsize}
\begin{center}
\includegraphics[width = \textwidth]{./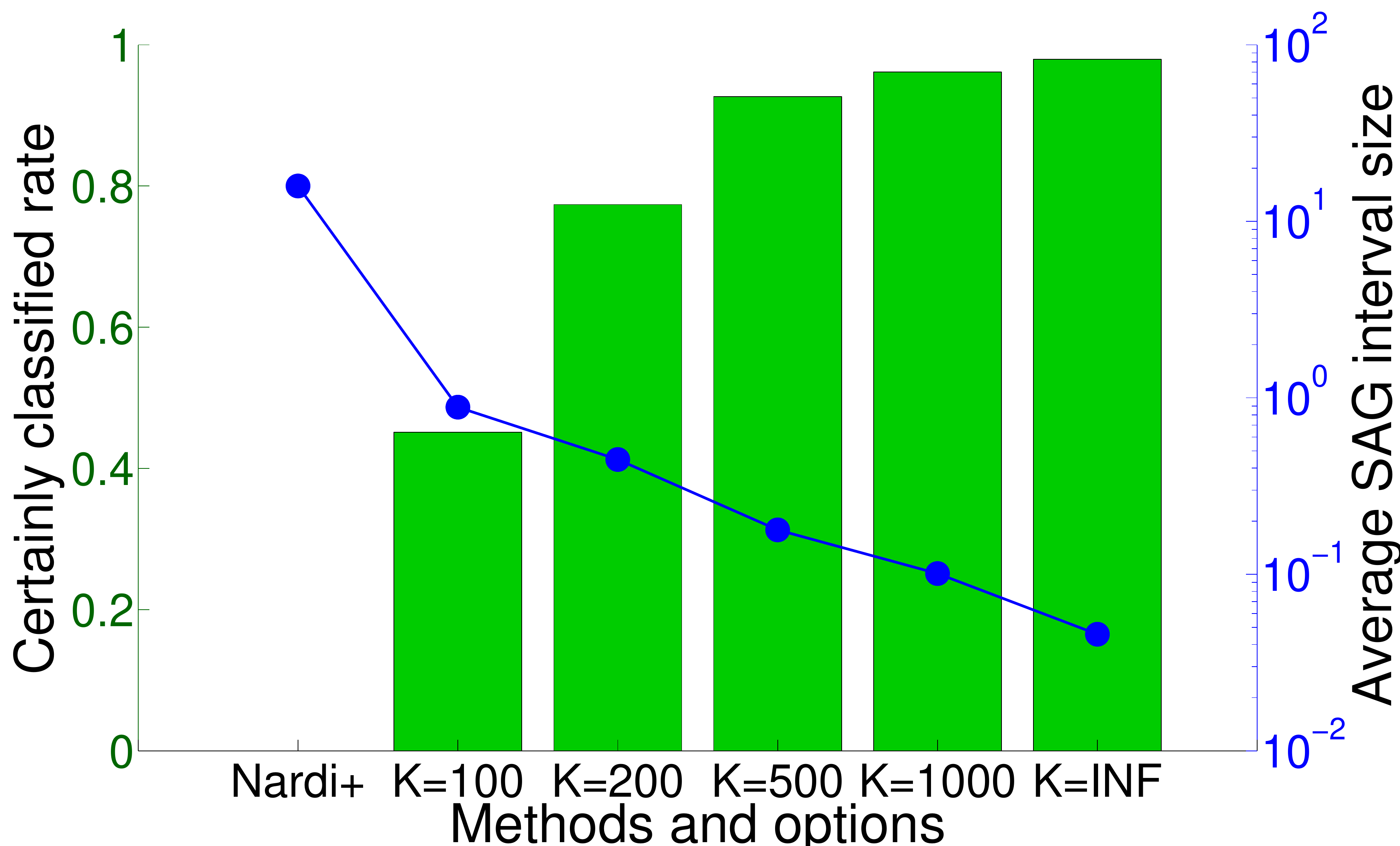}

OLD, $\lambda = 1$
\end{center}
\end{minipage}
&
\begin{minipage}{0.3\hsize}
\begin{center}
\includegraphics[width = \textwidth]{./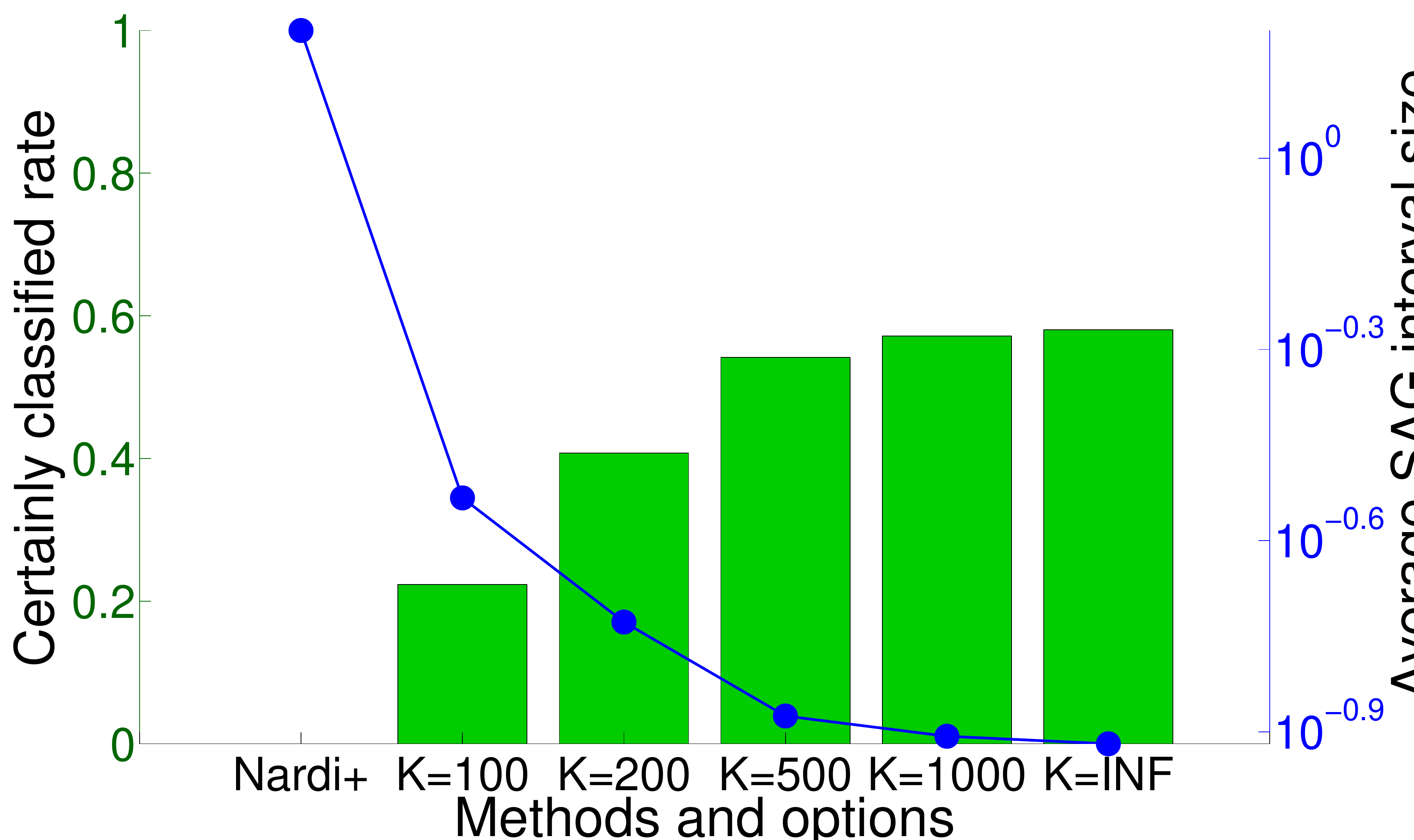}

OLD, $\lambda = 10$
\end{center}
\end{minipage}
\end{tabular}
\end{center}
\caption{Rate of successfully classified test instances and the average of size of bounds by different bound calculations (Nardi's, $K\in\{100, 200, 500, 1000, \infty\}$)}
\label{fig:clasify_mean-all}
\end{figure*}

We first investigated
several properties
of the SAG method for the logistic regression \eqref{eq:Logisic-loss}
by applying it to four benchmark datasets
summarized in Table~\ref{tab:dataset}.

First,
in \figurename~\ref{fig:exp1-all},
we compared the tightness of the bounds 
on the predicted classification probabilities for two randomly chosen validation instances
$\bm x_i$
defined as
$p(\bm x_i) := 1/(1 + \exp(- \bm x_i^\top \bm w^*))$, 
$i = 1, 2$. 
In the figure,
four types of intervals are plotted. 
The orange ones
are
Nardi {\it et al.}'s probabilistic bounds \cite{nardi2012achieving} with the probability 90\% (see \eqref{eq:nardi-bound}). 
The blue, green and purple ones
were obtained
by the SAG method
with $K = 100, 1000 \text{ and } \infty$,
respectively,
where
$K$ is the number of pieces in the piecewise-linear approximations. 
Here,
$K=\infty$ means that
the true loss function $\ell$ was used
as the two surrogate loss functions $\phi$ and $\psi$. 
The results clearly indicate that
bounds obtained by the SAG method are clearly tighter than those by Nardi et al.'s approach
despite that
the latter is probabilistic and cannot be securely computed in practice. 
When comparing the results with different $K$,
we can confirm that large $K$ yields tighter bounds.
The results with $K=1000$ are almost as tight as those obtained with the true loss function ($K=\infty$). 

\figurename~\ref{fig:change-L-all} also shows similar plots.
Here,
we investigated how the tightness of the SAG bounds changes
with the quality of the approximate solution
$\hat{\bm w}$. 
In order to consider approximate solutions with different levels of quality, 
we computed three approximate solutions with
$L = 10, 100 \text{ and } 1000$
in Nardi et al.'s approach,
where $L$ is the sample size used for approximating the logistic function
(see \S\ref{sect:existing}). 
The results clearly indicate that
tighter bounds are obtained
when the quality of the approximate solution is higher (i.e., larger $L$). 

\figurename~\ref{fig:clasify_mean-all} illustrates
how the SAG bounds can be useful in binary classification problems.
In binary classification problems,
if a lower bound of the classification probability is greater than 0.5,
the instance would be classified to positive class. 
Similarly, 
if an upper bound of the classification probability is smaller than 0.5,
the instance would be classified to negative class. 
The green histograms in the figure indicate 
how many percent of the validation instances
can be certainly classified as positive or negative class
based on the SAG bounds.
The blue lines indicate the average length of the SAG intervals,
i.e., 
the difference between the upper and the lower bounds. 
The results clearly indicate that,
as the number of pieces $K$ increases in the SAG method, 
the tighter bounds are obtained,
and more validation instances can be certainly classified.
On the other hand,
probabilistic bounds in Nardi et al.'s approach
cannot provide certain classification results
because their bounds are too loose. 

\begin{table}[tp]
\caption{Computation Time for obtaining bounds per instance}
\label{tab:time}
\begin{center}
\begin{tabular}{c|rrrr}\hline
$K$&100&200&500&1000\\\hline
Time(s)&381.089&790.674&1877.176&3717.569\\\hline
\end{tabular}
\end{center}
\end{table}

Finally, 
we examined the computation time
for computing the SAG bounds. 
Table~\ref{tab:time} 
shows the computation time per instance with
$K=\{100,200,500,1000\}$.
The results suggest that 
the computational cost is almost linear in $K$,
meaning that 
the computation of piecewise-linear functions dominates the cost.
Although this task can be completely parallelized per instance, 
further speed-up would be desired when $K$ is larger than 1000.

\subsection{Poisson and exponential regressions}

\begin{figure*}[tp]
\begin{center}
\begin{tabular}{cc}
\begin{minipage}[t]{0.45\hsize}
\begin{center}
\includegraphics[width = 0.8\textwidth]{./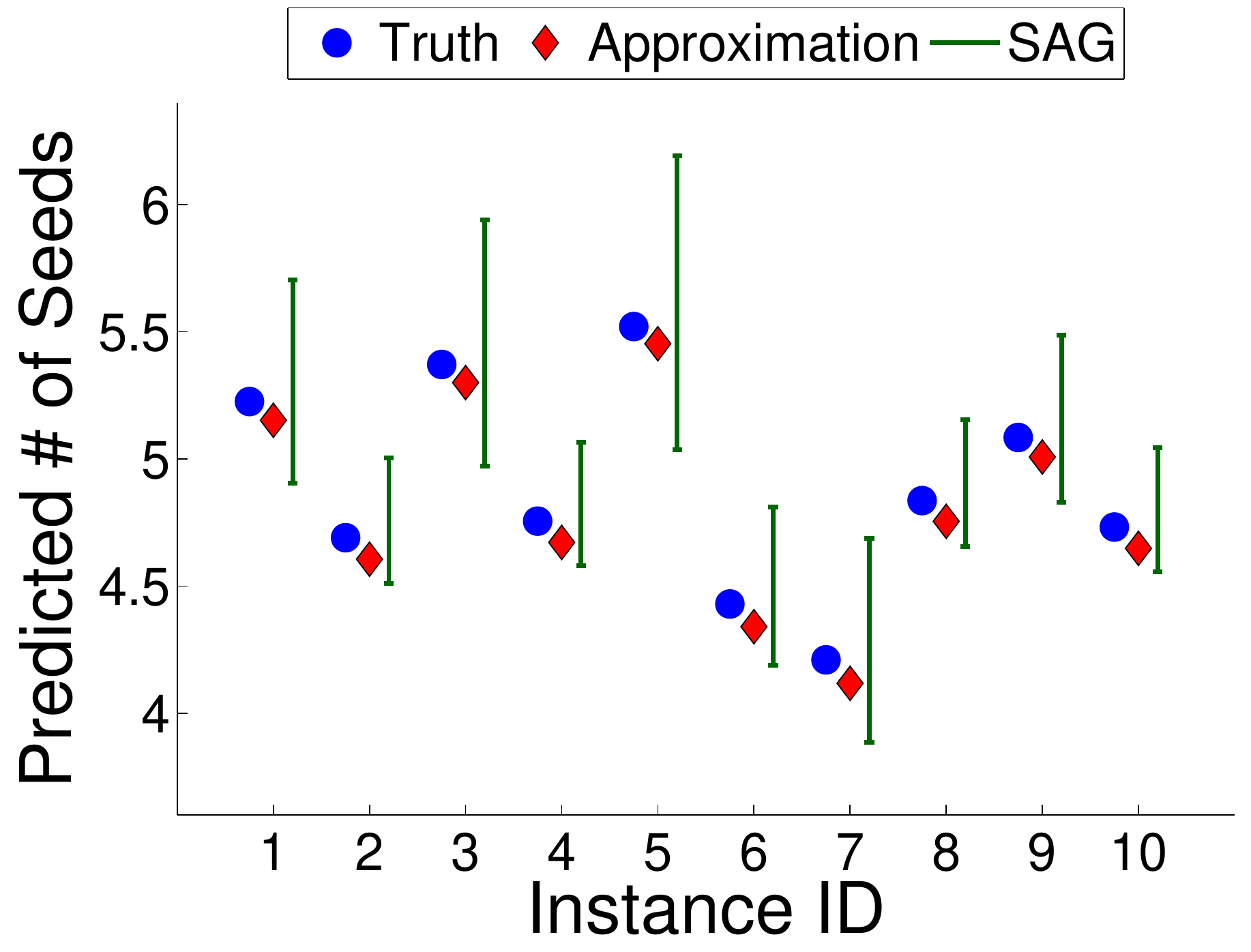}

(A) Poisson regression
\end{center}
\end{minipage}
&
\begin{minipage}[t]{0.45\hsize}
\begin{center}
\includegraphics[width = 0.8\textwidth]{./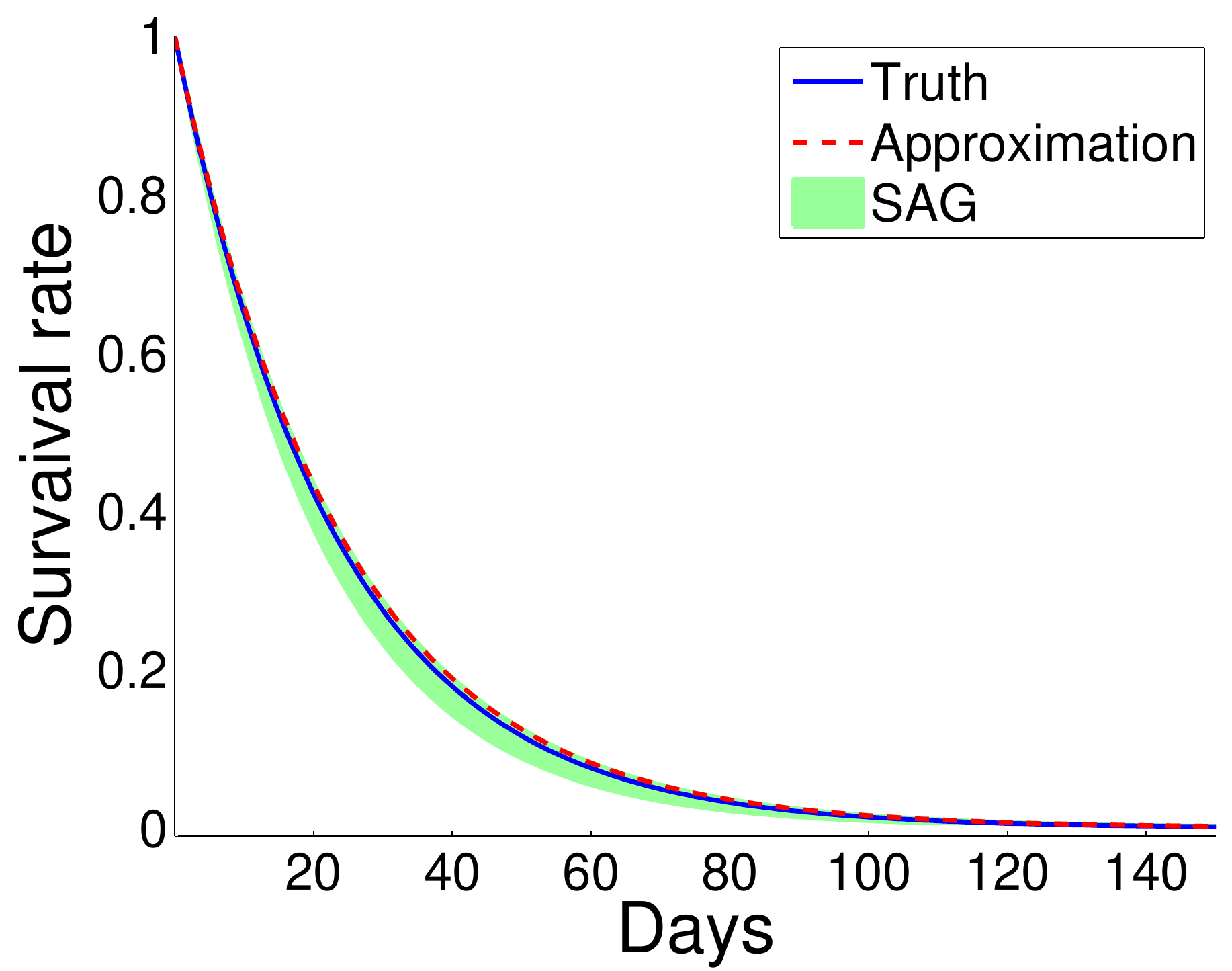}

(B) Exponential regression
\end{center}
\end{minipage}
\end{tabular}
\end{center}
\vspace{-1em}
\caption{Proposed bounds for Poisson and exponential regressions}
\label{fig:Poisson-Weibull}
\end{figure*}

We applied the SAG method to
Poisson regression \eqref{eq:Poisson-loss}
and
exponential regression \eqref{eq:Exponential-loss}. 
Poisson regression was applied to a problem 
for predicting the number of produced seeds
\footnote{http://hosho.ees.hokudai.ac.jp/\~{}kubo/stat/2015/Fig/\allowbreak{}poisson/data3a.csv}.
Exponential regression was applied to a problem 
for predicting survival time of lung cancer patients
\footnote{http://help.xlstat.com/customer/portal/kb\_article\_attachments/\allowbreak{}60040/original.xls}.
The results are shown
in \figurename~\ref{fig:Poisson-Weibull}.
The left plot (A) shows the result of Poisson regression,
where the SAG intervals on the predicted number of seeds are plotted for several randomly chosen instances. 
The right plot (B) shows the SAG bounds on the predicted survival probability curve,
in which we can confirm that the true survival probability curve is included in the SAG bound. 

\subsection{Privacy-preserving logistic regression to genomic and clinical data analysis}

\begin{figure}[tp]
\begin{center}
\begin{tabular}{cc}
\begin{minipage}[t]{0.45\hsize}
\begin{center}
\includegraphics[width = 0.8\textwidth]{./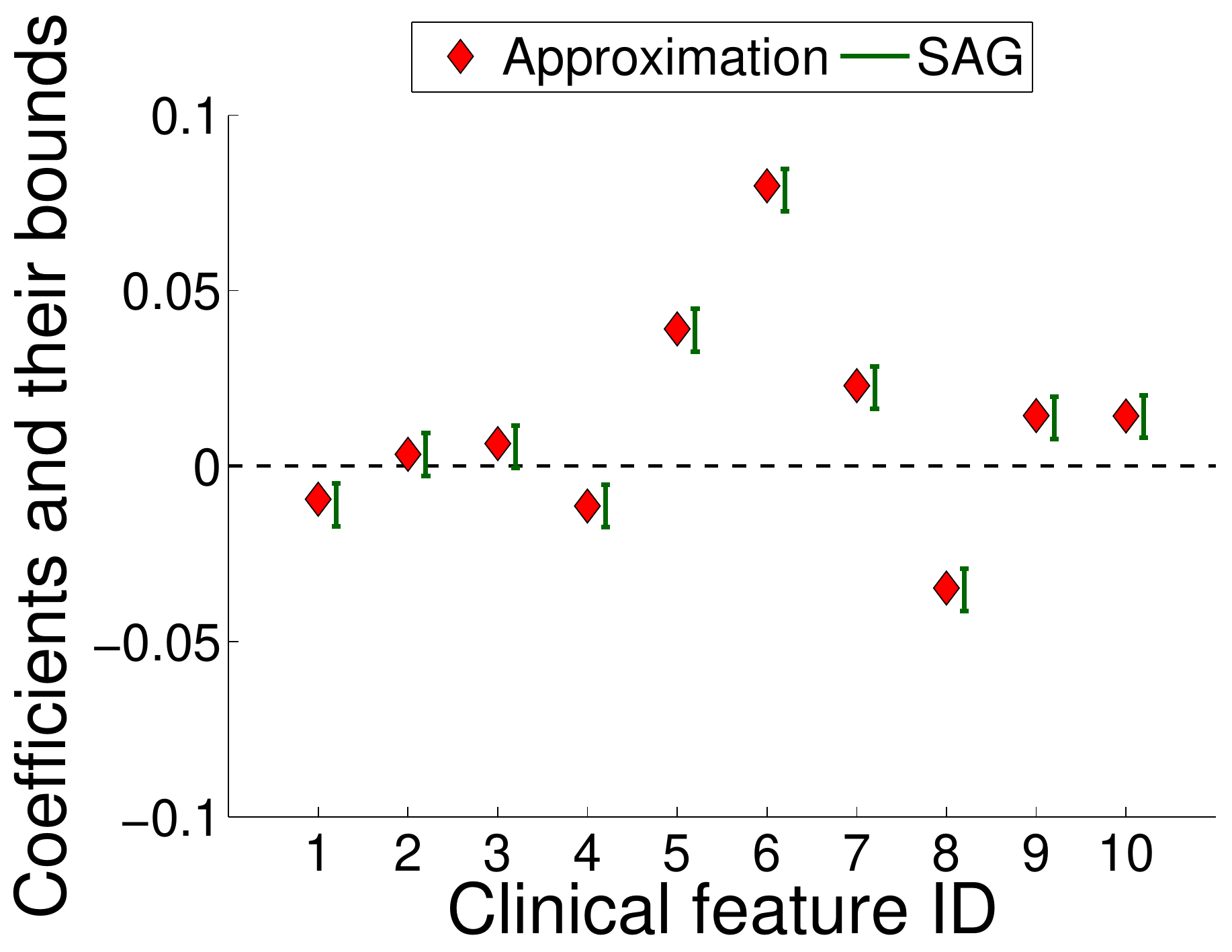}
\end{center}
\end{minipage}
&
\begin{minipage}[t]{0.45\hsize}
\begin{center}
\includegraphics[width = 0.8\textwidth]{./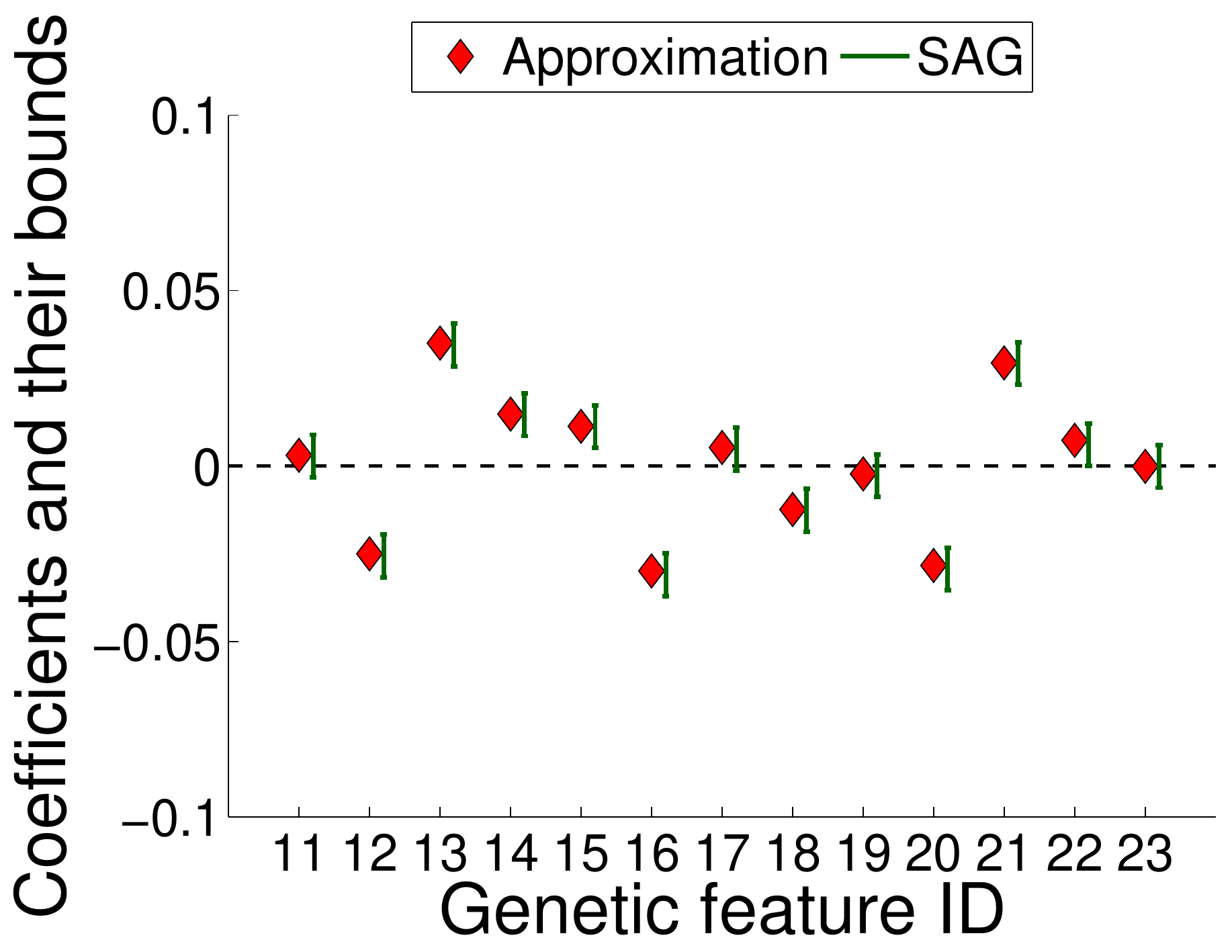}
\end{center}
\end{minipage}
\end{tabular}
\end{center}
\vspace{-1.5em}
\caption{Bounds of coefficients for disease risk evaluation}
\label{fig:disease-risk}
\end{figure}

Finally,
we apply
the SAG method
to a logistic regression 
on
a genomic and clinical data analysis,
which is the main motivation of this work ({\S}1).
In this problem,
we are interested in modeling the risk of a disease
based on genomic and clinical information of potential patients.
The difficulty of this problem is that 
genomic information were collected in a research institute,
while
clinical information were collected in a hospital,
and both institutes do not want to share their data to others.
However,
since the risk of the disease is dependent both on genomic and clinical features, 
it is quite valuable to use both types of information
for the risk modeling. 
Our goal is to find genomic and clinical features that highly affect the risk of the disease. 
To this end,
we use the SAG method for computing the bounds of coefficients of the logistic regression model
as described in \S\ref{sec:secure-approximation-guarantee}. 

In this experiment,
13 genomic (SNP) and 10 clinical features of 134 potential patients
are provided from a research institute and a hospital,
respectively
\footnote{
Due to confidentiality reasons, 
we cannot describe the details of the dataset.
Here,
we only analyzed a randomly sampled small portion of the datasets 
just for illustration purpose. 
}.
The SAG bounds on each of these 23 coefficients are plotted in
\figurename~\ref{fig:disease-risk}. 
Although we do not know the true coefficient values,
we can at least identify
features that positively/negatively correlated with the disease risk 
(note that, if the lower/upper bound is greater/smaller than 0, the feature is guaranteed to have positive/negative coefficient in the logistic regression model).

\section{Conclusions}
\label{sec:conclusions}
We studied empirical risk minimization (ERM) problems
under secure multi-party computation (MPC) frameworks.
We developed a novel technique called
secure approximation guarantee (SAG) method 
that can be used when only an approximate solution is available
due to the difficulty of secure non-linear function evaluations. 
The key property of the SAG method is that
it can securely provide the bounds on the true solution, 
which is practically valuable 
as we illustrated
in benchmark data experiments and 
in our motivating problem on genomic and clinical data.

\clearpage
\bibliography{paper}
\bibliographystyle{unsrt}
\clearpage
\newpage
\section*{Appendix}
\subsection*{Proofs of Theorem \ref{th:bounds} and Corollary \ref{coro:output-bound} (bounds of $\bm{w}^*$ from $\vhat{w}$)}
First
we present the following proposition
which will be used for proving 
Theorem~\ref{th:bounds}.
\begin{prop}
 \label{prop:VI}
 Consider the following general problem:
 \begin{align}
  \label{eq:general.convex.constrained.problem}
  \min_z ~ g(z) ~~~ {\rm s.t.} ~ z \in \cZ, 
 \end{align}
 where
 $g: \cZ \to \RR$ 
 is a subdifferentiable convex function and $\cZ$ is a convex set. 
 Then a solution $z^*$ is the optimal solution of
 \eq{eq:general.convex.constrained.problem}
 if and only if
 \begin{align*}
  \nabla g(z^*)^\top (z^* - z) \le 0 ~~~ \forall ~ z \in \cZ,
 \end{align*}
 where $\nabla g(z^*)$ is the subgradient vector of $g$ at $z = z^*$.
\end{prop}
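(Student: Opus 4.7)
The plan is to split the equivalence into its two directions, handling the easy direction first and saving the convex-analysis work for the second.

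For the sufficiency direction ($\Leftarrow$), I would invoke the subgradient inequality for the convex function $g$: for any $z \in \cZ$,
\begin{align*}
 g(z) \ge g(z^*) + \nabla g(z^*)^\top (z - z^*) = g(z^*) - \nabla g(z^*)^\top (z^* - z).
\end{align*}
The hypothesized variational inequality $\nabla g(z^*)^\top (z^* - z) \le 0$ then makes the right-hand side at least $g(z^*)$, so $z^*$ minimizes $g$ over $\cZ$. This direction is essentially one line and should present no obstacle.

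For the necessity direction ($\Rightarrow$), I would exploit the convexity of $\cZ$ to construct feasible perturbations of $z^*$. For any $z \in \cZ$ and $t \in (0,1]$, the point $z_t := z^* + t(z - z^*) = (1-t)z^* + tz$ lies in $\cZ$, so optimality of $z^*$ gives $g(z_t) \ge g(z^*)$. Rearranging and passing to the limit $t \downarrow 0$ yields a nonnegative one-sided directional derivative of $g$ at $z^*$ in the direction $z - z^*$. Using the standard identification of the directional derivative of a convex function with the support function of its subdifferential, one obtains a subgradient $\nabla g(z^*) \in \partial g(z^*)$ such that $\nabla g(z^*)^\top (z - z^*) \ge 0$, which is exactly $\nabla g(z^*)^\top (z^* - z) \le 0$.

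The subtle step, and the one I would take most care with, is the last one: the directional-derivative argument produces, for each individual direction $z - z^*$, \emph{some} subgradient whose inner product is nonnegative, whereas the proposition asserts the existence of a \emph{single} subgradient that works simultaneously for every $z \in \cZ$. This is really the statement that $-\nabla g(z^*) \in N_{\cZ}(z^*)$, and justifying it rigorously requires either a separating-hyperplane argument between $\partial g(z^*)$ and $-N_{\cZ}(z^*)$ or an appeal to the convex-subdifferential sum rule $0 \in \partial(g + \iota_{\cZ})(z^*)$ where $\iota_{\cZ}$ is the indicator of $\cZ$. I would cite this standard fact from convex analysis rather than reprove it, since the paper uses Proposition~\ref{prop:VI} only as a routine technical tool en route to Theorem~\ref{th:bounds}.
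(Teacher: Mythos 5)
Your argument is correct, but it is worth noting that the paper does not prove Proposition~\ref{prop:VI} at all: it simply points the reader to Proposition B.24 of Bertsekas's \emph{Nonlinear Programming} \cite{bertsekas1999nonlinear}, which is exactly the variational-inequality characterization of optima of a convex function over a convex set. Your sketch reconstructs the standard argument behind that citation: the sufficiency direction via the subgradient inequality is complete as written, and your necessity direction correctly isolates the one genuinely nontrivial point, namely that the directional-derivative argument only produces a direction-dependent subgradient, so that obtaining a \emph{single} $\nabla g(z^*)$ with $-\nabla g(z^*) \in N_{\cZ}(z^*)$ requires the sum rule $0 \in \partial g(z^*) + N_{\cZ}(z^*)$ (or a separation argument), which in turn needs a mild qualification condition that is trivially satisfied here since the objective $J$ in \eq{ex:problem-slack} is finite and differentiable on all of $\RR^{d+1}$. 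What your route buys is a self-contained justification and an explicit flag of where the convex-analysis machinery is actually needed; what the paper's route buys is brevity, since the proposition is used only once, for a differentiable quadratic objective where $\nabla g(z^*)$ is the ordinary gradient and the ``if and only if'' with a fixed subgradient is unambiguous. Either disposition is acceptable; if you do spell out the necessity direction, state it as ``there exists a subgradient such that\ldots'' to avoid the slight imprecision in the proposition's phrasing for nondifferentiable $g$.
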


See,
for example,
Proposition B.24 in \cite{bertsekas1999nonlinear}
for the proof of
Proposition~\ref{prop:VI}.

\begin{proof}[Proof of Theorem~\ref{th:bounds}]
 Using a slack variable
 $\xi \in \RR$, 
 let us first rewrite the minimization problem \eqref{eq:optimization-problem} as 
 \begin{eqnarray}
  \min_{\bm{w}\in\mathbb{R}^d, \xi\in\mathbb{R}}
   J(\bm{w}, \xi)
   \defequal
   \xi
   +
   \frac{\lambda}{2}\|\bm{w}\|^2
   \text{\quad s.t.\quad}
   \xi\geq \frac{1}{n}\sum_{i\in[n]}\ell(y_i, \bm{x}_i^\top\bm{w}).
   \label{ex:problem-slack}
 \end{eqnarray}
 Note that the optimal solution of the problem 
 \eq{ex:problem-slack}
 is
 $\bm w = \bm w^*$
 and
 $\xi = \xi^* \defequal \frac{1}{n} \sum_{i \in [n]} \ell(y_i, \bm{x}_i^\top\bm{w}^*)$.
 Using the definitions of
 $\psi$
 and
 $\Psi$, 
 we have 
 $
 \frac{1}{n}\sum_{i \in [n]} \ell(y_i, \bm{x}_i^\top\hat{\bm{w}})
 \le 
 \frac{1}{n}\sum_{i \in [n]} \psi(y_i, \bm{x}_i^\top\hat{\bm{w}})
 =
 \Psi(\hat{\bm w})
 $.
 It means that 
 $(\hat{\bm w}, \Psi(\hat{\bm w}))$
 is a feasible solution of 
 the problem
 \eq{ex:problem-slack}. 
 Applying this fact into 
 Proposition~\ref{prop:VI},
 we have 
 \begin{align}
  \nabla J(\bm w^*, \xi^*)^\top
  \left(
  \mtx{c}{\bm w^* \\ \xi^*}
  -
  \mtx{c}{\hat{\bm w} \\ \Psi(\hat{\bm w})}
  \right)
  \le 0, 
  \label{eq:gradient-bound-a}
 \end{align}
 where
 $\nabla J(\bm w^*, \xi^*) \in \RR^{d + 1}$
 is the gradient of the objective function in 
 \eq{ex:problem-slack}
 evaluated at
 $(\bm w^*, \xi^*)$.
Since
$J(\bm w, \xi)$
is a quadratic function
of $\bm w$ and $\xi$,
we can write 
$\nabla J(\bm w^*, \xi^*)$
explicitly, 
and 
\eq{eq:gradient-bound-a}
is written as
\begin{align}
 &
 \lambda \|\bm w^2\| + \xi^* - \lambda \bm w^{*\top} \hat{\bm w} - \Psi(\hat{\bm w}) \le 0
 \nonumber
 \\
 \Leftrightarrow~
 &
 \lambda \|\bm w^{*2}\| + \frac{1}{n}\sum_{i \in [n]} \ell(y_i, \bm{x}_i^\top\bm{w}^*) 
 - \lambda \bm w^{*\top} \hat{\bm w} - \Psi(\hat{\bm w}) \le 0
 \label{eq:gradient-bound-b}
\end{align}
From the definition of
$\phi$
and
$\Phi$, we have
\begin{align*}
\frac{1}{n}\sum_{i \in [n]} \ell(y_i, \bm{x}_i^\top\bm{w}^*)
\ge
\frac{1}{n}\sum_{i \in [n]} \phi(y_i, \bm{x}_i^\top\bm{w}^*)
=
\Phi(\bm w^*).
\end{align*}
Plugging this into \eq{eq:gradient-bound-b}, we have 
\begin{align}
 \label{eq:gradient-bound-c}
 \lambda \|\bm w^{*2}\| + \Phi(\bm w^*) - \lambda \bm w^{*\top} \hat{\bm w} - \Psi(\hat{\bm w}) \le 0
\end{align}
Furthermore, 
noting that
$\phi$
and
$\Phi$
are convex with respect to $\bm w$, by the definition of convex functions we get
\begin{align}
 \label{eq:gradient-bound-d}
 \Phi(\bm w^*) \ge \Phi(\hat{\bm w}) + \nabla \Phi(\hat{\bm w})^\top (\bm w^* - \hat{\bm w}). 
\end{align}
By plugging
\eq{eq:gradient-bound-d}
into
\eq{eq:gradient-bound-c},
\begin{align}
 \lambda \|\bm w^{*2}\| + \Phi(\hat{\bm w}) + \nabla \Phi(\hat{\bm w})^\top (\bm w^* - \hat{\bm w}) - \lambda \bm w^{*\top} \hat{\bm w} - \Psi(\hat{\bm w}) \le 0
 \label{eq:gradient-bound-e}
\end{align}
Noting that
\eq{eq:gradient-bound-e}
is a quadratic function of
$\bm w^*$,
we obtain
\begin{align*}
\left\|\bm w^*
 -
\frac{1}{2}\left(\hat{\bm{w}} - \frac{1}{\lambda}\nabla\Phi(\hat{\bm{w}}) \right)
 \right\|^2
\le
 \left\|\frac{1}{2}\left(\hat{\bm{w}} + \frac{1}{\lambda}\nabla\Phi(\hat{\bm{w}}) \right)\right\|^2 + \frac{1}{\lambda}\left(\Psi(\hat{\bm{w}}) - \Phi(\hat{\bm{w}})\right).
\end{align*}
It means that
the optimal solution
$\bm w^*$
is within a ball
with the center
$\bm m(\hat{\bm w})$
and
the radius
$r(\hat{\bm w})$,
which completes the proof. 
\end{proof}
Next, we prove Corollary~\ref{coro:output-bound}.
\begin{proof}[Proof of Corollary~\ref{coro:output-bound}]
 We show that
 the lower bound of the linear model output value 
 $\bm w^{*\top} \bm x$
 is
 $\bm x^\top \bm m(\hat{\bm w}) - \|\bm x\|r(\hat{\bm w})$
 under the constraint that
 \begin{align*}
  \|\bm w^* - \bm m(\hat{\bm w})\| \le r(\hat{\bm w}).
 \end{align*}
 To formulate this,
 let us consider the following constrained optimization problem
 \begin{align}
  \label{eq:coro-proof}
  \min_{\bm w \in \RR^d}
  ~
  \bm w^\top \bm x
  ~~~\text{s.t.}~~~
  \|\bm w - \bm m(\hat{\bm w})\|^2 \le r(\hat{\bm w})^2.
 \end{align}
 Using a Lagrange multiplier
 $\mu > 0$, 
 the problem \eq{eq:coro-proof}
 is rewritten as
  \begin{align*}
  &
  \min_{\bm w \in \RR^d}
  ~
  \bm w^\top \bm x
  ~~~\text{s.t.}~~~
  \|\bm w - \bm m(\hat{\bm w})\|^2 \le r(\hat{\bm w})^2,
  \\
  =
  &
  \min_{\bm w \in \RR^d}
  \max_{\mu > 0}
  \big(
  \bm w^\top \bm x + \mu (\|\bm w - \bm m(\hat{\bm w})\|^2 - r(\hat{\bm w})^2)
  \big)
  \\
  =
  &
  \max_{\mu > 0}
  \big(
  - \mu r(\hat{\bm w})^2
  +
  \min_{\bm w}
  \big(
  \mu \|\bm w - \bm m(\hat{\bm w})\|^2 + \bm w^\top \bm x
  \big)
  \big)
  \\
  =
  &
  \max_{\mu > 0}
  ~
  H(\mu) := 
  \big(
  - \mu r(\hat{\bm w})^2
  - \frac{\|\bm x\|^2}{4 \mu}
  +
  \bm x^\top \bm m(\hat{\bm w})
  \big),
 \end{align*}
 where
 $\mu$
 is strictly positive 
 because the constraint
 $\|\bm w - \bm m(\hat{\bm w})\|^2 \le r(\hat{\bm w})^2$
 is strictly active
 at the optimal solution.
 By letting
 $\partial H(\mu)/\partial \mu = 0$,
 the optimal
 $\mu$
 is written as
 \begin{align*}
  \mu^* := \frac{\|\bm x\|}{2 r(\hat{\bm w})}
  =
  \arg \max_{\mu > 0} ~ H(\mu).
 \end{align*}
 Substituting
 $\mu^*$
 into
 $H(\mu)$,
 \begin{align*}
  \bm x^\top \bm m(\hat{\bm w}) - \|\bm x\| r(\hat{\bm w}) = \max_{\mu > 0}~H(\mu).
 \end{align*}
 The upper bound part can be shown similarly. 
\end{proof}
\subsection*{Proof of Theorem \ref{th:plf-secure-computability} (Protocol evaluating piecewise linear function and its subderivative securely)}

First we explain the outline of the protocol of secure comparison by Veugen {\it et al.} \cite{veugen2011comparing}. The protocol returns the result of comparison $E\pkB(I_{q > 0})$ (given to party A) for the encrypted values $E\pkB(q)$ (owned by party A) with the following two steps:
\begin{itemize}
\item Party A and B obtain $q_A := R$ and $q_B := q + R$, respectively, where $R$ is a random value, and
\item Party A and B compare $q_A$ and $q_B$ with the implementation of bit-wise comparison with Paillier cryptosystem (see the original paper).
\end{itemize}
Let us denote the protocol of the latter by $SC(q_A, q_B) \rightarrow (E\pkB(I_{q_A > q_B}), E\pkA(I_{q_A > q_B}))$, that is, $SC$ is a protocol comparing two private, unencrypted values owned by two parties $q_A$, $q_B$.

The protocol for Theorem \ref{th:plf-secure-computability} is as follows:
\begin{enumerate}
\item \label{th:plf-secure-computability:unify}
	Party A computes $E\pkB(s) = E\pkB(s_A + s_B)$ from $E\pkB(s_A)$ and $E\pkA(s_B)$ as follows:
	\begin{itemize}
	\item Party B generates a random value $R\in \ZZ_{N/2}$ ($N$ is defined in {\S}\ref{sect:secure-multi-party}), then sends $E\pkA(s_B-R) = E\pkA(s_B)^{-R}$ and $E\pkB(R)$ to party A.
	\item Party A decrypts $E\pkA(s_B - R)$ and computes $E\pkB(s_A + s_B)$ as: $E\pkB(s_A + s_B) = E\pkB(s_A + s_B - R + R) = E\pkB(s_A + s_B - R)E\pkB(R) = E\pkB(s_A)^{s_B - R}E\pkB(R)$.
	\end{itemize}
	See \cite{veugen2014encrypted} for the security of the part.
\item \label{th:plf-secure-computability:randomize}
	With the similar protocol to Veugen {\it et al.}'s, party A and B obtains $p_A$ and $p_B$, respectively, where $p_A$ and $p_B$ are randomized and satisfy $p_A + p_B = s$.
\item Compute $t_j = I_{p_A + p_B > T_j}$ securely with $SC$: 
	\begin{align}
	SC(p_A, T_j-p_B) \rightarrow (E\pkB(t_j), E\pkA(t_j)), \label{eq:secure-comparison}
	\end{align}
\item Party A computes $E\pkB(o_j)$ from $E\pkB(t_j)$:
	\begin{align*}
	E\pkB(o_j) = E\pkB(t_{j-1} - t_j) = E\pkB(t_{j-1})\cdot E\pkB(t_j)^{-1}
	\end{align*}
	Party B similarly computes for $E\pkA$. The idea is shown in Figure \ref{fig:pwlp_sample}.
\item Party A computes $g_{Aj} := \alpha_j p_A + \beta_j$, and party B $g_{Bj} := \alpha_j p_B$ for all $j\in[K]$. Note that $g_{Aj} + g_{Bj} = \alpha_j s + \beta_j$.
\item Compute encrypted $g_A$ and $g_B$. Because $g_A + g_B = g(s)$ after taking $g_A = \sum_{j\in[K]} o_j g_{Aj}$ and $g_B = \sum_{j\in[K]} o_j g_{Bj}$ (see \eqref{eq:location-of-PLF-sum} in {\S}4), party A computes $E\pkB(g_A)$ as
	\begin{align*}
	E\pkB(g_A) = E\pkB\left(\sum_{j\in[K]}o_j g_{Aj}\right)
	= \prod_{j\in[K]}E\pkB(o_j)^{g_{Aj}}.
	\end{align*}
	Party B similarly computes $E\pkA(g_B)$.
\end{enumerate}

\begin{figure}
\centering
\includegraphics[width=.4\textwidth]{./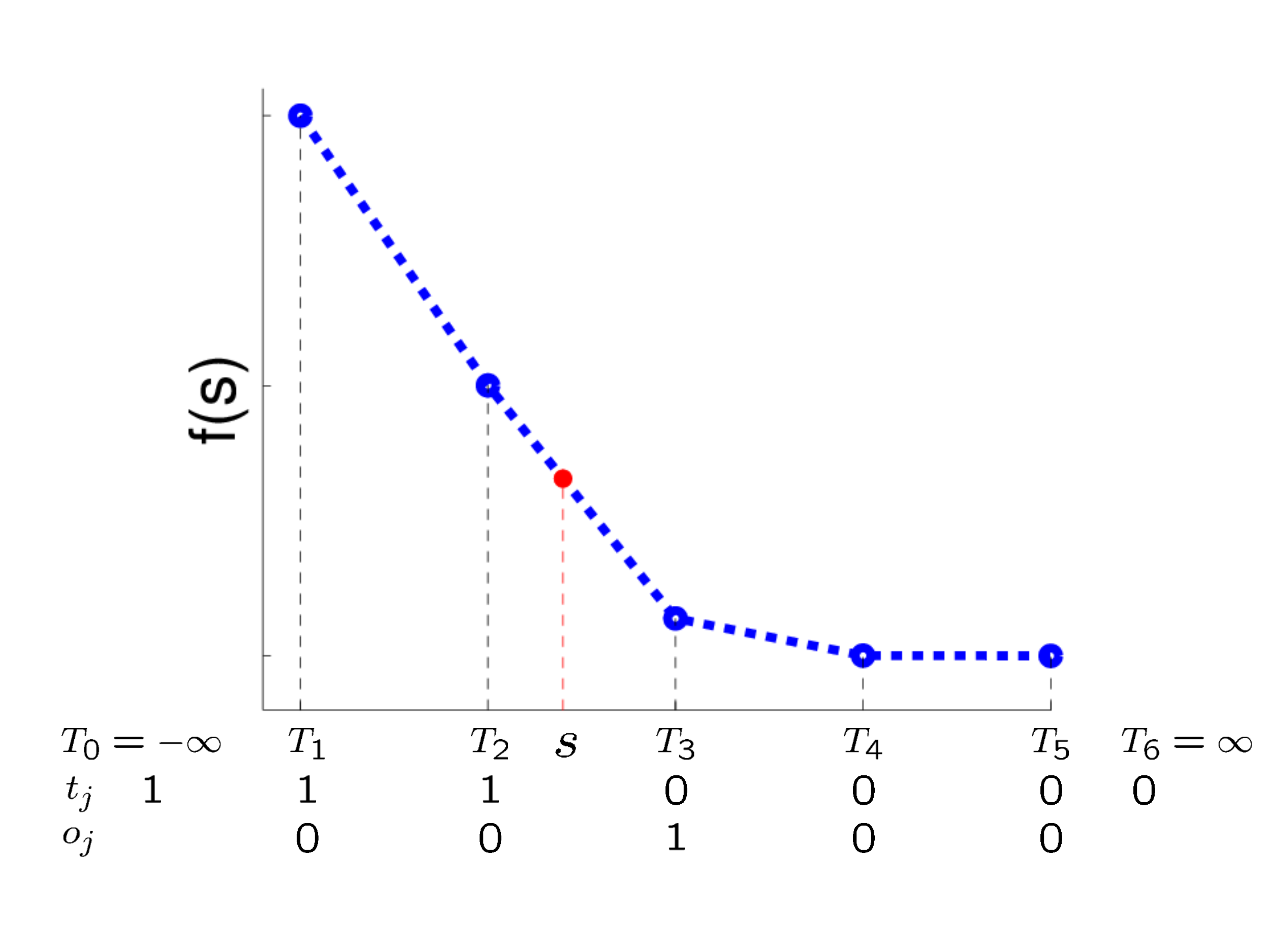}
\vspace{-2em}
\caption{Computing $o_j$ from $t_j$ in the protocol $SPLC$}
\label{fig:pwlp_sample}
\end{figure}

To obtain the subderivative $g'(s) = \sum_{j\in[K]}o_j\alpha_j$, during the protocol for Theorem \ref{th:plf-secure-computability}, party A computes $E\pkB(g'(s))$ as
	\begin{align*}
	E\pkB(g'(s)) = E\pkB\left(\sum_{j\in[K]}o_j\alpha_j\right)
	= \prod_{j\in[K]}E\pkB(o_j)^{\alpha_j}.
	\end{align*}
Party B similarly computes $E\pkA(g'(s))$.

\subsection*{Proof of Theorem \ref{th:bound_evaluation} (Protocol evaluating the upper and the lower bounds)}

Protocol \ref{pro:bound-evaluation} securely evaluates the upper bound $UB$ and the lower bound $LB$, where $\overline{sqrt}$ is an upper bound of the square root function implemented as a piecewise linear function. Note that taking $r$ larger does not lose the validity of the bounds (looser bounds are obtained) as
\begin{align*}
\tilde{\bm{x}}^\top\bm{m}
-
\overline{\|\tilde{\bm{x}}\|r}
\leq
\tilde{\bm{x}}^\top\bm{m}
-
\|\tilde{\bm{x}}\|r,
\\
\tilde{\bm{x}}^\top\bm{m}
+
\overline{\|\tilde{\bm{x}}\|r}
\geq
\tilde{\bm{x}}^\top\bm{m}
+
\|\tilde{\bm{x}}\|r.
\end{align*}

The security is proved as follows: all techniques used in the protocol are secure with the same discussions as previous. The remaining problem is that whether party A can guess $\tilde{\bm{x}}_B$ from $UB$ and $LB$. Party A can know $UB + LB = \bm{\tilde{x}}^\top\bm{m} = \bm{\tilde{x}}_A^\top\bm{m}_A + \bm{\tilde{x}}_B^\top\bm{m}_B$ and $UB - LB = \overline{\|\tilde{\bm{x}}\|r} = \overline{sqrt}((\|\tilde{\bm{x}}_A\|^2 + \|\tilde{\bm{x}}_B\|^2)(r_A^2 + r_B^2))$. However, because party A does not know $\bm{m}_A$, $\bm{m}_B$, $r_A^2$ or $r_B^2$, party A cannot guess $\tilde{\bm{x}}_B$ either\footnote{If this protocol is conducted for many enough $\tilde{\bm{x}}$, because party A knows $\tilde{\bm{x}}_A$, party A can also know $\bm{m}_A$ by solving a system of linear equations, and thus know $\bm{\tilde{x}}_B^\top\bm{m}_B$. This, however, does not lead party A to guess separate $\bm{\tilde{x}}_B$ or $\bm{m}_B$ because they are both private for party B.}.

\floatname{algorithm}{Protocol}
\begin{algorithm*}[tp]
\caption{Protocol for bound computation}
\begin{description}
\item[public] $\overline{sqrt}$
\item[Input of A] $E_{pk_B}(\bm{m}_A),E_{pk_B}(r_A^2),\tilde{\bm{x}}_A$
\item[Input of B] $E_{pk_B}(\bm{m}_B),E_{pk_B}(r_B^2),\tilde{\bm{x}}_B$
\item[Output of A] $UB,LB$
\item[Output of B] $\emptyset$
\begin{enumerate}
\item[Step 1.] Party A and B computes:

		party A:$
		E_{pk_B}(\tilde{\bm{x}}_A^\top\bm{m}_A),
		E_{pk_A}(\|\tilde{\bm{x}}_A\|^2)
		$
		
		party B:$
		E_{pk_A}(\tilde{\bm{x}}_A^\top\bm{m}_B),
		\|\tilde{\bm{x}}_B\|^2
		$
\item[step 2.] Party A sends $
		E_{pk_B}(\tilde{\bm{x}}_A^\top\bm{m}_A),
		E_{pk_A}(\|\tilde{\bm{x}}_A\|^2),
		E_{pk_B}(r_A^2)
		$ to B.
		\\
		Party B obtains $
		E_{pk_A}(\tilde{\bm{x}}^\top\bm{m}),
		E_{pk_A}(\|\tilde{\bm{x}}\|^2),
		E_{pk_A}(r^2)
		$.
		\\ // The similar manner to the protocol for Theorem \ref{th:plf-secure-computability}, step \ref{th:plf-secure-computability:unify}
\item[step 2.] Compute $E_{pk_B}(\|\tilde{\bm{x}}\|^2 r^2)$ using the protocol for multiplication in \cite{nissim2006communication}.
\item[step 3.] Compute $\overline{\|\tilde{\bm{x}}\|r}$ with $SPL$: \\
		$SPL(E_{pk_B}(0),E_{pk_B}(\|\tilde{\bm{x}}\|^2r^2))\to(q_A,q_B)$
		//$q_A+q_B = \overline{\|\tilde{\bm{x}}\|r}$
\item[step 4.] Party A sends $E_{pk_B}(q_B)$ to B.
		\\
		Party B obtains $q_B$ and thus $E_{pk_A}(\overline{\|\tilde{\bm{x}}\|r}) = E_{pk_A}(q_A)^{q_B}$.
		\\
		Party B computes the followings and sends to A.\\
		$
		E_{pk_A}(UB)
		\leftarrow
		E_{pk_A}(\tilde{\bm{x}}^\top\bm{m}
		+
		\overline{\|\tilde{\bm{x}}\|r})
		$
		\\
		$
		E_{pk_A}(LB)
		\leftarrow
		E_{pk_A}(\tilde{\bm{x}}^\top\bm{m}
		-
		\overline{\|\tilde{\bm{x}}\|r})
		$		
\item[step 5.] Party A obtains $UB,LB$ by decrypting them.
\end{enumerate}

\end{description}
\label{pro:bound-evaluation}
\end{algorithm*}

\subsection*{Example Protocol for the Logistic Regression}

We show the detailed implementation of secure ball computation ($SBC$, Theorem \ref{th:SAG}) in {\S}4 for the logistic regression, including how to use the secure computation of piecewise linear functions ($SPL$, Theorem \ref{th:plf-secure-computability}).

For the logistic regression ({\S}\ref{sect:problem-statement}), $\cY = \{-1, +1\}$, and we take $u(s) = \log(1 + \exp(-s))$, $s = \bm{x}^\top\bm{w}$ and $v(y, \bm{x}^\top\bm{w}) = -y\bm{x}^\top\bm{w}$ in Theorem \ref{th:SAG}.

To apply this for $SPL$, we set $E\pkB(s_A) := E\pkB(\bm{x}_A^\top\vhat{w}_A)$ and $E\pkA(s_B) := E\pkA(\bm{x}_B^\top\vhat{w}_B)$ since we assume party A and B knows $E\pkB(\vhat{w}_A)$ and $E\pkA(\vhat{w}_B)$, respectively. Note that $s_A + s_B = s$ because $\bm{x} = [\bm{x}_A^\top ~ \bm{x}_B^\top]^\top$ and $\bm{w} = [\bm{w}_A^\top ~ \bm{w}_B^\top]^\top$. Take piecewise linear functions $\underline{u}(s)$ and $\overline{u}(s)$ as lower and upper bounds of $u(s)$, respectively. With it, we can compute $\phi$, $\psi$ and $\nabla\phi$ in $SAG$ as follows:
\begin{align*}
\psi|_{\bm{w}=\vhat{w}} - &\phi|_{\bm{w}=\vhat{w}} = \overline{u}(\bm{x}^\top\vhat{w}) - \underline{u}(\bm{x}^\top\vhat{w}), \\
\nabla\phi|_{\bm{w}=\vhat{w}} &= \underline{u}'(\bm{x}^\top\vhat{w})\left.\frac{\partial s}{\partial\bm{w}}\right|_{\bm{w}=\vhat{w}} + \left.\frac{\partial v}{\partial\bm{w}}\right|_{\bm{w}=\vhat{w}} \\
	&= \underline{u}'(\bm{x}^\top\vhat{w})\left.\frac{\partial}{\partial\bm{w}}\bm{x}^\top\bm{w}\right|_{\bm{w}=\vhat{w}} + \left.\frac{\partial}{\partial\bm{w}}y\bm{x}^\top\bm{w}\right|_{\bm{w}=\vhat{w}} \\
	&= (\underline{u}'(\bm{x}^\top\vhat{w}) + y)\bm{x},
\end{align*}
which are all computable with $SPL$.

After these preparations, we can conduct the protocol $SBC$ as {\bf Protocol} \ref{pro:SAG}.

\begin{spacing}{1}
\floatname{algorithm}{Protocol}
\begin{algorithm*}[tp]
\caption{Secure Ball Computation protocol (SBC)}
\begin{description}
\item[Public] $\phi := \underline{u}(s) - y\bm{x}^\top\bm{w}$, $\psi := \overline{u}(s) - y\bm{x}^\top\bm{w}$
\item[Input from A] $\{\bm{x}_{iA}\}_{i\in[n]},E\pkB(\vhat{w}_A)$
\item[Input from B] $\{\bm{x}_{iB},y_i\}_{i\in[n]},E\pkA(\vhat{w}_B)$
\item[Output to A] $E\pkB(\bm{m}_A),E\pkB(r_A^2)$
\item[Output to B] $E\pkA(\bm{m}_B),E\pkA(r_B^2)$ (where $r_A^2+r_B^2 = r^2$)
\end{description}
\begin{enumerate}
\item[Step1] Party B sends $E\pkB(\bm{y})$ to party A.
\item[Step2] Party A and B compute encrypted $\Phi$, $\Psi$ and $\nabla\Phi$ at $\bm{w} = \vhat{w}$.
\item[] Party A does:
	\begin{enumerate}
	\item[]
		for $i = 1$ to $n$:
		\\
		\quad $SPLC(E\pkB(\bm{x}_{iA}^\top\vhat{w}_A), E\pkA(\bm{x}_{iB}^\top\vhat{w}_B))\rightarrow 
		\left(E\pkB(\underline{u}^*_{iA}), E\pkA(\underline{u}^*_{iB})\right)$,
		\\
		\quad $SPLC(E\pkB(\bm{x}_{iA}^\top\vhat{w}_A), E\pkA(\bm{x}_{iB}^\top\vhat{w}_B))\rightarrow
		\left(E\pkB(\overline{u}^*_{iA}), E\pkA(\overline{u}^*_{iB})\right)$
	\item[]
		// Note: $E(a)^{1/n}$ is in reality computed as $E(a)^{M/n}$, \\
		// \quad where $M$ is the magnification constant. \\
		// Note: $E(a)^{\bm{\eta}}$ ($\bm{\eta}$: a vector) means $[E(a)^{\eta_1} ~ E(a)^{\eta_2} ~ \cdots]^\top$.
	\item[]
		$E\pkB(\Psi_A - \Phi_A)$\\
		$
		\gets E\pkB\left(
			\frac{1}{n}\sum_{i\in [n]}[\overline{u}^*_{iA} - y_i\bm{x}_{iA}^\top\vhat{w}_A]
			- \frac{1}{n}\sum_{i\in [n]}[\underline{u}^*_{iA} - y_i\bm{x}_{iA}^\top\vhat{w}_A]
			\right)
		$\\
		$= E\pkB\left(\frac{1}{n}\sum_{i\in [n]}[\overline{u}^*_{iA} - \underline{u}^*_{iA}]\right)$\\
		$
		= \left[
			\prod_{i\in [n]}E\pkB\left(\overline{u}^*_{iA}\right)
			\right]^{1/n}
			\left[
			\prod_{i\in [n]}E\pkB\left(\underline{u}^*_{iA}\right)
			\right]^{-1/n}
		$ \\
		// $\Phi_A+\Phi_B = \Psi$, $\Psi_A+\Psi_B = \Psi$
	\item[]
		$E\pkB(\nabla\Phi_A)
		\leftarrow E\pkB\left(\frac{1}{n}\sum_{i\in [n]}[\underline{u}'(\bm{x}_i^\top\bm{w}) - y_i]\bm{x}_{iA}\right)$
		\\
		\quad$= \left[\prod_{i\in [n]}E\pkB(\underline{u}'(\bm{x}_i^\top\bm{w}))E\pkB(y_i)^{-1}\right]^{(1/n)\bm{x}_{iA}}$\\
		// $[\nabla\Phi_A^\top,\nabla\Phi_B^\top]^\top =\nabla\Phi$
	\item[]
		// Note: $\underline{\alpha}_j$ and $\underline{o}_j$ means $\alpha_j$ and $o_j$ for $\underline{u}$ \\
		// \quad (see the subderivative computation in Theorem \ref{th:plf-secure-computability}).
	\end{enumerate}
\item[] Party B does the similar.
\item[Step3] Party A and B compute encrypted $\bm{m}$ and $r$.
\item[] Party A does:
	\begin{enumerate}
	\item[] $E\pkB(\bm{m}_A) \leftarrow E\pkB(\vhat{w}_A-\frac{1}{\lambda}\nabla\Phi_A)^{1/2}$
	\item[] Compute $E\pkB(\|\frac{1}{2}(\vhat{w}_A+\frac{1}{\lambda}\nabla\Phi_A)\|^2)$ from $E\pkB(\frac{1}{2}(\vhat{w}_A+\frac{1}{\lambda}\nabla\Phi_A)) = [E\pkB(\vhat{w}_A)E\pkB(\nabla\Phi_A)^{1/\lambda}]^{1/2}$ using the multiplication protocol in \cite{nissim2006communication}.
	\item[] $E\pkB(r^2_A) \leftarrow E\pkB(\|\frac{1}{2}(\vhat{w}_A+\frac{1}{\lambda}\nabla\Phi_A)\|^2) \cdot (E\pkB(\Psi_A)\cdot E\pkB(\Phi_A)^{-1})^{1/\lambda}$
	\end{enumerate}
\item[] Party B does the similar.
\end{enumerate}
\label{pro:SAG}
\end{algorithm*}
\end{spacing}

{\rema
In the description of the protocol, we omitted the maginification constant $M$ ({\S}\ref{sect:secure-multi-party}) for simplicity. We have to notice that, summing two values magnified by $M^a$ and $M^b$, we get a value magnified by $M^{\max\{a, b\}}$. Similarly, multiplying two values magnified by $M^a$ and $M^b$, we get a value magnified by $M^{a+b}$. In the protocol, when the original data is magnified by $M$, then the final result is magnified by $M^{12}$. So we have to adjust $M$ so that $M^{12}$ times the final result does not exceed the domain of Paillier cryptosystem $\ZZ_N$.
}


\end{document}